\DeclareMathOperator{\tr}{tr}
\newcommand{\ts}{\textsuperscript}
\begin{document}
\IEEEoverridecommandlockouts

% $\epsilon$
\title{Decentralized and Privacy-Preserving Learning of Approximate Stackelberg Solutions in Energy Trading Games with Demand Response Aggregators}
%\title{Scalable Online Learning of Approximate Stackelberg Solutions\\ in Energy Trading Games with Demand Response Aggregators}
\author{Styliani~I.~Kampezidou$^1$,
        ~Justin~Romberg$^1$,
        ~Kyriakos~G.~Vamvoudakis$^2$,% <-this % stops a space
        ~and~Dimitri~N.~Mavris$^2$
% <-this % stops a space
\thanks{$^1$S.I.~Kampezidou and J.~Romberg are with the School of Electrical and Computer Engineering, Georgia Institute of Technology, Atlanta, GA, $30332$, USA e-mail: (skampezidou@gatech.edu, jrom@ece.gatech.edu).}
\thanks{$^2$K.~G.~Vamvoudakis and D.~N.~Mavris are with the Daniel Guggenheim School of Aerospace Engineering, Georgia Institute of Technology, Atlanta, GA, $30332$, USA e-mail: (kyriakos@gatech.edu, dimitri.mavris@aerospace.gatech.edu)}
\thanks{This work was supported in part, by the Department of Energy under grant No. DE-EE$0008453$, by ONR Minerva under grant No. N$00014$-$18$-$1$-$2160$, by NSF under grant Nos. CAREER CPS-$1851588$, CPS-$2227185$, and by S\&AS $1849198$.}% <-this % stops a space
}

%--------------------remove in conf
% \markboth{JJJJournal of \LaTeX\ Class Files,~Vol.~14, No.~8, August~2015}%
% {Shell \MakeLowercase{\textit{et al.}}: Bare Demo of IEEEtran.cls for IEEE Journals}

% \markboth{\medium T\MakeLowercase{his work has been submitted to the} IEEE \MakeLowercase{for possible publication. Copyright may be transferred without notice, after which this version may no longer be accessible.}}%
% {Shell \MakeLowercase{\textit{et al.}}: Bare Demo of IEEEtran.cls for IEEE Journals}

\markboth{\tiny This work has been submitted to the IEEE for possible publication. Copyright may be transferred without notice, after which this version may no longer be accessible.}{}

%-------------------------

\maketitle

\begin{abstract}
%Carbon dioxide emissions and grid reliability can be significantly improved by large-scale integration of integrating distributed load and resources, in the pathway to decarbonization. 
In this work, a novel Stackelberg game theoretic framework is proposed for trading energy bidirectionally between the demand-response (DR) aggregator and the prosumers. This formulation allows for flexible energy arbitrage and additional monetary rewards while ensuring that the prosumers' desired daily energy demand is met. Then, a scalable (linear with the number of prosumers), decentralized, privacy-preserving algorithm is proposed to find approximate equilibria with online sampling and learning of the prosumers' cumulative best response, which finds applications beyond this energy game. Moreover, cost bounds are provided on the quality of the approximate equilibrium solution. Finally, real data from the California day-ahead market and the UC Davis campus building energy demands are utilized to demonstrate the efficacy of the proposed framework and algorithm.
\end{abstract}

\begin{IEEEkeywords}%keywords
Scalable online learning, decentralized algorithms, learning-assisted bilevel optimization, privacy preservation, Stackelberg games, approximate equilibria, bidirectional energy trading, demand-response aggregator.
\end{IEEEkeywords}

\section{Introduction}
\IEEEPARstart{T}{o} mitigate climate change concerns and keep the global temperature increase within the 1.5°C limit as compared to the pre-1900s baseline, renewable energy resources, and energy savings solutions have been deployed massively in the past decade. Several leading economies, including the United States, have designed electricity markets (soon to include about 76\% of global electricity generation \cite{net_zero_2050_global_energy_sector_policy_makers}) to offer recoupment mechanisms that will attract additional private investment in renewable generation, DR, and battery storage, and provide monetary rewards for selling clean energy or providing other grid ancillary services. Some of these wholesale electricity markets have recently allowed access to low-capacity participants, such as distributed energy resources and distributed load \cite{union2009directive} via an energy-type broker, the aggregator \cite{madrigal2014overview}. In fact, complex distributed load (mix of various generation, storage, and demand components \cite{he2013engage}) is expected to play a major role in the electricity generation decarbonization by 2030 and the net-zero economy goals by 2050 \cite{credible_pathways_EIA_2023}, together with battery storage, as they are capable of modifying demand based on price signals to improve the grid's operation during hours of peak demand, steep ramping, lack of voltage support, and other operational issues \cite{siano2014demand} rising in the pathway to the 2030 goal of 90\% electricity generation from renewables \cite{credible_pathways_EIA_2023}. Complex distributed load can additionally contribute via DR to energy price reduction, according to PJM \cite{PJM_market_rep_2019}, and CO$_2$ emissions reduction, since it was responsible for about 14\% of the total CO$_2$ emissions in California \cite{AB32_Act2006_GHG_emissions_data}, and 32\% in the entire United States \cite{EPA_Greenhouse_Emissions_1990_2020} in 2020. The framework proposed allows distributed market participation with improved rewards, scalability, and privacy, therefore being of great importance.

%An aggregator that contracts with more than 0.5 MW in total capacity \cite{CAISO_DR_partic_ovw}, is granted market access, where together with her customers, the prosumers (complex distributed load), act as a single entity when bidding. 

%The benefits from distributed load market participation are multiple: increased grid reliability \cite{siano2014demand}, discounted energy prices as well as reduced CO$_2$ emissions, according to the PJM market \cite{PJM_market_rep_2019}, since distributed load, i.e., residential and commercial buildings, was responsible for about 14\% of the total CO$_2$ emissions in California \cite{AB32_Act2006_GHG_emissions_data}, and 32\% in the entire United States \cite{EPA_Greenhouse_Emissions_1990_2020} in 2020. 

\emph{Related Work:} The advantage of the proposed framework, as compared to existing literature, lies in its bidirectional trading capability (prosumers can both sell and purchase energy, i.e., perform energy arbitrage that leads to higher rewards) and in the fact that all players compete in a game for profit, CO$_2$ emissions reduction and grid operational support, optimizing their own objectives, as opposed to having one player optimizing both sides (aggregator or prosumer), as traditionally done in the literature before \cite{ayon2017optimal, parvania2013optimal, nan2018optimal, agnetis2013load, vaya2014optimal, baringo2017stochastic, barhagh2019risk, correa2018robust}. Few works that have considered game settings, have not been able to consider bidirectional transactions (storage-like flexibility), desired demand constraints, scalability, privacy \cite{alshehri2019impact, gkatzikis2013role, maharjan2013dependable}, wholesale electricity market presence \cite{li2011optimal, gkatzikis2013role}, or approximation quality bounds \cite{gkatzikis2013role}. 

%removed from traditionally done before: \cite{okur2018critical, chen2012real}

%Few other game theoretic frameworks have assumed the presence of a non-profitable utility/aggregator \cite{li2011optimal, gkatzikis2013role} but without considering participation in wholesale electricity markets. In \cite{gkatzikis2013role} a different game theoretic prosumer DR-aggregator framework has been proposed without bounds on the solution quality, which can be found here.

Traditionally, Stackelberg games (bilevel programs) are solved to equilibrium, by replacing the second-level optimization programs with their Karush-Kuhn-Tucker (KKT) equations and adding them as constraints to the upper-level optimization program. This reformulation was performed manually or automatically (GAMS) and resulted in a mathematical program with equilibrium constraints (MPEC) \cite{fochesato2022stackelberg, jin2023stackelberg, jiang2022stackelberg, daraeepour2015strategic}, a single-level optimization program. The solution of the single-level optimization program is not scalable in games with multiple followers (second-level programs) and multiple constraints per follower, preventing the large-scale integration of distributed load (thousands of prosumers) and hence associated market and environmental benefits. This paper proposes a decentralized, scalable game equilibrium-seeking algorithm for reaching a market bidding decision, that also preserves privacy, by eliminating the need for a player to access another player's objective, constraints (device models, desired demand, etc.), and parameters (generation prediction, demand footprint, personally identifiable (PI) behind the meter sensor data \cite{ryu2016development, sim2016estimation, sheikhi2016adaptive}, etc.), further supporting the Department of Energy's and National Institute for Standards and Technology's efforts towards a more private and secure grid (Energy Independence and Security Act \cite{Energy_ind_act}). The proposed algorithm may be applied to Stackelberg frameworks in cybersecurity \cite{kar2017trends}, airport and national security \cite{patrolLAX2010, abbas2017improving}, resource allocation \cite{sawyer2018flexible}, cloud computing \cite{zhou2022stackelberg}, autonomous driving \cite{niu2023stackelberg} etc.

%mobile crowdsourcing \cite{sedghani2022stackelberg}
%MPEC constraints \cite{yang2007stackelberg}

%A DR-aggregator provides monetary rewards to her prosumers for reducing their hourly energy demand while facilitating CO$_2$ reduction \cite{albadi2008summary, ValueMarkets2017PJM} and duck-curve ramping alleviation. Currently, subscription with only one DR-aggregator is allowed by the wholesale electricity markets \cite{PUC_California_Law}, although the literature has considered alternative future scenaria \cite{yu2022demand}. 

%Existing frameworks consider the optimization of either the DR-aggregator's \cite{vaya2014optimal} or the prosumer's \cite{conejo2010real} cost, but not both, therefore neglecting the competition between them. Various pricing schemes and prosumer devices have been considered \cite{parvania2013optimal, agnetis2013load, chen2012real}. In this work, the prosumer DR-aggregator competition is described as a game. 

%In other literature, DR games have been introduced to describe peer-to-peer energy trading with blockchain \cite{zhang2021demand, yang2019automated}, which, however, do not concern the wholesale electricity market.

\emph{Contributions:}
%The contributions are threefold. 
First, a non-zero-sum, non-cooperative Stackelberg game for the prosumer DR-aggregator game is proposed, allowing bidirectional market transactions (buying and selling energy) and ensuring that the total desired daily prosumer demand is met. Second, a decentralized, privacy-preserving, scalable learning algorithm is proven and deployed with real-market data for market bidding. A similar, but not scalable approach has been proposed before for other applications \cite{sinha2017evolutionary} with evolutionary algorithms. Third, theoretical bounds on the approximate $\epsilon$-Stackelberg solution are proved given the approximation and learning errors, missing from other approaches \cite{sinha2017evolutionary, mylvaganam2014approximate, sinha2017review}.

\emph{Nomenclature}:
\begin{equation*}
    \begin{array}{rr}
    T, K, J & \text{hours in day, basis dimension, samples (par.)} \\
    \bm{\lambda} {\in} \mathbb{R}^T & \text{market bid prices (par.)}\\
    \mathbf{x}_i^{\o{}} {\in} \mathbb{R}^T & \text{demands of prosumer $i$ before DR (par.)}\\
    \mathbf{x}_i {\in} \mathbb{R}^T & \text{demands of prosumer $i$ after DR (var.)}\\
    u_i {\in} \mathbb{R} & \text{inelasticity parameter of prosumer $i$ (par.)}\\
    a {\in} \mathbb{R} & \text{flexibility parameter (par.)}\\
    W_i {\in} \mathbb{R} & \text{daily demand desired by prosumer $i$ (par.)}\\
    Q_i {\in} \mathbb{R} & \text{daily demand traded by prosumer $i$ (par.)}\\
    g_{p_{i}}(\mathbf{x}_i) {\in} \mathbb{R} & \text{objective of prosumer $i$ (fun.)}\\
    \mathbf{p} {\in} \mathbb{R}^T & \text{prices offered by aggregator (var.)}\\
    \mathbf{d}^{\o{}} {\in} \mathbb{R}^T & \text{total prosumer demands before DR (var.)}\\
    \mathbf{d} {\in} \mathbb{R}^T & \text{total prosumer demands after DR (var.)}\\
    \mathbf{\Delta d} {\in} \mathbb{R}^T & \text{total prosumer demand difference (var.)}\\
    g_{a}(\mathbf{p}) {\in} \mathbb{R} & \text{objective of aggregator (fun.)}\\
    \mathbf{d}^{\star}(\mathbf{p}) {\in} \mathbb{R}^T & \text{map of aggregator and prosumers (fun.)}\\
    \bm{\phi}(\mathbf{p}) {\in} \mathbb{R}^K & \text{approximation basis for $\mathbf{d}^{\star}(\mathbf{p})$ (fun.)}\\
    \bm{\Theta_c}^{\star} {\in} \mathbb{R}^{K \times T} & \text{optimal value if $\mathbf{d}^{\star}(\mathbf{p}) {\in} \textrm{span}\{\bm{\phi}(\mathbf{p})\}$ (var.)}\\
    \bm{\Theta}^{\star} {\in} \mathbb{R}^{K \times T} & \text{optimal value if $\mathbf{d}^{\star}(\mathbf{p}) {\notin} \textrm{span}\{\bm{\phi}(\mathbf{p})\}$ (var.)}\\
    \bm{\hat{\Theta}} {\in} \mathbb{R}^{K \times T} & \text{estimator of $\bm{\Theta}^{\star}$ via RLS learning (var.)}\\
    \bm{\Pi}^j {\in} \mathbb{R}^{K \times K} & \text{RLS matrix update at sample $j$ (var.)}\\
    m {\in} \mathbb{R} & \text{forgetting factor in RLS (par.)}\\
    \mathbf{p}_r^j {\in} \mathbb{R}^T & \text{random aggregator prices for learning (par.)}\\
    (\mathbf{x}_i^{s}{,}\mathbf{p}^{s}) & \text{exact Stackelberg equil. if $\bm{\hat{\Theta}}{=}\bm{\Theta_c}^{\star}$ (var.)}\\
    (\mathbf{x}_i^{\epsilon s}{,}\mathbf{p}^{\epsilon s}) & \text{approxim. Stackelberg equil. if $\bm{\hat{\Theta}}{\neq}\bm{\Theta_c}^{\star}$ (var.)}\\
    & \text{due to approximation and learning errors}\\
    \bm{\epsilon}_m(\mathbf{p}) {\in} \mathbb{R}^T & \text{approximation error (fun.)}\\
    \epsilon_m^\mathrm{max} {\in} \mathbb{R}& \text{maximum value of approximation error (par.)}\\
    \bm{\epsilon}^j {\in} \mathbb{R}^{1 \times T} &  \text{learning error from RLS (fun.)}\\
    \beta_0, \beta_1 {\in} \mathbb{R} & \text{parameters in Persistence of Excitation (par.)}\\
    \tilde{g}_a(\mathbf{p};\bm{\hat{\Theta}}^J) & \text{aggr.'s objective with map approxim. (fun.)}\\
    L_a, L_{\phi} {\in} \mathbb{R} & \text{Lipschitz constants of $\tilde{g}_a(\mathbf{p};\bm{\hat{\Theta}}^J)$, $\bm{\phi}(\mathbf{p})$ (par.)}\\
    \delta {\in} \mathbb{R} & \text{Quadratic Growth condition parameter (par.)}\\
    \end{array}
\end{equation*}

The rest of this paper is structured as follows. Section \ref{sec:prob_form} proposes the game formulation between heterogeneous prosumers and the DR-aggregator. Section \ref{sec:onl_learn} proposes a decentralized, privacy-preserving learning algorithm of game equilibria that cannot be derived in closed-form \cite{kampezidou2021online}, and theoretical approximation quality bounds. Section \ref{sec:exp_res} gives experimental results on the California grid and market data and section \ref{sec:concl} summarizes conclusions and future work. An Appendix with proofs complements this paper.

\section{Stackelberg Game Formulation}\label{sec:prob_form}
Among common competition types, i.e., Bertrand, Cournot, Nash, and Stackelberg (subgame perfect of Nash), the first two assume multiple firms (aggregators) \cite{dastidar1995existence, cournot1927researches} which is not possible in these one-aggregator sign-up policy markets. It is known that the DR-aggregator cannot do worse by playing first in a Stackelberg game (sequential play), as compared to playing simultaneously with the prosumers in a Nash game \cite{simaan1973stackelberg}. Therefore, for simplicity of equilibria extraction and to avoid estimating the prosumers' strategies in Nash, a Stackelberg competition type is preferred.

The DR-aggregator plays first and offers a price vector $\mathbf{p} {\in} \mathbb{R}^T$ to all the prosumers who play second and simultaneously, without access to each other's actions and choose a demand vector $\mathbf{x}_i {\in} \mathbb{R}^T$. Consider a vector of bounded market prices $\bm{\lambda} {\in} \mathbb{R}^T$ with $\infty {>} \lambda^\mathrm{max} {\geq} \lambda_t {>}0$, $\forall t {\in} \{1,\cdots,T\}$ for each hour of the day-ahead market which is decided by a different mechanism \cite{Empowered_Aggregator}, not researched here. The Stackelberg game played decides the wholesale market bid ($\bm{\lambda}, \bm{\Delta} \mathbf{d}$).

% the DR-aggregator offers a price signal to the prosumers which respond by altering their energy demand. The DR-aggregator and prosumers have opposite incentives and play sequentially, which suggests a Stackelberg-type competition. 

% Consider a vector of bounded market prices $\bm{\lambda} \in \mathbb{R}^T$ with $\infty > \lambda^\mathrm{max} \geq \lambda_t >0$ $\forall t \in \{1,\cdots,T\}$, i.e., for each hour of the day-ahead market, the DR-aggregator submits a price $\lambda_t$ which is decided by a different mechanism \cite{Empowered_Aggregator}, not researched here. The DR-aggregator and the prosumers play a Stackelberg game to decide the bid ($\bm{\lambda}, \bm{\Delta} \mathbf{d}$) to submit to the wholesale market. 

\subsection{The prosumer's problem}
% The DR-aggregator plays first and offers a price vector $\mathbf{p} \in \mathbb{R}^T$ and the prosumers play second and simultaneously, without access to each other's actions. The prosumers choose a demand vector $\mathbf{x}_i \in \mathbb{R}^\mathsf{T}$, based on the price $\mathbf{p}$. 

Assumptions \ref{ass:pros_sched_demand}, \ref{ass:pros_dependence} ensure a well-defined prosumer's problem \cite{grimsman2020stackelberg}. Demand $x_{it}^{\o{}}$ may be scheduled or predicted \cite{kampezidou2016distribution}.

% To ensure that the prosumer's problem is well defined, we introduce the following assumptions according to \cite{grimsman2020stackelberg, nemir_convex_opt}. 

\begin{assumption}\label{ass:pros_sched_demand}
Each prosumer $i {\in} \{1,2,\cdots,N\}$ holds a day-ahead demand schedule $\mathbf{x}_i^{\o{}} {\in} \mathbb{R}^T$, with each hourly demand $\infty {>} x_{it}^{\o{}} {\geq} 0$, $\forall t {\in} \mathbb{T}=\{1,\cdots,T\}$. Furthermore, $\exists$ at least one $t$ for which $x_{it}^{\o{}} {>}0$ so that the problem is meaningful.\frqed
\end{assumption}

\begin{assumption}\label{ass:pros_dependence}
The prosumers' actions depend on the DR-aggregator's action $\mathbf{x}_i(\mathbf{p}) = [x_{i1}(\mathbf{p}),\cdots, x_{iT}(\mathbf{p})]^\mathsf{T}$ in Stackelberg games. A similar assumption was made in \cite{Yang2019AdaptiveLI}.\frqed

\end{assumption}
Consider heterogeneous prosumers that differ by an inelasticity parameter $0{<}u_i{<}\infty$ that represents their sensitivity in deviations from schedule. Moreover, each prosumer $i$ can choose her new total daily demand $W_i {\in}  [0, a \sum_{t=1}^T x_{it}^{\o{}}]$, which may be different that the initially scheduled $\sum_{t=1}^T x_{it}^{\o{}}$. Note that $a {\in} [1, + \infty)$. An increase or decrease in demand $x_{it}$, compared to scheduled demand $x_{it}^{\o{}}$, is interpreted as if prosumer $i$ purchasing or selling energy at hour $t$. The total daily demand that $i$ trades is $Q_i {=} \sum_{t=1}^{T} x_{it}^{\o{}} - W_i$, where $Q_i {\in} [(1{-}a)\sum_{t=1}^T x_{it}^{\o{}},\sum_{t=1}^T x_{it}^{\o{}}]$. The prosumer's problem is,
\begin{equation}
     \max_{\mathbf{x}_i} g_{p_{i}} = {\mathbf{p}}^\mathsf{T} (\mathbf{x}_i^{\o{}}-\mathbf{x}_i(\mathbf{p})) - \mathds{1}^\mathsf{T} \mathbf{V}_i(\mathbf{x}_i^{\o{}},\mathbf{x}_i(\mathbf{p})),
    \label{pros_obj}
\end{equation}
such that,
\begin{equation}
    \mathds{1}^\mathsf{T} \mathbf{x_i}(\mathbf{p}) = W_{i} ,
    \label{pros_eq_con}
\end{equation}
\begin{equation}
    0 \leq x_{it}(\mathbf{p}) \leq a x_{it}^{\o{}}, \forall t \in \{1,2,\cdots,T\},
    \label{pros_in_con}
\end{equation}
\noindent
where $\mathbf{V}_i \in \mathbb{R}^T$ is the vector of utility functions $\mathbf{V}_i = [u_i (x_{i1}^{\o{}} - x_{i1}(\mathbf{p}))^2, ..., u_i (x_{iT}^{\o{}} - x_{iT}(\mathbf{p}))^2]^\mathsf{T}$. Note that, when $x_{it}(\mathbf{p}){>}x_{it}^{\o{}}$, the first term for that hour $t$ in (\ref{pros_obj}) becomes negative because the prosumer buys extra energy and the inconvenience cost is negative too. However, this framework offers the flexibility to increase the total daily payoff if the prosumer can sell this purchased energy in a higher $\mathbf{p}$ hour (energy arbitrage). When $x_{it}{<}x_{it}^{\o{}}$, the first term in (\ref{pros_obj}) is positive and the second term is negative but there is a feasible space region (\ref{pros_eq_con}), (\ref{pros_in_con}) where the difference of these two is positive for that hour $t$. Hours with negative payoffs are balanced by hours of positive payoffs since (\ref{pros_obj}) is maximized.

\begin{remark}\label{rem:pros_conv_opt}
The problem (\ref{pros_obj}), (\ref{pros_eq_con}), (\ref{pros_in_con}) has a strictly concave objective ($u_i {>} 0$) w.r.t. $\mathbf{x}_i$ for given $\mathbf{p}$ and a convex constraint set ((\ref{pros_eq_con}) is affine, (\ref{pros_in_con}) is closed, bounded and convex). Hence, the KKT conditions are necessary and sufficient \cite{nemir_convex_opt}.
\frqed
\end{remark}
% \cite{boyd2004convex}

\subsection{The aggregator's problem}
The DR-aggregator's action $\mathbf{p} {\in} \mathbb{R}^T$ consists of bounded prices $0 {\leq} p_t {\leq} p^\mathrm{max} {<} \infty, \forall t {\in} \{1,2,\cdots,T\}$. Without loss of generality, choose $p^\mathrm{max}{=}\min \{\lambda_1,\cdots,\lambda_T\}$. Then, DR-aggregator's objective is to maximize her daily payoff $\sum_{t=1}^{T} (\lambda_t \sum_{i=1}^{N} (x_{it}^{\o{}} {-} x_{it}(\mathbf{p})) {-} p_t \sum_{i=1}^N (x_{it}^{\o{}} {-} x_{it}(\mathbf{p})))$, i.e.,
\begin{equation}
    \max_{\mathbf{p}} g_a = (\bm{\lambda} - \mathbf{p})^\mathsf{T} \mathbf{\Delta d}(\mathbf{p}),
    \label{agg_obj}
\end{equation}
such that,
\begin{equation}
    0 \leq p_t \leq p^\mathrm{max}, \forall t \in \{1,2,\cdots,T\} ,
    \label{agg_in_con}
\end{equation}
\noindent
for some vectors $\mathbf{\Delta d} \in \mathbb{R}^T$, $\mathbf{d}^{\o{}} \in \mathbb{R}^T$ and $\mathbf{d} \in \mathbb{R}^T$. Note that $\mathbf{\Delta d}(\mathbf{p}) = \mathbf{d}^{\o{}}-\mathbf{d}(\mathbf{p})$ and $d_{t}^{\o{}} = \sum_{i=1}^N x_{it}^{\o{}}$, $d_{t}(\mathbf{p}) = \sum_{i=1}^N x_{it}(\mathbf{p})$.
According to (\ref{agg_obj}), the DR-aggregator purchases $\Delta d_t >0$ total hourly energy from all her prosumers at price $p_t$ to sell it to the day-ahead energy market at a price $\lambda_t$. If $\Delta d_t <0$, then the demand for this hour is increased and the DR-aggregator will purchase energy from the market to sell to her prosumers. Because assumption $p^\mathrm{max} = \min\{\lambda_1,\cdots,\lambda_T\}$ holds, it is true that $\lambda_t \geq p_t$, $\forall t \in \{1,\cdots,T\}$ and hence the DR-aggregator's payoff would be negative when $\Delta d_t < 0$, but only for that particular hour $t$. However, the DR-aggregator maximizes her total daily payoff which results in minimizing the negative payoff hours, outbalancing them with positive ones so that the total daily payoff is the maximum possible.

\begin{remark}\label{rem:KKT_no_solve}
The game (\ref{pros_obj}), (\ref{pros_eq_con}), (\ref{pros_in_con}), (\ref{agg_obj}), (\ref{agg_in_con}), has been proven in Theorem 2 of \cite{kampezidou2021online} to have at least one equilibrium not recoverable in closed-form due KKT coupling. Coupled KKT equations have also been reported in differential games \cite{mylvaganam2014approximate}.
%\frqed
\end{remark}

Following Remark \ref{rem:KKT_no_solve}, an online decentralized, privacy-preserving solution is proposed for the recovery of approximate Stackelberg equilibria. 

\section{Scalable Decentralized Privacy-Preserving Equilibrium Solution Learning}\label{sec:onl_learn}
In this section, the game (\ref{pros_obj}), (\ref{pros_eq_con}), (\ref{pros_in_con}), (\ref{agg_obj}), (\ref{agg_in_con}) is decoupled by recovering the mapping $\mathbf{d}^{\star}(\mathbf{p}): \mathbb{P} \rightarrow \mathbb{Y}$, where $\mathbf{d}^{\star}(\mathbf{p}) = \sum_{i=1}^N x_{it}^{\star}(\mathbf{p}) \in \mathbb{R}^T$. This mapping can be found by solving problem (\ref{LS_Problem}), with the method proposed in this section. This method requires basis assumptions, summarized in Assumption \ref{ass:PWL_basis}. Similar mappings have been used before in bilevel programming \cite{sinha2017evolutionary} with a genetic algorithm, but without solution quality bounds, as those in section \ref{sec:onl_learn} next.

\begin{assumption}\label{ass:PWL_basis}
Assume a basis $\bm{\phi}(\mathbf{p}) \in \mathbb{R}^K$, with $K < \infty$, which is Lipschitz continuous on $\mathbb{P}$ with constant $L_{\phi} \geq 0$ and G\^{a}teaux differentiable on $\mathbb{P}$, with a Lipschitz continuous G\^{a}teaux derivative on $\mathbb{P}$ modulus $L_{d \phi} \geq 0$, where $\mathbb{P}$ is the compact feasible set (\ref{agg_in_con}) of the DR-aggregator's optimization problem. Moreover, assume that $\|\bm{\phi}(\mathbf{p})\|_2 \leq \phi^\mathrm{max} < \infty$, $\forall \mathbf{p} \in \mathbb{P}$. Since $\bm{\phi}(\mathbf{p})$ is Lipschitz continuous and G\^{a}teaux  differentiable on $\mathbb{P}$, it is also Fr\'{e}chet differentiable on $\mathbb{P}$ and it has continuous partial derivatives on $\mathbb{P}$, which are also Lipschitz continuous on $\mathbb{P}$ modulus $L_{\partial \phi} \geq 0$ \cite{lau1978differentiability}. \frqed
\end{assumption}

Given the basis $\bm{\phi}(\mathbf{p})$ in Assumption \ref{ass:PWL_basis} and a solution mapping $\mathbf{d}^{\star}(\mathbf{p}) \in \textrm{span}\{\bm{\phi}(\mathbf{p})\}$, the game (\ref{pros_obj}), (\ref{pros_eq_con}), (\ref{pros_in_con}), (\ref{agg_obj}), (\ref{agg_in_con}) is equivalent to the following decoupled problems:
\begin{equation}
\begin{split}
     \bm{\Theta_c}^{\star} {=} \argmin_{\bm{\Theta_c}}
     \int_{\mathbf{p} \in \mathbb{P}} \|\mathbf{d}^{\star}(\mathbf{p}) - \bm{\Theta_c}^{\mathsf{T}} \bm{\phi}(\mathbf{p})\|^2_2 \textrm{d}\mathbf{p},
    \label{LS_Problem}
\end{split}
\end{equation}
\begin{equation}
\begin{split}
    \mathbf{p}^s \in \argmax_{\mathbf{p}} 
    \big\{& g_a = ( \bm{\lambda} - \mathbf{p})^\mathsf{T} (\mathbf{d}^{\o{}}-\bm{\Theta_c}^{{\star}^{\mathsf{T}}} \bm{\phi}(\mathbf{p})):\\
    & 0 \leq p_t \leq p^\mathrm{max}, \forall t \in \{1,2,\cdots,T\} \big\},\\
    \label{agg_learning_inf}
\end{split}
\end{equation}
\begin{equation}
\begin{split}
    \mathbf{x}_i^s = \argmax_{\mathbf{x}_i} 
    \big\{ & g_{p_i} = {\mathbf{p}^s}^\mathsf{T} (\mathbf{x}_i^{\o{}}-\mathbf{x}_i) - \mathds{1}^\mathsf{T} \mathbf{V}_i(\mathbf{x}_i^{\o{}},\mathbf{x}_i):\\
    & \mathds{1}^\mathsf{T} \mathbf{x_i} = W_{i},\\
    & 0 \leq x_{it} \leq a x_{it}^{\o{}}, \forall t \in \{1,2,\cdots,T\} \big\},
    \label{pros_learning_inf}
\end{split}
\end{equation}
\noindent
where $\bm{\Theta_c}^{\star} \in \mathbb{R}^{K \times T}$. A unique minmizer $\bm{\Theta_c}^{\star}$ to (\ref{LS_Problem}) exists because of convexity. Therefore, if (\ref{LS_Problem}) can be learned with an online method, then problems (\ref{agg_learning_inf}) and (\ref{pros_learning_inf}), can be solved to equilibrium. In particular, problem (\ref{agg_learning_inf}) would admit at least one solution, although multiple can occur depending on the choice of $\bm{\phi}(\mathbf{p})$. On the other hand, problem (\ref{pros_learning_inf}) would admit a unique solution for each $\mathbf{p}^s$ (proved in Theorem 1 of \cite{kampezidou2021online} via strict convexity). Therefore, multiple Stackelberg equilibria $(\mathbf{x}_i^s,\mathbf{p}^s)$ can occur with same value $g_a(\mathbf{p}^s;\bm{\Theta_c}^{\star})$.

The section aims to establish a method that identifies the impact of learning $\mathbf{d}^{\star}(\mathbf{p})$ when $\mathbf{d}^{\star}(\mathbf{p}) \notin \textrm{span}\{\bm{\phi}(\mathbf{p})\}$, which is the most general case when the equilibrium strategy forms are unknown (Remark \ref{rem:KKT_no_solve}). When $\mathbf{d}^{\star}(\mathbf{p}) \notin \textrm{span}\{\bm{\phi}(\mathbf{p})\}$, an approximation error occurs (Lemma \ref{lemma:approx_error}) which leads to $\epsilon$-Stackelberg solutions of the game (\ref{pros_obj}), (\ref{pros_eq_con}), (\ref{pros_in_con}), (\ref{agg_obj}), (\ref{agg_in_con}) with bounded players' utility losses, as proved in Theorem \ref{thm:model_mismatch} later.

\begin{lemma}\label{lemma:approx_error}
Consider the basis $\bm{\phi}(\mathbf{p}) \in \mathbb{R}^K$, $K {<} \infty$ and assume that $\mathbf{d}^{\star}(\mathbf{p}) {\notin} \textrm{span}\{\bm{\phi}(\mathbf{p})\}$. Then, the estimation problem,
\begin{equation}
\begin{split}
     \bm{\Theta}^{\star} {=} \argmin_{\bm{\Theta}}
     \int_{\mathbf{p} \in \mathbb{P}} \|\mathbf{d}^{\star}(\mathbf{p}) - \bm{\Theta}^{\mathsf{T}} \bm{\phi}(\mathbf{p})\|^2_2 \textrm{d}\mathbf{p},
    \label{LS_Problem_not_span}
\end{split}
\end{equation}
has a bounded approximation error $\bm{\epsilon}_m(\mathbf{p}) {\in} \mathbb{R}^T$, i.e., $\|\bm{\epsilon}_m(\mathbf{p})\|_2 {\leq} \epsilon_m^\mathrm{max} {<} \infty$, $\forall \mathbf{p} \in [0, p^\mathrm{max}]$, resulting in a bounded estimation of $\mathbf{d}^{\star}(\mathbf{p}) {=} \bm{\Theta}^{{\star}\mathsf{T}} \bm{\phi}(\mathbf{p}) + \bm{\epsilon}_m(\mathbf{p})$.
\end{lemma}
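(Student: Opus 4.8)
The plan is to recognize the estimation problem (\ref{LS_Problem_not_span}) as the $L^2(\mathbb{P})$ orthogonal projection of the target $\mathbf{d}^{\star}(\mathbf{p})$ onto the finite-dimensional subspace $\mathrm{span}\{\bm{\phi}(\mathbf{p})\}$, and then to control the residual $\bm{\epsilon}_m(\mathbf{p}) = \mathbf{d}^{\star}(\mathbf{p}) - \bm{\Theta}^{\star\mathsf{T}}\bm{\phi}(\mathbf{p})$ pointwise through a triangle inequality. First I would establish that the regression target is uniformly bounded on $\mathbb{P}$. For every $\mathbf{p} \in \mathbb{P}$ the prosumer best responses are feasible for (\ref{pros_in_con}), so $0 \leq x_{it}^{\star}(\mathbf{p}) \leq a x_{it}^{\varnothing}$ and hence $0 \leq d_t^{\star}(\mathbf{p}) = \sum_{i=1}^{N} x_{it}^{\star}(\mathbf{p}) \leq a \sum_{i=1}^{N} x_{it}^{\varnothing} = a d_t^{\varnothing}$ for each $t$. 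Collecting coordinates yields $\|\mathbf{d}^{\star}(\mathbf{p})\|_2 \leq d^{\mathrm{max}} := \big(\sum_{t=1}^{T} (a d_t^{\varnothing})^2\big)^{1/2} < \infty$, uniformly in $\mathbf{p}$.

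Next I would argue that the minimizer $\bm{\Theta}^{\star}$ is a fixed, finite matrix. The cost in (\ref{LS_Problem_not_span}) separates across the $T$ output coordinates into independent scalar least-squares problems that share the Gram matrix $\mathbf{G} = \int_{\mathbb{P}} \bm{\phi}(\mathbf{p}) \bm{\phi}(\mathbf{p})^\mathsf{T} d\mathbf{p} \in \mathbb{R}^{K \times K}$ and have right-hand sides $\mathbf{b}_j = \int_{\mathbb{P}} \bm{\phi}(\mathbf{p}) d_j^{\star}(\mathbf{p}) d\mathbf{p}$. Since $\mathbb{P}$ is compact by Assumption \ref{ass:PWL_basis} (hence of finite Lebesgue measure), $\|\bm{\phi}(\mathbf{p})\|_2 \leq \phi^{\mathrm{max}}$, and $\mathbf{d}^{\star}$ is bounded by the previous step and depends continuously on $\mathbf{p}$ (as the unique solution of the strictly concave problem (\ref{pros_learning_inf}), via a maximum-theorem argument, so the integrands are measurable), every entry of $\mathbf{G} \succeq 0$ and of $\mathbf{b}_j$ is finite. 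The normal equations $\mathbf{G}\bm{\theta}_j = \mathbf{b}_j$ are then consistent, and choosing the minimum-norm solution $\bm{\theta}_j^{\star} = \mathbf{G}^{\dagger}\mathbf{b}_j$ produces a finite $\|\bm{\Theta}^{\star}\|_F$; convexity of the quadratic cost already guarantees existence of such a minimizer, exactly as noted for (\ref{LS_Problem}). The one point that needs care, and which I expect to be the main obstacle, is precisely this finiteness of $\|\bm{\Theta}^{\star}\|_F$ when $\mathbf{G}$ is rank-deficient (the components of $\bm{\phi}$ may be linearly dependent in $L^2(\mathbb{P})$); the pseudoinverse / minimum-norm selection settles it without any full-rank assumption on the basis.

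With $\bm{\Theta}^{\star}$ fixed and finite, the pointwise residual bound follows directly. For every $\mathbf{p} \in \mathbb{P}$,
\begin{equation}
\begin{split}
\|\bm{\epsilon}_m(\mathbf{p})\|_2 &= \|\mathbf{d}^{\star}(\mathbf{p}) - \bm{\Theta}^{\star\mathsf{T}}\bm{\phi}(\mathbf{p})\|_2 \\
&\leq \|\mathbf{d}^{\star}(\mathbf{p})\|_2 + \|\bm{\Theta}^{\star}\|_F \|\bm{\phi}(\mathbf{p})\|_2 \\
&\leq d^{\mathrm{max}} + \|\bm{\Theta}^{\star}\|_F \, \phi^{\mathrm{max}} =: \epsilon_m^{\mathrm{max}} < \infty,
\end{split}
\end{equation}
where I have used submultiplicativity together with the spectral-to-Frobenius bound. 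This is the claimed uniform bound, and the decomposition $\mathbf{d}^{\star}(\mathbf{p}) = \bm{\Theta}^{\star\mathsf{T}}\bm{\phi}(\mathbf{p}) + \bm{\epsilon}_m(\mathbf{p})$ holds by the definition of $\bm{\epsilon}_m$. I would remark that neither the Lipschitz nor the differentiability hypotheses of Assumption \ref{ass:PWL_basis} are invoked here: only the uniform bound $\phi^{\mathrm{max}}$, compactness of $\mathbb{P}$, and feasibility of the prosumer problem enter, which indicates that those stronger regularity assumptions are reserved for the subsequent learning-error analysis culminating in Theorem \ref{thm:model_mismatch}.
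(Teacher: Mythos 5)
Your proof is correct, but it follows a genuinely different route from the paper's. The paper disposes of the lemma in two sentences: it observes that all players' actions live in the compact sets (\ref{pros_eq_con}), (\ref{pros_in_con}), (\ref{agg_in_con}) and then invokes the Weierstrass approximation theorem to assert that the residual $\bm{\epsilon}_m(\mathbf{p})$ is bounded, with no explicit construction of the bound. You instead build the bound from primitives: (i) feasibility of the best responses under (\ref{pros_in_con}) gives the uniform bound $\|\mathbf{d}^{\star}(\mathbf{p})\|_2 \leq d^{\mathrm{max}}$; (ii) the least-squares problem (\ref{LS_Problem_not_span}) separates across output coordinates into normal equations $\mathbf{G}\bm{\theta}_j = \mathbf{b}_j$ with finite data (compactness of $\mathbb{P}$, $\phi^{\mathrm{max}}$, boundedness and continuity of $\mathbf{d}^{\star}$), whose consistency plus the minimum-norm/pseudoinverse selection yields a finite $\|\bm{\Theta}^{\star}\|_F$ even when the components of $\bm{\phi}$ are linearly dependent in $L^2(\mathbb{P})$; (iii) a triangle inequality then gives the explicit constant $\epsilon_m^{\mathrm{max}} = d^{\mathrm{max}} + \|\bm{\Theta}^{\star}\|_F\,\phi^{\mathrm{max}}$. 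Your argument buys two things the paper's does not: an explicit, computable expression for $\epsilon_m^{\mathrm{max}}$, and logical economy, since Weierstrass' theorem is really a density statement about how small an approximation error \emph{can} be made over a rich (e.g., polynomial) family, which is not what is needed here --- the basis is fixed and finite, and only boundedness of the residual is claimed; your step (ii) is exactly the piece that the appeal to Weierstrass glosses over (a rank-deficient Gram matrix could in principle have spoiled finiteness of the representing $\bm{\Theta}^{\star}$, though not of the projection itself). The paper's approach, for its part, is shorter and sidesteps any discussion of the normal equations. Your closing observation that only $\phi^{\mathrm{max}}$, compactness, and feasibility are needed --- not the Lipschitz or differentiability parts of Assumption \ref{ass:PWL_basis} --- is also consistent with how the paper deploys those hypotheses (they are used later, in Proposition \ref{prop:ga_tilde_Lips} and Theorem \ref{thm:model_mismatch}, not in this lemma).
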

\begin{proof}
Given that all players' actions are bounded in compact sets (\ref{pros_eq_con}), (\ref{pros_in_con}) and (\ref{agg_in_con}), Weierstrass approximation theorem \cite{achieser2013theory} holds and a bounded approximation error occurs, i.e., $\|\bm{\epsilon}_m(\mathbf{p})\| {\leq} \epsilon_m^\mathrm{max} {<} \infty$, $\forall  \mathbf{p} \in [0, p^\mathrm{max}]$. Therefore, the distance between the estimation of the sum of the prosumers' best responses and the true best response function is bounded by $\|\mathbf{d}^{\star}(\mathbf{p}) {-} \bm{\Theta}^{{\star}\mathsf{T}} \bm{\phi}(\mathbf{p})\|_2= \|\bm{\epsilon}_m(\mathbf{p})\|_2 \leq \epsilon_m^\mathrm{max}$. \frQED
\end{proof}
\begin{assumption}\label{ass:bounded_theta_r}
For the bounded approximation error $\bm{\epsilon}_m(\mathbf{p}) \in \mathbb{R}^T$, i.e., $\|\bm{\epsilon}_m(\mathbf{p})\|_2 = \|\mathbf{d}^{\star}(\mathbf{p}) - \bm{\Theta}^{{\star}\mathsf{T}} \bm{\phi}(\mathbf{p})\|_2 = \|\bm{\Theta_c} ^{{\star}\mathsf{T}} \bm{\phi}(\mathbf{p}) - \bm{\Theta}^{{\star}\mathsf{T}} \bm{\phi}(\mathbf{p})\|_2 = \|(\bm{\Theta_c} ^{{\star}} - \bm{\Theta}^{\star})^{\mathsf{T}} \bm{\phi}(\mathbf{p})\|_2 \leq \epsilon_m^\mathrm{max}$ of Lemma \ref{lemma:approx_error}, and under Assumption \ref{ass:PWL_basis} on the boundedness of the basis $\bm{\phi}(\mathbf{p}) {\in} \mathbb{R}^K$, $K {<} \infty$, i.e., $\|\bm{\phi}(\mathbf{p})\|_2 {\leq} \phi^\mathrm{max} {<} \infty$, $\forall \mathbf{p} {\in} \mathbb{P}$, assume $\|\bm{\Theta_c} ^{{\star}} {-} \bm{\Theta}^{\star}\|_F {=} \|\bm{\Theta_r}\|_F {\leq} \theta_r^\mathrm{max} {<} \infty$ is also bounded. \frqed
\end{assumption}
% The boundedness of $\|\bm{\Theta_c} ^{{\star}} - \bm{\Theta}^{\star}\|_F$ in Assumption \ref{ass:bounded_theta_r}, is an open question and hence remains an assumption for this work. Lemma \ref{lemma:approx_error} has established the presence of a bounded approximation error, when $\mathbf{d}^{\star}(\mathbf{p}) \notin \textrm{span}\{\bm{\phi}(\mathbf{p})\}$. 

An online method is presented next to solve (\ref{LS_Problem_not_span}) by sampling feasible DR-aggregator actions $\mathbf{p}_r$, communicating them to the prosumers via a noiseless channel and collecting only their best responses $\mathbf{x}_i^{\star}(\mathbf{p}_r)$ (privacy-preserving of their problem details), after they individually solve (\ref{pros_obj}), (\ref{pros_eq_con}), (\ref{pros_in_con}) in a decentralized, parallelized sense. Consider the problem of estimating the unknown function $\mathbf{d}^{\star}(\mathbf{p}): \mathbb{P} \rightarrow \mathbb{Y}$ from iid samples $\mathbf{p}^1_r,\mathbf{p}^2_r,...,\mathbf{p}^J_r$. Without any prior information on $\mathbf{d}^{\star}(\mathbf{p})$, uniform sampling can be assumed, i.e., $p_{r_t}^j$\url{~}Uni$[0,p^\mathrm{max}]$, $\forall t \in \{1,...,T\}$. The DR-aggregator sends a feasible action $p_{r_t}^j$\url{~}Uni$[0,p^\mathrm{max}]$, $\forall t \in \{1,...,T\}$ to the prosumers and collects their responses $\mathbf{x}_i^{j{\star}}(\mathbf{p}_r^j)$. Hence, the learning problem (\ref{LS_Problem_not_span}) can be approximated by the Least Squares (LS) problem that utilizes the online collected samples $(\mathbf{x}_i^{j{\star}}(\mathbf{p}^j_r), \mathbf{p}^j_r)$. 

% An advantage of this method is the solution of $N$ prosumer problems (\ref{pros_obj}), (\ref{pros_eq_con}), (\ref{pros_in_con}) distributely at the prosumers' stations, a fact that contributes to scalability, as explained in Remark \ref{rem:scalability}.

Now, recall that when $\mathbf{d}^{\star}(\mathbf{p}) \notin \textrm{span}\{\bm{\phi}(\mathbf{p})\}$, it is true that $\mathbf{d}^{\star}(\mathbf{p}) = \bm{\Theta}^{{\star}\mathsf{T}} \bm{\phi}(\mathbf{p}) + \bm{\epsilon}_m(\mathbf{p})$, where the approximation error $\bm{\epsilon}_m(\mathbf{p})$ in Lemma \ref{lemma:approx_error} is unknown and therefore $\bm{\Theta}^{\star}$ cannot be recovered. An estimator $\bm{\hat{\Theta}} \in \mathbb{R}^{K \times T}$ is proposed for $\bm{\Theta}^{\star}$ along with the LS problem (\ref{RLS_Problem_appr}) to recover $\bm{\hat{\Theta}}^J$, the optimal value of $\bm{\hat{\Theta}}$, from $J$ online collected tuples $(\mathbf{x}_i^{j{\star}}(\mathbf{p}^j_r), \mathbf{p}^j_r)$. Note that (\ref{RLS_Problem_appr}) is not the only way to estimate $\bm{\Theta}^{\star}$ and variations in formulation (\ref{RLS_Problem_appr}) can potentially offer different upper bounds to $\|\bm{\Theta}^{\star} {-} \bm{\hat{\Theta}}^J\|_F$, than the one derived in Theorem \ref{thm:UUB_PE_with_wo_appr_error} next. This upper bound is necessary for Theorem \ref{thm:model_mismatch} later. Hence consider,
\begin{equation}
\begin{split}
    \bm{\hat{\Theta}}^J = \argmin_{\bm{\hat{\Theta}}} \big\{ & \frac{1}{2} \sum_{j=1}^J m^{J-j} \|\mathbf{d}^{j{\star}}(\mathbf{p}_r^j) {-} \bm{\hat{\Theta}}^{\mathsf{T}} \bm{\phi}(\mathbf{p}_r^j)\|_2^2 \\
    + & \frac{1}{2} m^J \tr \{  (\bm{\hat{\Theta}}-\bm{\hat{\Theta}}_0)^\mathsf{T} \bm{\Pi}_0^{-1}(\bm{\hat{\Theta}}-\bm{\hat{\Theta}}_0) \} \big\},
    \label{RLS_Problem_appr}
\end{split}
\end{equation}
\noindent
where $\bm{\phi}^j(\mathbf{p}_r^j) \in \mathbb{R}^K$, $\bm{\hat{\Theta}} \in \mathbb{R}^{K \times T}$, $\bm{\hat{\Theta}}_0 \in \mathbb{R}^{K \times T}$, $\bm{\Pi}_0^{-1} \in \mathbb{R}^{K \times K}$, $0< m < 1$ are forgetting factors to some power $J-j$ \cite{bruggemann2021exponential} and $\frac{1}{2} m^J \tr \{(\bm{\hat{\Theta}}-\bm{\hat{\Theta}}_0)^\mathsf{T} \bm{\Pi}_0^{-1}(\bm{\hat{\Theta}}-\bm{\hat{\Theta}}_0) \}$ is a regularization term proposed in \cite{ioannou2006adaptive}, that ensures invertibility of the matrix $\bm{\Phi} \mathbf{M} \bm{\Phi}^{\mathsf{T}}$ in the closed form (\ref{LS_law_theta}). It can proved then (proof similar to \cite{bruggemann2021exponential} omitted for brevity), that (\ref{RLS_Problem_appr}) has a unique closed form solution for $K \leq J$,
\begin{equation}
    \bm{\hat{\Theta}}^J {=} (\bm{\Phi} \mathbf{M} \bm{\Phi}^{\mathsf{T}} {+} m^J \bm{\Pi}_0^{-1})^{-1} (\bm{\Phi} \mathbf{M} \mathbf{D}^{{\star}} {+} m^J \bm{\Pi}_0^{-1} \bm{\hat{\Theta}}_0),
    \label{LS_law_theta}
\end{equation}
where $\bm{\Phi} \in \mathbb{R}^{K \times J}$ contains vectors $\bm{\phi}^j(\mathbf{p}_r^j)$ and $\mathbf{D}^{\star} \in \mathbb{R}^{J \times T}$ contains vectors $\mathbf{d}^{j{\star}}(\mathbf{p}_r^j)$. The diagonal matrix $\mathbf{M} \in \mathbb{R}^{J \times J}$, contains the values $\sqrt{m^{J-j}}$, for each sample $j$ collected.

The DR-aggregator has 24 hours to solve the game as accurately as possible. The more samples $J$ are collected, the more accurate the estimation will be without any additional information on $\mathbf{d}^{\star}(\mathbf{p})$. As $J {\rightarrow} \infty$, solving (\ref{LS_law_theta}) becomes hard because a $(K {\times} J){\times}(J {\times} K)$ multiplication is necessary. Such multiplication and the inversion of a large matrix $\bm{\Phi} \mathbf{M} \bm{\Phi}^{\mathsf{T}}$ can be avoided with Recursive Least Squares (RLS), which are known to learn deterministic signals fast \cite{sayed2003fundamentals}. As $J {\rightarrow} \infty$, the learning error is minimized (part 2 of Theorem \ref{thm:UUB_PE_with_wo_appr_error}). 

% Moreover, deterministic signals are known to be estimated with fast convergence with RLS \cite{sayed2003fundamentals}.

It is well-known (chapter 4.6.1 in \cite{ioannou2006adaptive}) that (\ref{LS_law_theta}) admits an RLS solution, in the absence of approximation errors, i.e., $\bm{\epsilon}_m(\mathbf{p}){=}0$. It was recently shown \cite{bruggemann2021exponential}, that an RLS form of (\ref{LS_law_theta}) converges to a ball around $\bm{\Theta}^{\star}$ in the presence of approximation errors, i.e., $\bm{\epsilon}_m(\mathbf{p}) {\neq} 0$. Since in our application, the system's $\mathbf{d}^{\star}(\mathbf{p}) {=} \bm{\Theta}^{{\star}\mathsf{T}} \bm{\phi}(\mathbf{p}) + \bm{\epsilon}_m(\mathbf{p})$ dimensions are different than \cite{bruggemann2021exponential}, the resulting RLS laws (\ref{RLS_theta}), (\ref{RLS_epsilon}), (\ref{RLS_pi}) are slightly different than \cite{bruggemann2021exponential} (proof omitted for brevity). Our RLS laws do not require a $(T \times T)$ matrix inversion, as opposed to \cite{bruggemann2021exponential}, since $m {+} \bm{\phi}^{j^\mathsf{T}} \bm{\Pi}^{j-1} \bm{\phi}^j$ in (\ref{RLS_theta}) is scalar,
\begin{equation}
    \bm{\hat{\Theta}}^j = \bm{\hat{\Theta}}^{j-1} +  \bm{\Pi}^{j-1} \bm{\phi}^j (m+\bm{\phi}^{j^\mathsf{T}} \bm{\Pi}^{j-1} \bm{\phi}^j)^{-1} \bm{\epsilon}^j,
    \label{RLS_theta}
\end{equation}
\begin{equation}
    \bm{\epsilon}^j = \mathbf{d}^{{j{\star}}^\mathsf{T}} - \bm{\phi}^{j^\mathsf{T}} \bm{\hat{\Theta}}^{j-1},
    \label{RLS_epsilon}
\end{equation}
\begin{equation}
    \bm{\Pi}^j = \frac{1}{m} (\mathbb{I} - \bm{\Pi}^{j-1} \bm{\phi}^j (m+\bm{\phi}^{j^\mathsf{T}} \bm{\Pi}^{j-1} \bm{\phi}^j)^{-1} \bm{\phi}^{j^\mathsf{T}}) \bm{\Pi}^{j-1},
    \label{RLS_pi}
\end{equation}
\begin{equation}
    \bm{\Pi}^0 = \bm{\Pi}_0 = \bm{\Pi}_0^\mathsf{T} \succ 0, \bm{\hat{\Theta}}^0 = \bm{\Theta}_0,
    \label{RLS_initial_cond}
\end{equation}
\noindent
where $\bm{\phi}^j(\mathbf{p}^j) {\in} \mathbb{R}^K$, $\bm{\hat{\Theta}}^j {\in} \mathbb{R}^{K \times T}$, $\bm{\Pi}^j {\in} \mathbb{R}^{K \times K}$, $m^{J-j} {\in} \mathbb{R}$ and $\bm{\Theta}_0 {\in} \mathbb{R}^{K \times T}$, $\bm{\Pi}_0^{-1} {\in} \mathbb{R}^{K \times K}$ are constant matrices in the bias term. The effect of the regularization term diminishes over $j$, due to the exponent $J{-}j$ of the weight $0 {<} m {<} 1$, in (\ref{RLS_Problem_appr}), which is smaller for newer samples, resulting in higher $m^{J-j}$ values for the more recent samples. Regularization terms are necessary to extract and initialize RLS laws. Note that the unique solutions $\bm{\hat{\Theta}}^j$ from (\ref{LS_law_theta}) and (\ref{RLS_theta}) are the same, when $j{=}J$. Theorem \ref{thm:UUB_PE_with_wo_appr_error} provides an upper bound to the learning error $\mathbf{\tilde \Theta}^j {=} \mathbf{\Theta^{\star}} {-} \mathbf{\hat{\Theta}}^j$, which will be used by Theorem \ref{thm:model_mismatch}, to bound the unknown $g_a(\mathbf{d}^{\star}(\mathbf{p}^s){,}\mathbf{p}^s)$ and $g_{p_i}(\mathbf{x}_i^{\star}(\mathbf{p}^{s}){,}\mathbf{p}^{s})$.

\begin{theorem}\label{thm:UUB_PE_with_wo_appr_error}
Given the update laws (\ref{RLS_theta}), (\ref{RLS_epsilon}), (\ref{RLS_pi}), (\ref{RLS_initial_cond}) and a persistently excited (PE) signal $\bm{\phi}^j(\mathbf{p}^j_r)$, i.e., $\infty > \beta_1 \mathbf{I} \geq \sum_{j=k}^{k+M} \bm{\phi}^j(\mathbf{p}^j_r) \bm{\phi}^{j^\mathsf{T}}(\mathbf{p}^j_r) \geq \beta_0 \mathbf{I} > 0$, $\beta_0 >0$, $\beta_1 > 0$, $M>0$, the following hold $\forall j \geq M$,
\begin{enumerate}
    \item Without approximation errors, i.e., $\bm{\epsilon}_m(\mathbf{p})=0$ and $\bm{\Theta_c}^{\star} = \mathbf{\Theta ^\star}$, it holds that $\mathbf{\hat{\Theta}}^j \rightarrow \mathbf{\Theta^{\star}}$ exponentially and therefore the sum of best responses $\mathbf{d}^{\star}(\mathbf{p}) = \bm{\Theta_c}^{{\star}^{\mathsf{T}}} \bm{\phi}(\mathbf{p})$ is recovered perfectly, as $j \rightarrow \infty$.
    \item With approximation errors, i.e., $\bm{\epsilon}_m(\mathbf{p}) \neq \mathbf{0}$ and $\bm{\Theta_c}^{\star} \neq \mathbf{\Theta ^\star}$, it holds that $\mathbf{\tilde \Theta}^j = \mathbf{\Theta^{\star}} - \mathbf{\hat{\Theta}}^j$ converges exponentially to a ball of $\mathbf{\hat{\Theta}}^j$ which contains $\mathbf{\Theta}^{\star}$ and therefore, $\bm{\hat{\Theta}}^{j^{\mathsf{T}}} \bm{\phi}(\mathbf{p})$ only approximates the sum of best responses $\mathbf{d}^{\star}(\mathbf{p})$, as $j \rightarrow \infty$. It is also true that $\lim_{j \rightarrow \infty} \|\mathbf{\tilde \Theta}^{j}\|_F \leq \eta \epsilon_m^\mathrm{max}$, where $\eta$ is a positive constant defined as,
    \begin{equation*}
        \eta = \left(\frac{m^{-(M+1)}{-}1}{\beta_0 (m^{-1}{-}1)}\right)^{3/2} \frac{\sqrt{\beta_1 \lambda_\textrm{max}(\bm{\Pi}^{0^{-1}})}}{1{-}\sqrt{m}},
    \end{equation*}
    \noindent
    and $\lambda_\textrm{max}(\bm{\Pi}^{0^{-1}}) > 0$ is the maximum eigenvalue.
\end{enumerate}
\end{theorem}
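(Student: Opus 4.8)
The plan is to treat this as a Lyapunov stability analysis of recursive least squares with exponential forgetting, specialized to the matrix parameter $\bm{\hat{\Theta}}^j$ and the persistently excited regressor $\bm{\phi}^j$. First I would form the parameter error $\bm{\tilde{\Theta}}^j = \bm{\Theta}^{\star} - \bm{\hat{\Theta}}^j$. Substituting the model $\mathbf{d}^{j{\star}\mathsf{T}} = \bm{\phi}^{j\mathsf{T}}\bm{\Theta}^{\star} + \bm{\epsilon}_m^{j\mathsf{T}}$ into the innovation (\ref{RLS_epsilon}) gives $\bm{\epsilon}^j = \bm{\phi}^{j\mathsf{T}}\bm{\tilde{\Theta}}^{j-1} + \bm{\epsilon}_m^{j\mathsf{T}}$, and subtracting (\ref{RLS_theta}) from $\bm{\Theta}^{\star}$ yields an error recursion. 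The crucial simplification is to pass to the information matrix $(\bm{\Pi}^j)^{-1}$, which from (\ref{RLS_pi}) obeys the linear recursion $(\bm{\Pi}^j)^{-1} = m(\bm{\Pi}^{j-1})^{-1} + \bm{\phi}^j\bm{\phi}^{j\mathsf{T}}$, hence $(\bm{\Pi}^j)^{-1} = m^j\bm{\Pi}_0^{-1} + \sum_{i=1}^j m^{j-i}\bm{\phi}^i\bm{\phi}^{i\mathsf{T}}$. Using the two identities $\bm{\Pi}^{j-1}\bm{\phi}^j(m+\bm{\phi}^{j\mathsf{T}}\bm{\Pi}^{j-1}\bm{\phi}^j)^{-1} = \bm{\Pi}^j\bm{\phi}^j$ and $\mathbb{I} - \bm{\Pi}^{j-1}\bm{\phi}^j(m+\bm{\phi}^{j\mathsf{T}}\bm{\Pi}^{j-1}\bm{\phi}^j)^{-1}\bm{\phi}^{j\mathsf{T}} = m\bm{\Pi}^j(\bm{\Pi}^{j-1})^{-1}$, both immediate from (\ref{RLS_theta})--(\ref{RLS_pi}), the error recursion collapses, after left-multiplication by $(\bm{\Pi}^j)^{-1}$, to the clean linear form
\[
(\bm{\Pi}^j)^{-1}\bm{\tilde{\Theta}}^j = m(\bm{\Pi}^{j-1})^{-1}\bm{\tilde{\Theta}}^{j-1} - \bm{\phi}^j\bm{\epsilon}_m^{j\mathsf{T}}.
\]

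Next I would use persistence of excitation to sandwich the eigenvalues of $(\bm{\Pi}^j)^{-1}$. Applying the lower bound $\sum_{i=k}^{k+M}\bm{\phi}^i\bm{\phi}^{i\mathsf{T}} \geq \beta_0\mathbf{I}$ to the most recent window, where $m^{j-i}\geq m^M$, bounds $\lambda_{min}((\bm{\Pi}^j)^{-1})$ from below, giving a uniform upper bound on $\lambda_{max}(\bm{\Pi}^j)$ of the order $\tfrac{m^{-(M+1)}-1}{\beta_0(m^{-1}-1)} = \tfrac{1}{\beta_0}\sum_{k=0}^M m^{-k}$; applying the upper bound $\sum_{i=k}^{k+M}\bm{\phi}^i\bm{\phi}^{i\mathsf{T}} \leq \beta_1\mathbf{I}$ window-by-window with the geometric weights together with the initialization $\bm{\Pi}_0^{-1}$ bounds $\lambda_{max}((\bm{\Pi}^j)^{-1})$ above in terms of $\beta_1$ and $\lambda_{max}(\bm{\Pi}^{0^{-1}})$. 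Since each rank-one term is dominated by its windowed sum, one also gets $\|\bm{\phi}^j\|_2^2 \leq \beta_1$. These are precisely the constants that will assemble into $\eta$.

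For part 1 the forcing term vanishes ($\bm{\epsilon}_m = \mathbf{0}$, $\bm{\Theta_c}^{\star} = \bm{\Theta}^{\star}$), so the clean recursion integrates to $(\bm{\Pi}^j)^{-1}\bm{\tilde{\Theta}}^j = m^j\bm{\Pi}_0^{-1}\bm{\tilde{\Theta}}^0$, i.e. $\bm{\tilde{\Theta}}^j = m^j\bm{\Pi}^j\bm{\Pi}_0^{-1}\bm{\tilde{\Theta}}^0$; with $\lambda_{max}(\bm{\Pi}^j)$ bounded as above, $\|\bm{\tilde{\Theta}}^j\|_F \to 0$ at the geometric rate $m^j$, which is the claimed exponential recovery. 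For part 2 I would take the weighted Lyapunov function $V^j = \tr\{\bm{\tilde{\Theta}}^{j\mathsf{T}}(\bm{\Pi}^j)^{-1}\bm{\tilde{\Theta}}^j\}$, which by the clean recursion equals $\|(\bm{\Pi}^j)^{1/2}(m(\bm{\Pi}^{j-1})^{-1}\bm{\tilde{\Theta}}^{j-1} - \bm{\phi}^j\bm{\epsilon}_m^{j\mathsf{T}})\|_F^2$. Applying the Frobenius triangle inequality, the monotonicity $\bm{\Pi}^j \preceq m^{-1}\bm{\Pi}^{j-1}$ (immediate from the information recursion) on the first term, and $\|\bm{\phi}^j\|_2 \leq \sqrt{\beta_1}$, $\|\bm{\epsilon}_m^j\|_2 \leq \epsilon_m^{\mathrm{max}}$ on the second, I obtain the scalar contraction $\sqrt{V^j} \leq \sqrt{m}\,\sqrt{V^{j-1}} + \sqrt{\beta_1\lambda_{max}(\bm{\Pi}^j)}\,\epsilon_m^{\mathrm{max}}$. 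Summing this geometric recursion produces the factor $1/(1-\sqrt{m})$, the transient $(\sqrt{m})^j\sqrt{V^0}$ vanishes as $j \to \infty$, and combining with the conversion $\|\bm{\tilde{\Theta}}^j\|_F \leq \sqrt{\lambda_{max}(\bm{\Pi}^j)}\,\sqrt{V^j}$ and the PE eigenvalue bounds yields $\lim_{j\to\infty}\|\bm{\tilde{\Theta}}^j\|_F \leq \eta\,\epsilon_m^{\mathrm{max}}$ with $\eta$ finite and built from $m, M, \beta_0, \beta_1, \lambda_{max}(\bm{\Pi}^{0^{-1}})$.

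The hard part will be twofold. First, obtaining the $\sqrt{m}$ (rather than $m$) contraction rate, which is what produces the $1/(1-\sqrt{m})$ in $\eta$, requires the square-root Lyapunov argument above rather than a naive bound on $V^j$ itself. Second, tracking the eigenvalue constants through the conversion between $V^j$ and $\|\bm{\tilde{\Theta}}^j\|_F$ so that the precise exponent $3/2$ and the factor $\sqrt{\lambda_{max}(\bm{\Pi}^{0^{-1}})}$ in the stated $\eta$ emerge is delicate bookkeeping; it hinges entirely on the \emph{uniform} upper bound for $\lambda_{max}(\bm{\Pi}^j)$, which is exactly where persistence of excitation is indispensable. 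Without the lower PE bound $\beta_0$, the information matrix loses rank along unexcited directions, $\bm{\Pi}^j$ is no longer uniformly bounded, and the forcing term need not remain in a finite ball.
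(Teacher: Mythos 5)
Your overall strategy is sound, and both halves of it are genuinely different from the paper's route. For Part 1 the paper does not integrate the information-form recursion; it runs a Lyapunov-difference argument on $V_j = \tr\{\bm{\tilde{\Theta}}^{j^{\mathsf{T}}} \bm{\Pi}^{j^{-1}} \bm{\tilde{\Theta}}^{j}\}$, shows $V_j \leq m V_{j-1}$, and converts back to $\|\bm{\tilde{\Theta}}^j\|_F$ via the PE bound $\bm{\Pi}^{j^{-1}} \succeq \frac{m^{-1}-1}{m^{-(M+1)}-1}\beta_0\mathbf{I}$. Your closed-form $\bm{\tilde{\Theta}}^j = m^j \bm{\Pi}^j \bm{\Pi}_0^{-1}\bm{\tilde{\Theta}}^0$ is a cleaner and even slightly stronger statement (rate $m^j$ rather than the paper's $m^{j/2}$ in Frobenius norm). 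For Part 2 the paper does not use a one-step contraction on $\sqrt{V^j}$; it bounds the product of transition matrices by $\|\prod_{i=k}^{j-1}(\mathbb{I}-\bm{\Pi}^i\bm{\phi}^i\bm{\phi}^{i^\mathsf{T}})\|_F \leq \zeta^{1/2} m^{(j-k)/2}$ (inherited from Part 1, with $\zeta = \frac{m^{-(M+1)}-1}{\beta_0(m^{-1}-1)}\lambda_{max}(\bm{\Pi}^{0^{-1}})$) and then runs a discrete variation-of-constants sum on the forced error dynamics. Your ISS-style recursion $\sqrt{V^j}\leq\sqrt{m}\sqrt{V^{j-1}}+\sqrt{\beta_1\lambda_{max}(\bm{\Pi}^j)}\,\epsilon_m^\mathrm{max}$ is valid (the key steps $\bm{\Pi}^j \preceq m^{-1}\bm{\Pi}^{j-1}$ and $\|\bm{\phi}^j\|_2^2\leq\beta_1$ both check out) and avoids the transition-matrix bound entirely.

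The genuine problem is your final claim that the "delicate bookkeeping" of your route produces the stated $\eta$. It does not. Writing $L = \frac{m^{-(M+1)}-1}{\beta_0(m^{-1}-1)}$ for the uniform bound on $\lambda_{max}(\bm{\Pi}^j)$, your contraction gives $\limsup_j \sqrt{V^j} \leq \frac{\sqrt{\beta_1 L}}{1-\sqrt{m}}\epsilon_m^\mathrm{max}$, and the conversion $\|\bm{\tilde{\Theta}}^j\|_F \leq \sqrt{\lambda_{max}(\bm{\Pi}^j)}\sqrt{V^j}$ then yields the constant $C_S = \frac{L\sqrt{\beta_1}}{1-\sqrt{m}}$: exponent $1$ on $L$, and no $\lambda_{max}(\bm{\Pi}^{0^{-1}})$ anywhere. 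The exponent $3/2$ and the factor $\sqrt{\lambda_{max}(\bm{\Pi}^{0^{-1}})}$ in the theorem's $\eta = L^{3/2}\frac{\sqrt{\beta_1\lambda_{max}(\bm{\Pi}^{0^{-1}})}}{1-\sqrt{m}}$ are artifacts of the paper's specific route: the initial-condition constant $\zeta^{1/2}$ from Part 1 rides along inside the convolution sum, contributing the extra $\sqrt{L\,\lambda_{max}(\bm{\Pi}^{0^{-1}})}$. Since $\eta = C_S\sqrt{L\,\lambda_{max}(\bm{\Pi}^{0^{-1}})}$, your constant is smaller than $\eta$ exactly when $L\,\lambda_{max}(\bm{\Pi}^{0^{-1}}) \geq 1$, which is not guaranteed — with the usual RLS initialization $\bm{\Pi}_0 = c\mathbf{I}$ for large $c$ one has $L\,\lambda_{max}(\bm{\Pi}^{0^{-1}}) = L/c < 1$, and then your bound is strictly weaker than the stated one. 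So as written, your argument proves the qualitative statement of Part 2 with a constant of the same structure, but it neither reproduces the theorem's $\eta$ nor dominates it uniformly; to certify the exact stated bound you would have to switch to the paper's transition-matrix/variation-of-constants argument (or restate the theorem with your $C_S$, which is a legitimate alternative but not what was asked).
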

\begin{proof}
See Appendix \ref{sec:proof_thm_UUB_PE_with_wo_appr_error}.
\frQED
\end{proof}

According to Theorem \ref{thm:UUB_PE_with_wo_appr_error}, a tighter PE bound (large $\beta_0$ and small $\beta_1$), a small PE window $M$, and a small $\lambda_\textrm{max}(\bm{\Pi}^{0^{-1}})$ can reduce the value of $\eta$ and contribute to a tighter $\|\mathbf{\tilde \Theta}^{j}\|_F$ bound. The sensitivity of the bound to $m$ is described by the complicated relationship $\eta(m)$ which if plotted, will be found convex with a minimum in the $0 {<} m {<} 1$ range.

%Finally, the approximation error $\epsilon_m^\mathrm{max}$ affects the bound, since trying to learn an unknown solution map structure $\mathbf{d}^{\star}(\mathbf{p})$, may result in a sub-optimal function approximation choice, and hence the solution map will not be learned perfectly.

Theorem \ref{thm:UUB_PE_with_wo_appr_error} proved that without approximation errors, exact Stackelberg solutions $(\mathbf{x}_i^{s}, \mathbf{p}^{s})$ can be recovered after solving (\ref{agg_learning_inf}) and (\ref{pros_learning_inf}) with $\mathbf{\hat{\Theta}}^j {\rightarrow} \bm{\Theta_c}^{\star}$ and that with approximation errors, $\epsilon$-Stackelberg solutions $(\mathbf{x}_i^{\epsilon s}, \mathbf{p}^{\epsilon s})$ occur by sequentially solving the perturbed problems (\ref{agg_learning_appr}), (\ref{pros_learning_appr}), with $\bm{\hat{\Theta}}^J {\neq} \bm{\Theta_c}^{\star}$,
\begin{equation}
\begin{split}
    \mathbf{p}^{\epsilon s} \in \argmax_{\mathbf{p}}
    \big\{ & \tilde{g}_a = (\bm{\lambda} - \mathbf{p})^\mathsf{T} (\mathbf{d}^{\o{}}-\bm{\hat{\Theta}}^{J^{\mathsf{T}}} \bm{\phi}(\mathbf{p})):\\
    & 0 \leq p_t \leq p^\mathrm{max}, \forall t \in \{1,2,\cdots,T\} \big\},\\
    \label{agg_learning_appr}
\end{split}
\end{equation}
\begin{equation}
\begin{split}
    \mathbf{x}_i^{\epsilon s} = \argmax_{\mathbf{x}_i}
    \big\{ & g_{p_i} = {\mathbf{p}^{\epsilon s}}^\mathsf{T} (\mathbf{x}_i^{\o{}}-\mathbf{x}_i) - \mathds{1}^\mathsf{T} \mathbf{V}_i(\mathbf{x}_i^{\o{}},\mathbf{x}_i):\\
    & \mathds{1}^\mathsf{T} \mathbf{x_i} = W_{i},\\
    & 0 \leq x_{it} \leq a x_{it}^{\o{}}, \forall t \in \{1,2,\cdots,T\} \big\}.
    \label{pros_learning_appr}
\end{split}
\end{equation}

% The process is explained by Algorithm \ref{algo:stackelberg_algo_online}. 
Existence and non-uniqueness of $\mathbf{p}^{\epsilon s}$ in (\ref{agg_learning_appr}), uniqueness of $\mathbf{x}_i^{\epsilon s} {=} \mathbf{x}_i^{\star}(\mathbf{p}^{\epsilon s})$ in (\ref{pros_learning_appr}), for every $\mathbf{p}^{\epsilon s}$ of (\ref{agg_learning_appr}) and continuity of $\tilde{g}_a(\mathbf{p}{;}\bm{\hat{\Theta}}^J)$ on the compact set $\mathbb{P}$ under Assumption \ref{ass:PWL_basis}, can be proved similarly to Theorem 1 of \cite{kampezidou2021online}. Non-uniqueness of $\mathbf{p}^{\epsilon s}$ occurs from the non-linearity of $\tilde{g}_a(\mathbf{p};\bm{\hat{\Theta}}^{J})$, since the definiteness of $\frac{\partial^2 \tilde g_{a}(\mathbf{p}{;}\bm{\hat{\Theta}}^J)}{\partial \mathbf{p}^2} {=} -(\bm{\lambda} {-} \mathbf{p})^\mathsf{T}\bm{\hat{\Theta}}^{J^{\mathsf{T}}} \frac{\partial^2 \bm{\phi}(\mathbf{p})}{\partial \mathbf{p}^2} {+} (\frac{\partial \bm{\phi}(\mathbf{p})}{\partial \mathbf{p}})^{\mathsf{T}} \bm{\hat{\Theta}}^{J} {+} \bm{\hat{\Theta}}^{J^{\mathsf{T}}} \frac{\partial \bm{\phi}(\mathbf{p})}{\partial \mathbf{p}}$ can vary with the $\bm{\phi}(\mathbf{p})$ and the definiteness of $\bm{\hat{\Theta}}^{J}$. Although it is possible to ensure definiteness of $\frac{\partial^2 \tilde g_{a}(\mathbf{p}{;}\bm{\hat{\Theta}}^J)}{\partial \mathbf{p}^2}$, i.e., concavity of $\tilde g_{a}(\mathbf{p}{;}\bm{\hat{\Theta}}^J)$, through the proper choice of $\bm{\phi}(\mathbf{p})$ and a projection-based algorithm for $\bm{\hat{\Theta}}^{J}$, the restriction of $\bm{\hat{\Theta}}^{J}$ in a specific subset would not necessarily ensure the most accurate learning of $\mathbf{d}^{\star}(\mathbf{p})$, which is the scope of this work.

\SetAlFnt{\small\sf}
\SetAlCapFnt{\small}
\SetAlTitleFnt{\small}
\SetCommentSty{\small}
\SetKwInput{KwInit}{Initialize}
\RestyleAlgo{ruled}%{boxruled}%{boxed}
%\LinesNumbered
\begin{algorithm}[b!]%ht
\algsetup{linenosize=\tiny}
\SetAlgoLined
%\algsetup{\linenosize=\tiny}
%\scriptsize
\KwInit{$\mathbf{x}_i^{\o{}}$, $W_i$, $\bm{\Pi}^0 = \bm{\Pi}_0 \succ 0$, $\bm{\hat{\Theta}}^0 = \bm{\Theta}_0$}
\For{$j = 1$ \KwTo $J$}{
    $\mathbf{p}_{r_t}^j $ \textrm{\url{~}Uni}$[0,p^\mathrm{max}]$, $\forall t \in \{1,...,T\}$\\
    $\mathbf{x}_i^{j\star} = \argmax_{\mathbf{x}_i^j \in \mathbb{X}_i} g_{pi} (\mathbf{x}_i^j; \mathbf{p}^j_r, \mathbf{x}_i^{\o{}},W_i)$ \\
    $\mathbf{d}^{j{\star}} = \sum_{i=1}^N \mathbf{x}_i^{j\star}$\\
    $\bm{\epsilon}^j = \mathbf{d}^{{j{\star}}^\mathsf{T}} - \bm{\phi}^j(\mathbf{p}^j_r)^\mathsf{T} \bm{\hat{\Theta}}^{j-1}$ \\
    $\bm{\hat{\Theta}}^j = \bm{\hat{\Theta}}^{j-1} +  \bm{\Pi}^{j-1} \bm{\phi}^j (m+\bm{\phi}^{j^\mathsf{T}} \bm{\Pi}^{j-1} \bm{\phi}^j)^{-1} \bm{\epsilon}^j$\\
    $\bm{\Pi}^j = \frac{1}{m} (\mathbb{I} - \bm{\Pi}^{j-1} \bm{\phi}^j (m+\bm{\phi}^{j^\mathsf{T}} \bm{\Pi}^{j-1} \bm{\phi}^j)^{-1} \bm{\phi}^{j^\mathsf{T}}) \bm{\Pi}^{j-1}$\\
    }
    
$\mathbf{p}^{\epsilon s} = \argmax_{\mathbf{p} \in \mathbb{P}} \tilde g_a(\mathbf{p};\mathbf{d}^{\o{}}, \bm{\lambda}, \bm{\hat{\Theta}}^{J})$ \\
$\mathbf{x}_i^{\epsilon s} = \argmax_{\mathbf{x}_i \in \mathbb{X}_i} g_{pi} (\mathbf{x}_i; \mathbf{p}^{\epsilon s}, \mathbf{x}_i^{\o{}},W_i)$ \\
\textrm{Submit bid} $(\bm{\lambda}, \mathbf{d}^{\o{}}{-}\bm{\hat{\Theta}}^{J^\mathsf{T}} \bm{\phi}(\mathbf{p}^{\epsilon s}))$ \textrm{to market}
%Market clears and payoffs $g_{p_{i}}$,$g_{a}$ are settled\;
 \caption{Decentralized Privacy-Preserv. Learning \\ of $\epsilon$-Stackelberg Equilibria in Energy Trading Games}
 \label{algo:stackelberg_algo_online}
\end{algorithm}

Since $g_a(\mathbf{d}^{\star}(\mathbf{p}^s),\mathbf{p}^s)$ and $g_{p_i}(\mathbf{x}_i^{\star}(\mathbf{p}^{s}){,}\mathbf{p}^{s})$ cannot be recovered due to approximation and learning errors, we will use $\tilde{g}_a(\mathbf{p}^{\epsilon s}{;}\bm{\hat{\Theta}}^J)$ and $g_{p_i}(\mathbf{x}_i^{\epsilon s}{,}\mathbf{p}^{\epsilon s})$ from (\ref{agg_learning_appr}), (\ref{pros_learning_appr}) to get quality bounds for these equilibrium utilities. Instead of only learning the solution map \cite{sinha2017evolutionary}, Theorem \ref{thm:model_mismatch} also connects the approximate solution to the maximum approximation error, via Lipschitzian stability of $(\mathbf{x}_i^{\star}(\mathbf{p}^{\epsilon s}),\mathbf{p}^{\epsilon s})$ from (\ref{agg_learning_appr}), (\ref{pros_learning_appr}) on $\bm{\hat{\Theta}}^{J}$ perturbations, Assumptions \ref{ass:gpi_Lips}, \ref{ass:Second_Order_Quadratic_Growth} and Proposition \ref{prop:ga_tilde_Lips}.

\begin{assumption}\label{ass:gpi_Lips}
Assume that prosumer's $i$ best response $\mathbf{x}_i^{\star}(\mathbf{p})$, is Lipschitz continuous on $\mathbb{P}$ modulus $L_i {\geq} 0$.\frqed
\end{assumption}

\begin{proposition}\label{prop:ga_tilde_Lips}
Assume that $\|\bm{{\Theta}}\|_F \leq \theta^\mathrm{max} < \infty$, $\forall \bm{{\Theta}} \in \mathbb{\Theta}$ and that Assumption \ref{ass:PWL_basis} holds. Then, $\tilde{g}_a(\mathbf{p}{;}\bm{{\Theta}})$ is Lipschitz continuous on $\mathbb{P} \times \mathbb{\Theta}$ modulus $L_a \geq 0$, where,
\begin{equation*}
\begin{split}
    L_a = \max & \{(\lambda^\mathrm{max}{+}p^\mathrm{max}) \theta^\mathrm{max} L_{\phi} {+} \|\mathbf{d}^{\o{}} \|_2 {+} \theta^\mathrm{max} \phi^\mathrm{max}, \\
    & (\lambda^\mathrm{max} {+} p^\mathrm{max}) \phi^\mathrm{max}\}
\end{split}
\end{equation*} 
\end{proposition}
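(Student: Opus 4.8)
The plan is to establish \emph{joint} Lipschitz continuity by decoupling the two arguments of $\tilde{g}_a$ with the triangle inequality, bounding each variation separately, and then combining the two moduli through a maximum. For arbitrary $(\mathbf{p}_1,\bm{\Theta}_1),(\mathbf{p}_2,\bm{\Theta}_2) \in \mathbb{P} \times \mathbb{\Theta}$ I would first write
\begin{equation*}
\begin{split}
    |\tilde{g}_a(\mathbf{p}_1;\bm{\Theta}_1) - \tilde{g}_a(\mathbf{p}_2;\bm{\Theta}_2)| \leq {}
    & |\tilde{g}_a(\mathbf{p}_1;\bm{\Theta}_1) - \tilde{g}_a(\mathbf{p}_2;\bm{\Theta}_1)| \\
    & {} + |\tilde{g}_a(\mathbf{p}_2;\bm{\Theta}_1) - \tilde{g}_a(\mathbf{p}_2;\bm{\Theta}_2)|,
\end{split}
\end{equation*}
so that it suffices to produce a modulus $L_p$ for the price term and $L_{\Theta}$ for the parameter term; equipping $\mathbb{P} \times \mathbb{\Theta}$ with the norm $\|\mathbf{p}\|_2 {+} \|\bm{\Theta}\|_F$ then yields $L_a = \max\{L_p, L_{\Theta}\}$, exactly the claimed constant.

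For the parameter term I would exploit that $\tilde{g}_a$ is affine in $\bm{\Theta}$, so the difference collapses to $-(\bm{\lambda}{-}\mathbf{p}_2)^\mathsf{T}(\bm{\Theta}_1{-}\bm{\Theta}_2)^\mathsf{T}\bm{\phi}(\mathbf{p}_2)$. Cauchy--Schwarz, the submultiplicative bound $\|\mathbf{A}^\mathsf{T}\mathbf{v}\|_2 \leq \|\mathbf{A}\|_F \|\mathbf{v}\|_2$, the basis bound $\|\bm{\phi}(\mathbf{p})\|_2 \leq \phi^\mathrm{max}$ of Assumption \ref{ass:PWL_basis}, and the estimate $\|\bm{\lambda}{-}\mathbf{p}\|_2 \leq \lambda^\mathrm{max}{+}p^\mathrm{max}$ (from the box constraint (\ref{agg_in_con}) together with the boundedness of $\bm{\lambda}$) combine to give $L_{\Theta} = (\lambda^\mathrm{max}{+}p^\mathrm{max})\phi^\mathrm{max}$, the second entry of the maximum.

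The price term is the substantive step. Because $\mathbb{P}$ is a convex box and, under Assumption \ref{ass:PWL_basis}, $\bm{\phi}$ is Fr\'{e}chet differentiable, $\tilde{g}_a(\cdot;\bm{\Theta}_1)$ is $C^1$ in $\mathbf{p}$; I would therefore bound $\|\frac{\partial \tilde{g}_a}{\partial \mathbf{p}}\|_2$ uniformly and invoke the mean value inequality along the segment joining $\mathbf{p}_1$ and $\mathbf{p}_2$ (which stays in $\mathbb{P}$ by convexity). Differentiating $\tilde{g}_a = (\bm{\lambda}{-}\mathbf{p})^\mathsf{T}\mathbf{d}^{\varnothing} - (\bm{\lambda}{-}\mathbf{p})^\mathsf{T}\bm{\Theta}_1^\mathsf{T}\bm{\phi}(\mathbf{p})$ by the product rule produces the three terms $-\mathbf{d}^{\varnothing}$, $\bm{\Theta}_1^\mathsf{T}\bm{\phi}(\mathbf{p})$, and $-(\frac{\partial \bm{\phi}(\mathbf{p})}{\partial \mathbf{p}})^\mathsf{T}\bm{\Theta}_1(\bm{\lambda}{-}\mathbf{p})$, whose norms I would bound by $\|\mathbf{d}^{\varnothing}\|_2$, $\theta^\mathrm{max}\phi^\mathrm{max}$, and $(\lambda^\mathrm{max}{+}p^\mathrm{max})\theta^\mathrm{max}L_{\phi}$ respectively; their sum reproduces $L_p$, the first entry of the maximum.

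The main obstacle is the last of these bounds, $\|\frac{\partial \bm{\phi}(\mathbf{p})}{\partial \mathbf{p}}\|_2 \leq L_{\phi}$: this is precisely where Assumption \ref{ass:PWL_basis} is indispensable, since the Lipschitz modulus of $\bm{\phi}$ controls the operator norm of its Jacobian wherever the latter exists, and Fr\'{e}chet differentiability guarantees existence throughout $\mathbb{P}$. A secondary point needing care is the interpretation of $\lambda^\mathrm{max}$ and $p^\mathrm{max}$ as the operative bounds on the Euclidean norms $\|\bm{\lambda}\|_2$ and $\|\mathbf{p}\|_2$, so that no dimensional $\sqrt{T}$ factor enters $L_a$. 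Once the uniform gradient bound and the affine-difference bound are in place, assembling them via the triangle-inequality split and taking the maximum is routine.
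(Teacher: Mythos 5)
Your proof is correct and produces exactly the claimed constant, but it reaches it by a partly different route than the paper. The shared skeleton is the decoupling of the two arguments together with the sum metric on $\mathbb{P} \times \mathbb{\Theta}$ and the final $\max\{L_p,L_\Theta\}$; your treatment of the $\bm{\Theta}$-variation — exploiting affinity in $\bm{\Theta}$, Cauchy--Schwarz, and $\|\mathbf{A}^\mathsf{T}\mathbf{v}\|_2 \leq \|\mathbf{A}\|_F\|\mathbf{v}\|_2$ — is essentially identical to the paper's and yields the same $(\lambda^\mathrm{max}{+}p^\mathrm{max})\phi^\mathrm{max}$. Where you genuinely diverge is the price term: you bound the gradient $\partial \tilde{g}_a/\partial \mathbf{p}$ uniformly and invoke the mean value inequality along the segment in the convex box $\mathbb{P}$, using the fact that the Lipschitz modulus $L_{\phi}$ dominates the operator norm of the Jacobian of $\bm{\phi}$ wherever it exists. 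The paper never differentiates: it expands $\tilde{g}_a$ into inner products, telescopes $\bm{\Theta}^\mathsf{T}\bm{\phi}(\mathbf{p}) - \bm{\Theta}'^{\mathsf{T}}\bm{\phi}(\mathbf{p}')$ and $\mathbf{p}^\mathsf{T}\bm{\Theta}^\mathsf{T}\bm{\phi}(\mathbf{p}) - \mathbf{p}'^{\mathsf{T}}\bm{\Theta}'^{\mathsf{T}}\bm{\phi}(\mathbf{p}')$ by adding and subtracting cross terms, and applies the Lipschitz estimate $\|\bm{\phi}(\mathbf{p})-\bm{\phi}(\mathbf{p}')\|_2 \leq L_{\phi}\|\mathbf{p}-\mathbf{p}'\|_2$ directly, which reproduces the same $L_1$ and $L_2$. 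The trade-off: the paper's purely metric argument would survive even if $\bm{\phi}$ were merely Lipschitz and nondifferentiable at kinks (e.g.\ a genuinely piecewise-linear basis), whereas your route leans on the differentiability granted by Assumption \ref{ass:PWL_basis}; in exchange, your gradient-bound formulation makes the origin of each summand in $L_1$ more transparent. Your reading of $\lambda^\mathrm{max}$ and $p^\mathrm{max}$ as bounds on the Euclidean norms $\|\bm{\lambda}\|_2$ and $\|\mathbf{p}\|_2$ (so that no $\sqrt{T}$ factor appears) matches the convention the paper's own proof of this proposition adopts, so there is no discrepancy on that point either.
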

\begin{proof}
The proof is a direct consequence of the above results and thus is omitted. \frQED
%See Appendix \ref{sec:proof_prop_1}. \frQED
\end{proof}

\begin{assumption}\label{ass:Second_Order_Quadratic_Growth}
Assume that $\exists$ $\delta>0$, such that,
\begin{equation*}
    \tilde{g}_a(\mathbf{p}{;}\bm{\Theta_c}^{\star}) \leq \max_{\mathbf{p} \in \mathbb{P}} \tilde{g}_a(\mathbf{p}{;}\bm{\Theta_c}^{\star}) - \delta \cdot \textrm{dist}(\mathbf{p}, \mathbb{M}^{s})^2,
\end{equation*}
\noindent
$\forall \mathbf{p} \in \mathbb{P} \cap \mathbb{W}$, where $\mathbb{W}$ is an open, convex neighborhood of $\mathbb{M}^{s}$ in $\mathbb{P}$. The problem $\max_{\mathbf{p} \in \mathbb{P}} \tilde{g}_a(\mathbf{p}{;}\bm{\Theta_c}^{\star})$ is bounded on $\mathbb{P}$ and $\textrm{dist}(\mathbf{p}, \mathbb{M}^{s}) {=} \min_{\mathbf{p} \in \mathbb{P} \cap \mathbb{W}} \{{\|\mathbf{p} {-} \mathbf{p}^s\|_2: \mathbf{p}^s \in \mathbb{M}^{s}} \}$ is the minimum distance between $\mathbf{p}$ and the set of optimal solutions $\mathbb{M}^{s}$ of the optimization problem $\max_{\mathbf{p} \in \mathbb{P}} \tilde{g}_a(\mathbf{p}{;}\bm{\Theta_c}^{\star})$, i.e., $\mathbb{M}^{s}$ contains the multiple solutions $\mathbf{p}^s$ of (\ref{agg_learning_inf}), which are the same solutions as those of (\ref{agg_obj}), (\ref{agg_in_con}). This standard assumption is known as the local Quadratic Growth (QG) condition.\frqed
\end{assumption}

Note that the set of optimal solutions $\mathbb{M}^{s}$ of (\ref{agg_learning_inf}) in Assumption \ref{ass:Second_Order_Quadratic_Growth}, has an unknown topology in the neighborhood $\mathbb{W}$. Since the forms of best responses and $\mathbf{d}^{\star}(\mathbf{p})$ are unknown (Remark \ref{rem:KKT_no_solve}) and without any other assumptions on the structure of $\bm{\phi}(\mathbf{p})$ and $\tilde{g}_a(\mathbf{p}{;}\bm{\Theta_c}^{\star})$, it is impossible to know if $\mathbb{M}^{s}$ contains one maximum, multiple isolated maxima or multiple non-isolated maxima. It was shown, however, that $\mathbb{M}^{s}$ is non-empty. Nevertheless, the QG condition locally holds in every case, according to \cite{shapiro1992perturbation, bonnans1995second, bonnans1995quadratic}. If $\mathbb{M}^{s}$ is singleton, the QG condition becomes $\tilde{g}_a(\mathbf{p}{;}\bm{\Theta_c}^{\star}) \leq \tilde{g}_a(\mathbf{p}^s{;}\bm{\Theta_c}^{\star}) {-} \delta \cdot \|\mathbf{p} {-} \mathbf{p}^s\|_2^2$, where $\mathbf{p}^{s}$ is locally unique \cite{shapiro1992perturbation}. Note that Assumption \ref{ass:Second_Order_Quadratic_Growth} is ensured if Second Order Sufficient Conditions hold locally.

\begin{figure*}[!ht]
    \centering
    \includegraphics[width=0.9\textwidth]{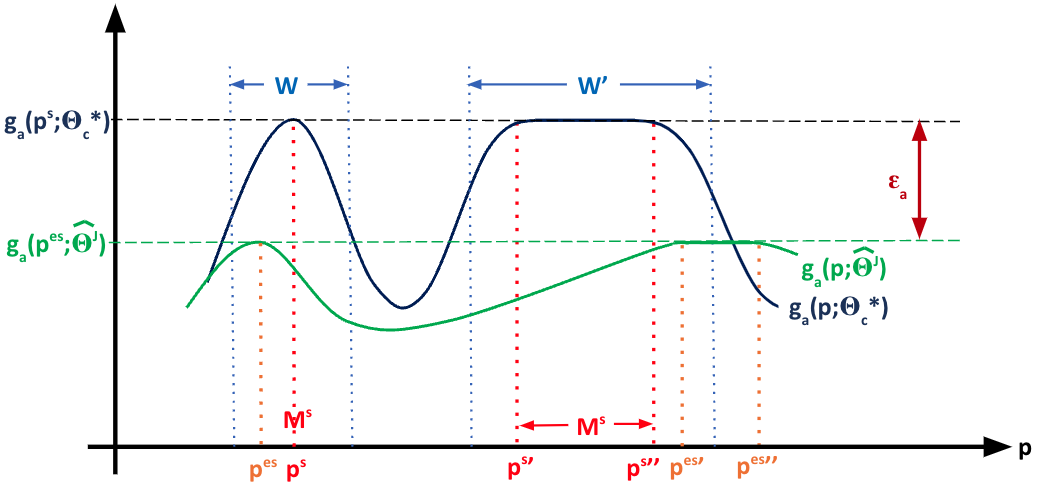}
    \caption{Original (\ref{agg_learning_inf}) and perturbed (\ref{agg_learning_appr}) DR-aggregator's problems for different parameter values for Theorem \ref{thm:model_mismatch}.}
    \label{fig:ga_ga_tilde}
\end{figure*}

\begin{theorem}\label{thm:model_mismatch}
Consider the basis $\bm{\phi}(\mathbf{p}) \in \mathbb{R}^K$, $K < \infty$, under Assumption \ref{ass:PWL_basis} and two errors in the estimation of the prosumers' sum of best responses $\mathbf{d}^{\star} \in \mathbb{R}^T$. First, a bounded approximation error, $\bm{\epsilon}_m(\mathbf{p}) \in \mathbb{R}^T$, with $\|\bm{\epsilon}_m(\mathbf{p})\|_2 \leq \epsilon_m^\mathrm{max} < \infty$, $\forall \mathbf{p} \in [0, p^\mathrm{max}]$, due to $\mathbf{d}^{\star}(\mathbf{p}) \notin \textrm{span}\{\bm{\phi}(\mathbf{p})\}$. Second, a bounded learning error $\bm{\tilde{\Theta}}^{J^{\mathsf{T}}} \bm{\phi}(\mathbf{p}) \in \mathbb{R}^T$, due to the original approximation error (Theorem \ref{thm:UUB_PE_with_wo_appr_error}). Under these errors, Algorithm \ref{algo:stackelberg_algo_online}, converges to an $\epsilon$-Stackelberg solution $(\mathbf{x}_i^{\epsilon s},\mathbf{p}^{\epsilon s})$ of the game (\ref{pros_obj}), (\ref{pros_eq_con}), (\ref{pros_in_con}), (\ref{agg_obj}), (\ref{agg_in_con}), as $J {\rightarrow} \infty$, for which the players' equilibrium utilities are bounded as follows,
\begin{equation}
    \tilde{g}_a(\mathbf{p}^{\epsilon s}{;}\bm{\hat{\Theta}}^J) {-} \epsilon_a {\leq} g_a(\mathbf{d}^{\star}(\mathbf{p}^s){,}\mathbf{p}^s) {\leq} \tilde{g}_a(\mathbf{p}^{\epsilon s}{;}\bm{\hat{\Theta}}^J) {+} \epsilon_a{,}
\end{equation}
\begin{equation}
    g_{p_i}(\mathbf{x}_i^{\epsilon s}{,}\mathbf{p}^{\epsilon s}) {-} \epsilon_{p_i} {\leq} g_{p_i}(\mathbf{x}_i^{\star}(\mathbf{p}^{s}){,}\mathbf{p}^{s}) {\leq} g_{p_i}(\mathbf{x}_i^{\epsilon s}{,}\mathbf{p}^{\epsilon s}) {+} \epsilon_{p_i},
\end{equation}
\noindent
where,
\begin{equation}
\begin{split}
    \epsilon_a & = L_a ((\delta^{-1}{+}2\delta^{{-}1/2}L_a) (\sqrt{T}(\lambda^\mathrm{max}{+}p^\mathrm{max})\\
    & (\eta \epsilon_m^\mathrm{max} L_{\phi} {+} \theta_r^\mathrm{max} L_{\partial \phi}){+} (\eta\phi^\mathrm{max}{+}1) \epsilon_m^\mathrm{max}){+}\theta_r^\mathrm{max} {+} \eta\epsilon_m^\mathrm{max}),
\end{split}
\label{eq:eps_a}
\end{equation}
\begin{equation}
\begin{split}
    \epsilon_{p_i} & = L_{p_i} (L_i {+} 1) (\delta^{-1}{+}2\delta^{{-}1/2}L_a) (\sqrt{T}(\lambda^\mathrm{max}{+}p^\mathrm{max}) \qquad \quad\\
    & (\eta \epsilon_m^\mathrm{max} L_{\phi} {+} \theta_r^\mathrm{max} L_{\partial \phi}){+} (\eta\phi^\mathrm{max}{+}1) \epsilon_m^\mathrm{max}),
\end{split}
\label{eq:eps_p_i}
\end{equation}
and $\tilde{g}_a(\mathbf{p}^{\epsilon s}{;}\bm{\hat{\Theta}}^J)$, $g_{p_i}(\mathbf{x}_i^{\epsilon s},\mathbf{p}^{\epsilon s})$ are the objectives of (\ref{agg_learning_appr}), (\ref{pros_learning_appr}) for which Assumptions \ref{ass:gpi_Lips}, \ref{ass:Second_Order_Quadratic_Growth} and Proposition \ref{prop:ga_tilde_Lips} hold. 
\end{theorem}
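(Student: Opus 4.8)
The plan is to prove both two‑sided inequalities by controlling the absolute utility gaps $|g_a(\mathbf{d}^{\star}(\mathbf{p}^s),\mathbf{p}^s) - \tilde{g}_a(\mathbf{p}^{\epsilon s};\bm{\hat{\Theta}}^J)|$ and $|g_{p_i}(\mathbf{x}_i^{\star}(\mathbf{p}^{s}),\mathbf{p}^{s}) - g_{p_i}(\mathbf{x}_i^{\epsilon s},\mathbf{p}^{\epsilon s})|$ and showing they are at most $\epsilon_a$ and $\epsilon_{p_i}$. The first observation is that, by the identity $\mathbf{d}^{\star}(\mathbf{p}) = \bm{\Theta_c}^{{\star}\mathsf{T}}\bm{\phi}(\mathbf{p})$ recorded in Assumption \ref{ass:bounded_theta_r}, and the fact (Assumption \ref{ass:Second_Order_Quadratic_Growth}) that the exact price $\mathbf{p}^s$ lies in $\mathbb{M}^{s} = \argmax_{\mathbf{p}\in\mathbb{P}}\tilde{g}_a(\mathbf{p};\bm{\Theta_c}^{\star})$, the unknown true utility satisfies $g_a(\mathbf{d}^{\star}(\mathbf{p}^s),\mathbf{p}^s) = \tilde{g}_a(\mathbf{p}^s;\bm{\Theta_c}^{\star}) = \max_{\mathbf{p}\in\mathbb{P}}\tilde{g}_a(\mathbf{p};\bm{\Theta_c}^{\star})$. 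Everything then reduces to a perturbation analysis comparing the unperturbed problem (coefficients $\bm{\Theta_c}^{\star}$, solution set $\mathbb{M}^s$) with the perturbed learned problem (\ref{agg_learning_appr}) (coefficients $\bm{\hat{\Theta}}^J$, solution $\mathbf{p}^{\epsilon s}$), and the central quantity to bound is the solution displacement $\textrm{dist}(\mathbf{p}^{\epsilon s},\mathbb{M}^{s})$.

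For the aggregator I would insert the intermediate term $\tilde{g}_a(\mathbf{p}^{\epsilon s};\bm{\Theta_c}^{\star})$ and split
\[
g_a(\mathbf{d}^{\star}(\mathbf{p}^s),\mathbf{p}^s) - \tilde{g}_a(\mathbf{p}^{\epsilon s};\bm{\hat{\Theta}}^J)
= \underbrace{[\tilde{g}_a(\mathbf{p}^s;\bm{\Theta_c}^{\star}) - \tilde{g}_a(\mathbf{p}^{\epsilon s};\bm{\Theta_c}^{\star})]}_{\text{displacement}}
+ \underbrace{[\tilde{g}_a(\mathbf{p}^{\epsilon s};\bm{\Theta_c}^{\star}) - \tilde{g}_a(\mathbf{p}^{\epsilon s};\bm{\hat{\Theta}}^J)]}_{\text{coefficient mismatch}} .
\]
The coefficient‑mismatch term evaluates one objective at a fixed point under two parameter matrices, so Proposition \ref{prop:ga_tilde_Lips} (Lipschitz continuity of $\tilde{g}_a$ in $\bm{\Theta}$, modulus $L_a$) bounds it by $L_a\|\bm{\Theta_c}^{\star} - \bm{\hat{\Theta}}^J\|_F$; by the triangle inequality $\|\bm{\Theta_c}^{\star} - \bm{\hat{\Theta}}^J\|_F \le \|\bm{\Theta_r}\|_F + \|\bm{\tilde{\Theta}}^J\|_F \le \theta_r^\mathrm{max} + \eta\epsilon_m^\mathrm{max}$, using Assumption \ref{ass:bounded_theta_r} and part~2 of Theorem \ref{thm:UUB_PE_with_wo_appr_error}. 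This reproduces the $L_a(\theta_r^\mathrm{max}+\eta\epsilon_m^\mathrm{max})$ summand of $\epsilon_a$. The displacement term I would bound, again by Proposition \ref{prop:ga_tilde_Lips} (Lipschitz in $\mathbf{p}$), by $L_a\,\textrm{dist}(\mathbf{p}^{\epsilon s},\mathbb{M}^{s})$ after selecting the nearest $\mathbf{p}^s\in\mathbb{M}^s$.

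The crux is bounding $\textrm{dist}(\mathbf{p}^{\epsilon s},\mathbb{M}^{s})$ through the local quadratic‑growth (QG) condition of Assumption \ref{ass:Second_Order_Quadratic_Growth} and the Lipschitzian‑stability machinery of \cite{shapiro1992perturbation,bonnans1995second,bonnans1995quadratic}. I would combine the QG inequality $\delta\,\textrm{dist}(\mathbf{p}^{\epsilon s},\mathbb{M}^s)^2 \le \max_{\mathbf{p}}\tilde{g}_a(\mathbf{p};\bm{\Theta_c}^{\star}) - \tilde{g}_a(\mathbf{p}^{\epsilon s};\bm{\Theta_c}^{\star})$ with the optimality of $\mathbf{p}^{\epsilon s}$ for $\tilde{g}_a(\cdot;\bm{\hat{\Theta}}^J)$, so that the right‑hand side is controlled by the mismatch between the two objectives measured both in value and in gradient. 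The key estimate is the uniform gradient mismatch
\[
\sup_{\mathbf{p}\in\mathbb{P}} \big\| \nabla_{\mathbf{p}} \tilde{g}_a(\mathbf{p};\bm{\Theta_c}^{\star}) - \nabla_{\mathbf{p}} \tilde{g}_a(\mathbf{p};\bm{\hat{\Theta}}^J) \big\|_2 \le \Delta_1 ,
\]
with $\Delta_1 = \sqrt{T}(\lambda^\mathrm{max}+p^\mathrm{max})(\eta\epsilon_m^\mathrm{max}L_{\phi}+\theta_r^\mathrm{max}L_{\partial\phi}) + (\eta\phi^\mathrm{max}+1)\epsilon_m^\mathrm{max}$, obtained by differentiating $(\bm{\lambda}-\mathbf{p})^\mathsf{T}(\mathbf{d}^{\varnothing}-\bm{\Theta}^\mathsf{T}\bm{\phi}(\mathbf{p}))$, splitting the coefficient error $\bm{\Theta_c}^{\star}-\bm{\hat{\Theta}}^J = \bm{\Theta_r}+\bm{\tilde{\Theta}}^J$ into its approximation and learning parts, and bounding $\|\bm{\phi}\|_2\le\phi^\mathrm{max}$, the derivative norm by $L_{\phi}$ and its variation by $L_{\partial\phi}$ (Assumption \ref{ass:PWL_basis}), together with $\|\bm{\lambda}-\mathbf{p}\|_2\le\sqrt{T}(\lambda^\mathrm{max}+p^\mathrm{max})$ and the error bounds $\theta_r^\mathrm{max},\eta\epsilon_m^\mathrm{max},\epsilon_m^\mathrm{max}$. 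Feeding this into the QG/second‑order perturbation estimate then yields $\textrm{dist}(\mathbf{p}^{\epsilon s},\mathbb{M}^{s}) \le (\delta^{-1}+2\delta^{-1/2}L_a)\Delta_1$; multiplying by $L_a$ and adding the coefficient‑mismatch bound recovers $\epsilon_a$, and since the argument controls an absolute value it delivers both inequalities at once.

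For the prosumers I would use that $g_{p_i}$ is the \emph{same} objective in the exact and perturbed problems and that $\mathbf{x}_i^{\epsilon s}=\mathbf{x}_i^{\star}(\mathbf{p}^{\epsilon s})$, so only the price argument differs. Writing $L_{p_i}$ for the Lipschitz modulus of $g_{p_i}$ on the compact action/price sets (finite, since $g_{p_i}$ is a polynomial of bounded variables, analogously to Proposition \ref{prop:ga_tilde_Lips}) and combining joint Lipschitz continuity of $g_{p_i}$ with Lipschitz continuity of the best response (Assumption \ref{ass:gpi_Lips}, modulus $L_i$) gives
\[
|g_{p_i}(\mathbf{x}_i^{\star}(\mathbf{p}^{s}),\mathbf{p}^{s}) - g_{p_i}(\mathbf{x}_i^{\epsilon s},\mathbf{p}^{\epsilon s})| \le L_{p_i}(L_i+1)\|\mathbf{p}^s-\mathbf{p}^{\epsilon s}\|_2 \le L_{p_i}(L_i+1)\,\textrm{dist}(\mathbf{p}^{\epsilon s},\mathbb{M}^{s}),
\]
and substituting the same displacement bound reproduces $\epsilon_{p_i}$. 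The convergence claim as $J\to\infty$ follows because part~2 of Theorem \ref{thm:UUB_PE_with_wo_appr_error} makes $\eta\epsilon_m^\mathrm{max}$ the limiting value of $\|\bm{\tilde{\Theta}}^J\|_F$, so $\epsilon_a$ and $\epsilon_{p_i}$ are the asymptotic constants. The main obstacle is the QG‑based displacement bound itself: one must invoke the correct second‑order perturbation theorem so that the growth modulus $\delta$ yields the combined $\delta^{-1}+2\delta^{-1/2}L_a$ factor (the $\delta^{-1}$ term from the first‑order gradient mismatch, the $\delta^{-1/2}$ term from quadratic growth), and one must ensure that the unknown, possibly non‑isolated topology of $\mathbb{M}^{s}$ does not invalidate the estimate — which is handled by restricting to the neighborhood $\mathbb{W}$ as in Assumption \ref{ass:Second_Order_Quadratic_Growth}.
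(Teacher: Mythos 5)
Your proposal is correct and follows essentially the same route as the paper's proof: the identity $g_a(\mathbf{d}^{\star}(\mathbf{p}^s),\mathbf{p}^s)=\tilde{g}_a(\mathbf{p}^s;\bm{\Theta_c}^{\star})$, Lipschitz continuity of $\tilde{g}_a$ in $(\mathbf{p},\bm{\Theta})$ splitting the gap into a displacement term $L_a\,\textrm{dist}(\mathbf{p}^{\epsilon s},\mathbb{M}^s)$ plus a coefficient-mismatch term $L_a(\theta_r^\mathrm{max}+\eta\epsilon_m^\mathrm{max})$, the QG condition with Shapiro's perturbation lemma and the gradient-mismatch bound $\Delta_1$ to control the displacement, and the $L_{p_i}(L_i+1)$ argument for the prosumers. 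The only differences are presentational (you split the two Lipschitz contributions explicitly rather than using the joint metric, and you leave the G\^{a}teaux differentiability of the objective difference implicit in your gradient computation), so no substantive gap remains.
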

\begin{proof}
Since, the dynamics of $\bm{\hat{\Theta}}^J$ in (\ref{RLS_theta}), are Uniformly Ultimately Bounded (UUB) (see proof of Theorem \ref{thm:UUB_PE_with_wo_appr_error}), it holds by the definition of UUB \cite{lewis2020neural} that  $\|\bm{\hat{\Theta}}^J\|_F {\leq} \theta^\mathrm{max}$, $\forall J {\geq} J_\mathrm{min}$, with $0 {<} J_\mathrm{min} {<} \infty$, $\theta^\mathrm{max} {>} 0$, given that $\|\bm{\hat{\Theta}}^0\|_F$ was bounded upon initialization. Under Proposition \ref{prop:ga_tilde_Lips}, it holds that $\tilde{g}_a(\mathbf{p}{;}\bm{{\Theta}})$ is Lipschitz continuous on $\mathbb{P} \times \mathbb{\Theta}$ and hence,
\begin{equation*}
 \begin{split}
    & d'(\tilde{g}_a(\mathbf{p}{;}\bm{\Theta}) {,} \tilde{g}_a(\mathbf{p}'{;}\bm{\Theta}'))
    \leq L_a d((\mathbf{p}{,} \bm{\Theta}){,}(\mathbf{p}'{,}\bm{\Theta}')){,}\\
    & \forall (\mathbf{p}{,} \bm{\Theta}){,}(\mathbf{p}'{,}\bm{\Theta}') \in \mathbb{P} \times \mathbb{\Theta},
\end{split}   
\end{equation*}
\noindent
where $(d',\mathbb{R})$ and $(d,\mathbb{P} \times \mathbb{\Theta})$ are metric spaces and $\tilde{g}_a: \mathbb{P} \times \mathbb{\Theta} \rightarrow \mathbb{R}$. Now, let Assumption \ref{ass:Second_Order_Quadratic_Growth} hold on an open, convex neighborhood $\mathbb{W}$ of $\mathbb{M}^{s}$. Then, $\tilde{g}_a$ will also be locally Lipschitz continuous on $\mathbb{P} \cap \mathbb{W}$ and for a properly defined distance of point-to-set, that satisfies the distance properties on metric spaces \cite{debnath2005introduction}, similar to the one in Assumption \ref{ass:Second_Order_Quadratic_Growth},
\begin{equation*}
\begin{split}
    & |\tilde{g}_a(\mathbf{p}^s{;}\bm{\Theta_c}^{\star}) {-} \tilde{g}_a(\mathbf{p}'{;}\bm{\Theta}')|
    \leq L_a (\textrm{dist}(\mathbf{p}', \mathbb{M}^{s})
    {+} \|\bm{\Theta_c}^{\star}{-}\bm{\Theta}'\|_2){,}\\
    & \forall (\mathbf{p}^s{,} \bm{\Theta_c}^{\star}){,}(\mathbf{p}'{,}\bm{\Theta}') \in \mathbb{P} \cap \mathbb{W} \times \mathbb{\Theta}.
\end{split}
\end{equation*}
\noindent
Note that $\bm{\Theta_c}^{\star}$ is singleton from (\ref{LS_Problem}). Consider now, that in the neighborhood $\mathbb{P} \cap \mathbb{W}$ where QG condition holds for $\tilde{g}_a(\mathbf{p}{;}\bm{\Theta_c}^{\star})$, the function $\tilde{g}_a(\mathbf{p}{;}\bm{\hat{\Theta}}^J)$ has at least one maximum, i.e., there is at least one $\mathbf{p}^{\epsilon s} \in \mathbb{P} \cap \mathbb{W}$ which solves (\ref{agg_learning_appr}) to optimality. A similar assumption was made in section 3 of \cite{shapiro1992perturbation}. Then, the previous definition for $(\mathbf{p}'{,}\bm{\Theta}'){=} (\mathbf{p}^{\epsilon s}{,}\bm{\hat{\Theta}}^J)$ is,
\begin{equation*}
\begin{split}
    & |\tilde{g}_a(\mathbf{p}^s{;}\bm{\Theta_c}^{\star}) {-} \tilde{g}_a(\mathbf{p}^{\epsilon s}{;}\bm{\hat{\Theta}}^J)|
    {\leq} L_a (\textrm{dist}(\mathbf{p}^{\epsilon s}, \mathbb{M}^{s})
    {+} \|\bm{\Theta_c}^{\star}{-}\bm{\hat{\Theta}}^J\|_2){,}\\
    % & \forall (\mathbf{p}^s{,} \bm{\Theta_c}^{\star}){,}(\mathbf{p}^{\epsilon s}{,}\bm{\hat{\Theta}}^J) \in \mathbb{P} \cap \mathbb{W} \times \mathbb{\Theta}.
\end{split}
\end{equation*}
\noindent
and since $g_a(\mathbf{d}^{\star}(\mathbf{p}^s), \mathbf{p}^s) {=} \tilde{g}_a(\mathbf{p}^s;\bm{\Theta_c}^{\star})$,
\begin{equation*}
\begin{split}
    |g_a & (\mathbf{d}^{\star}(\mathbf{p}^s){,} \mathbf{p}^s) {-} \tilde{g}_a(\mathbf{p}^{\epsilon s}{;}\bm{\hat{\Theta}}^J)| {\leq} \\
    & {\leq} L_a (\textrm{dist}(\mathbf{p}^{\epsilon s}{,} \mathbb{M}^{s})
    {+} \|\bm{\Theta_c}^{\star}{-}\bm{\hat{\Theta}}^J\|_2){,}\\
    & {\leq} L_a (\textrm{dist}(\mathbf{p}^{\epsilon s}{,} \mathbb{M}^{s})
    {+} \|\bm{\Theta_c}^{\star}{-}\bm{\hat{\Theta}}^J\|_F){,}\\
    % & \forall (\mathbf{p}^s{,} \bm{\Theta_c}^{\star}){,}(\mathbf{p}^{\epsilon s}{,}\bm{\hat{\Theta}}^J) \in \mathbb{P} \cap \mathbb{W} \times \mathbb{\Theta}{,}
    % & \forall (\mathbf{p}^s{,} \bm{\Theta_c}^{\star}){,}(\mathbf{p}^{\epsilon s}{,}\bm{\hat{\Theta}}^J) \in \mathbb{P} \cap \mathbb{W} \times \mathbb{\Theta}{,}
\end{split}
\end{equation*}
% \noindent
% or,
% \begin{equation*}
% \begin{split}
%     & |g_a(\mathbf{d}^{\star}(\mathbf{p}^s){,} \mathbf{p}^s) {-} \tilde{g}_a(\mathbf{p}^{\epsilon s}{;}\bm{\hat{\Theta}}^J)| 
%     {\leq} L_a (\textrm{dist}(\mathbf{p}^{\epsilon s}{,} \mathbb{M}^{s})
%     {+} \|\bm{\Theta_c}^{\star}{-}\bm{\hat{\Theta}}^J\|_F){,}\\
%     & \forall (\mathbf{p}^s{,} \bm{\Theta_c}^{\star}){,}(\mathbf{p}^{\epsilon s}{,}\bm{\hat{\Theta}}^J) \in \mathbb{P} \cap \mathbb{W} \times \mathbb{\Theta}{,}
% \end{split}
% \end{equation*}
\noindent
where $\mathbf{d}^{\star}(\mathbf{p}^s){=}\bm{\Theta_c}^{{\star}^{\mathsf{T}}} \bm{\phi}(\mathbf{p}^s)$ and $\|A\|_2 {\leq} \|A\|_F$ for any $A \in \mathbb{R}^{K \times T}$ (\textsection 5.2 in \cite{meyer2000matrix}). From Lemma \ref{lemma:approx_error} and Theorem \ref{thm:UUB_PE_with_wo_appr_error},
\begin{equation}
\begin{split}
    |g_a(&\mathbf{d}^{\star}(\mathbf{p}^s),\mathbf{p}^s) {-} \tilde{g}_a(\mathbf{p}^{\epsilon s}{;}\bm{\hat{\Theta}}^J)| \leq \\
    \leq & L_a (\textrm{dist}(\mathbf{p}^{\epsilon s}, \mathbb{M}^{s})
    {+} \|\bm{\Theta_c}^{\star}{-}\bm{\hat{\Theta}}^J\|_F)\\
    % = & L_a (\textrm{dist}(\mathbf{p}^{\epsilon s}, \mathbb{M}^{s})
    % {+} \|\bm{\Theta_c}^{\star}{-} \mathbf{\Theta^{\star}} {+}\mathbf{\Theta^{\star}}{-}\bm{\hat{\Theta}}^J\|_F)\\
    \leq & L_a (\textrm{dist}(\mathbf{p}^{\epsilon s},\mathbb{M}^{s}) {+} \|\bm{\Theta_c}^{\star}{-} \mathbf{\Theta^{\star}}\|_F {+} \|\mathbf{\Theta^{\star}}{-}\bm{\hat{\Theta}}^J\|_F \\
    = & L_a (\textrm{dist}(\mathbf{p}^{\epsilon s},\mathbb{M}^{s}) {+} \|\bm{\Theta_c}^{\star}{-} \mathbf{\Theta^{\star}}\|_F {+} \|\mathbf{\tilde{\Theta}}^J\|_F),\\
    & \forall (\mathbf{p}^s{,} \bm{\Theta_c}^{\star}){,}(\mathbf{p}^{\epsilon s}{,}\bm{\hat{\Theta}}^J) \in \mathbb{P} \cap \mathbb{W} \times \mathbb{\Theta}.
\end{split}
\label{ga_tilde_Lip_bounds}
\end{equation}
\noindent
Upper bounds of (\ref{ga_tilde_Lip_bounds}) will be extracted next, as functions of the approximation error. For the prosumers, one can prove, similarly to Proposition \ref{prop:ga_tilde_Lips}, that $g_{p_i}(\mathbf{x}_i{,}\mathbf{p})$ is Lipschitz continuous on $\mathbb{X}_i \times \mathbb{P}$ modulus $L_{p_i} {\geq} 0$, since $0{<}u_i{<}\infty$ from section \ref{sec:prob_form} and $\|\mathbf{x}_i^{\o{}}\|_2{<}\infty$ from Assumption \ref{ass:pros_sched_demand}. Therefore,
\begin{equation*}
\begin{split}
    & d'(g_{p_i}(\mathbf{x}_i,\mathbf{p}) {,} g_{p_i}(\mathbf{x}_i',\mathbf{p}'))
    \leq L_{p_i} d((\mathbf{x}_i{,}\mathbf{p}){,}(\mathbf{x}_i'{,}\mathbf{p}')){,}\\
    & \forall (\mathbf{x}_i{,}\mathbf{p}), (\mathbf{x}_i'{,}\mathbf{p}') \in \mathbb{X}_i \times \mathbb{P},\\
\end{split}
\end{equation*}
\noindent
and locally on $\mathbb{X}_i \times \mathbb{P} \cap \mathbb{W}$, as before, for some $\mathbf{p}^{\epsilon s} \in \mathbb{P} \cap \mathbb{W}$,
\begin{equation*}
    |g_{p_i}(\mathbf{x}_i^{s}{,}\mathbf{p}^{s}) {-}  g_{p_i}(\mathbf{x}_i^{\epsilon s}{,}\mathbf{p}^{\epsilon s})| 
    {\leq} L_{p_i} (\|\mathbf{x}_i^s {-} \mathbf{x}_i^{\epsilon s}\|_2 {+}\textrm{dist}(\mathbf{p}^{\epsilon s}{,} \mathbb{M}^{s})){,}
\end{equation*}
\noindent
or under Assumption \ref{ass:gpi_Lips},
\begin{equation}
\begin{split}
    |g_{p_i}(&\mathbf{x}_i^{\star}(\mathbf{p}^{s}),\mathbf{p}^{s}) {-}  g_{p_i}(\mathbf{x}_i^{\epsilon s},\mathbf{p}^{\epsilon s})| \leq  \\
    {\leq} & L_{p_i} (\|\mathbf{x}_i^{\star}(\mathbf{p}^s) {-} \mathbf{x}_i^{\star}(\mathbf{p}^{\epsilon s})\|_2 {+} \textrm{dist}(\mathbf{p}^{\epsilon s}{,} \mathbb{M}^{s})) \\
    {\leq} & L_{p_i} (L_i \cdot \textrm{dist}(\mathbf{p}^{\epsilon s}{,} \mathbb{M}^{s}) {+} \textrm{dist}(\mathbf{p}^{\epsilon s}{,} \mathbb{M}^{s})) \\
    {=} & L_{p_i} (L_i {+} 1) \textrm{dist}(\mathbf{p}^{\epsilon s}{,} \mathbb{M}^{s}), \\
    & \forall (\mathbf{x}_i^{\star}(\mathbf{p}^s){,}\mathbf{p}^s){,}(\mathbf{x}_i^{\star}(\mathbf{p}^{\epsilon s}){,}\mathbf{p}^{\epsilon s}) \in \mathbb{X}_i \times \mathbb{P} \cap \mathbb{W}.
\label{gpi_tilde_Lip_bounds}
\end{split}
\end{equation}
\noindent
The stability of optimization problem solutions to parameter perturbation and upper bounds for $\textrm{dist}(\mathbf{p}^{\epsilon s}{,} \mathbb{M}^{s})$ were provided in \cite{shapiro1992perturbation}. To use Lemma 2.1 and section 3 of \cite{shapiro1992perturbation}, we must show that the difference between objective (\ref{agg_learning_appr}) and objective (\ref{agg_learning_inf}) is G\^{a}teaux differentiable on $\mathbb{W}$. It holds,
\begin{equation*}
\begin{split}
    \mathbf{d}^{\star}(\mathbf{p}) & = \bm{\Theta_c}^{{\star}^{\mathsf{T}}} \bm{\phi}(\mathbf{p}) = \mathbf{\Theta^{\star}}^{\mathsf{T}} \bm{\phi}(\mathbf{p}) + \bm{\Theta_r}^{\mathsf{T}} \bm{\phi}(\mathbf{p}) \\
    & =  \bm{\tilde{\Theta}}^{J^{\mathsf{T}}} \bm{\phi}(\mathbf{p}) + \bm{\hat{\Theta}}^{J^{\mathsf{T}}} \bm{\phi}(\mathbf{p}) + \bm{\Theta_r}^{\mathsf{T}} \bm{\phi}(\mathbf{p}) \\
    & = \bm{\tilde{\Theta}}^{J^{\mathsf{T}}} \bm{\phi}(\mathbf{p}) + \bm{\hat{\Theta}}^{J^{\mathsf{T}}} \bm{\phi}(\mathbf{p}) + \bm{\epsilon}_m(\mathbf{p}),
\end{split}
\label{d_star_theta_decomposition}
\end{equation*}
% \noindent
% and therefore,
\begin{equation*}
\begin{split}
    \tilde{g}_a(\mathbf{p}{;}\bm{\hat{\Theta}}^J) {=} & (\bm{\lambda} {-} \mathbf{p})^\mathsf{T} (\mathbf{d}^{\o{}}{-}\bm{\hat{\Theta}}^{J^{\mathsf{T}}} \bm{\phi}(\mathbf{p}))\\
    {=} & (\bm{\lambda} {-} \mathbf{p})^\mathsf{T} (\mathbf{d}^{\o{}}{-}\bm{\hat{\Theta}}^{J^{\mathsf{T}}} \bm{\phi}(\mathbf{p}) {-} \bm{\Theta_c}^{{\star}^{\mathsf{T}}} \bm{\phi}(\mathbf{p}) {+} \bm{\Theta_c}^{{\star}^{\mathsf{T}}} \bm{\phi}(\mathbf{p}))\\
    % {=} & (\bm{\lambda} {-} \mathbf{p})^\mathsf{T} (\mathbf{d}^{\o{}}{-}\bm{\Theta_c}^{{\star}^{\mathsf{T}}} \bm{\phi}(\mathbf{p})) \\ 
    % & {+} (\bm{\lambda} {-} \mathbf{p})^\mathsf{T} 
    % (\bm{\Theta_c}^{{\star}^{\mathsf{T}}} \bm{\phi}(\mathbf{p}) {-} \bm{\hat{\Theta}}^{J^{\mathsf{T}}} \bm{\phi}(\mathbf{p}))\\
    {=} & (\bm{\lambda} {-} \mathbf{p})^\mathsf{T} (\mathbf{d}^{\o{}}{-}\bm{\Theta_c}^{{\star}^{\mathsf{T}}} \bm{\phi}(\mathbf{p}))
    {+} (\bm{\lambda} {-} \mathbf{p})^\mathsf{T} (\bm{\tilde{\Theta}}^J {+} \bm{\Theta_r})^{\mathsf{T}} \bm{\phi}(\mathbf{p})\\
    {=} & \tilde{g}_a(\mathbf{p}{;}\bm{\Theta_c}^{\star}) {+} h(\mathbf{p}{;}\bm{\tilde{\Theta}}^J{+}\bm{\Theta_r}),
\end{split}
\end{equation*}
\noindent
where if $\bm{\tilde{\Theta}}^J{+}\bm{\Theta_r}{=}0$, then $\bm{\Theta}^{\star}{-}\bm{\hat{\Theta}}^J{+}\bm{\Theta_c}^{\star}{-}\bm{\Theta}^{\star}{=}0$, or $\bm{\hat{\Theta}}^J{=}\bm{\Theta_c}^{\star}$. In that case $\tilde{g}_a(\mathbf{p}{;}\bm{\hat{\Theta}}^J){=}\tilde{g}_a(\mathbf{p}{;}\bm{\Theta_c}^{\star})$ and hence $h(\mathbf{p}{;}\bm{0}){=}0$. The derivative of the second term,
\begin{equation*}
\begin{split}
    \frac{\partial h(\mathbf{p}{;}\bm{\tilde{\Theta}}^J{+}\bm{\Theta_r})}{\partial \mathbf{p}}
    {=} & (\bm{\lambda}{-}\mathbf{p})^\mathsf{T} (\bm{\tilde{\Theta}}^J {+} \bm{\Theta_r})^\mathsf{T} 
    \frac{\partial \bm{\phi}(\mathbf{p})}{\partial \mathbf{p}}\\
    &{-} \bm{\phi}(\mathbf{p})^\mathsf{T} (\bm{\tilde{\Theta}}^J {+} \bm{\Theta_r}){,}
\end{split}
\end{equation*}
\noindent
is Lipschitz continuous under Assumption \ref{ass:PWL_basis}, as a composite of Lipschitz continuous functions (proof as in Proposition \ref{prop:ga_tilde_Lips}). Since $h(\mathbf{p}{;}\bm{\tilde{\Theta}}^J{+}\bm{\Theta_r})$ has Lipschitz continuous derivatives on $\mathbb{W}$, it is also Fr\'{e}chet differentiable on $\mathbb{W}$ and hence G\^{a}teaux differentiable on $\mathbb{W}$ (example 4.33 in \cite{bonnans2013perturbation}).

According to Lemma 2.1, section 3 in \cite{shapiro1992perturbation} and under Assumption \ref{ass:Second_Order_Quadratic_Growth}, if there is at least one $\mathbf{p}^{\epsilon s} \in \mathbb{P} \cap \mathbb{W}$ and if $h(\mathbf{p}{;}\bm{\tilde{\Theta}}^J{+}\bm{\Theta_r})$ is G\^{a}teaux differentiable on $\mathbb{W}$ (proved above), then the following holds from Mean Value Theorem,
\begin{equation}
    \textrm{dist}(\mathbf{p}^{\epsilon s}{,} \mathbb{M}^{s}) {\leq} (\delta^{-1}{+}\delta^{{-}1/2}L_h) \max_{\mathbf{p} \in \mathbb{W}} \Big{\|}\frac{\partial h(\mathbf{p}{;}\bm{\tilde{\Theta}}^J{+}\bm{\Theta_r})}{\partial \mathbf{p}}\Big{\|}_2^{\star}{,}
\label{shapiro_upper_bound}
\end{equation}
\noindent
where $\delta{>}0$ is the constant in Assumption \ref{ass:Second_Order_Quadratic_Growth} and $L_h{=}2L_a {\geq} 0$ is the Lipschitz constant of $h(\mathbf{p}{;}\bm{\tilde{\Theta}}^J{+}\bm{\Theta_r})$, where the proof is omitted for brevity. %as proved in Appendix \ref{sec:proof_thm_4}. 
Now, since the result of Lemma 2.1 of \cite{shapiro1992perturbation} also holds for maximization problems and since the dual norm of a Euclidean norm is the norm itself, (\ref{shapiro_upper_bound}) becomes,
\begin{equation*}
\begin{split}
    \max_{\mathbf{p} \in \mathbb{W}} & \Big{\|}\frac{\partial h(\mathbf{p}{;}\bm{\tilde{\Theta}}^J{+}\bm{\Theta_r})}{\partial \mathbf{p}}\Big{\|}_2^{\star}
    {=} \max_{\mathbf{p} \in \mathbb{W}} \Big{\|}\frac{\partial h(\mathbf{p}{;}\bm{\tilde{\Theta}}^J{+}\bm{\Theta_r})}{\partial \mathbf{p}}\Big{\|}_2 \\ 
    {=} & \max_{\mathbf{p} \in \mathbb{W}} \Big{\|}  (\bm{\lambda}{-}\mathbf{p})^\mathsf{T} (\bm{\tilde{\Theta}}^J {+} \bm{\Theta_r})^\mathsf{T} 
    \frac{\partial \bm{\phi}(\mathbf{p})}{\partial \mathbf{p}} {-} \bm{\phi}(\mathbf{p})^\mathsf{T} (\bm{\tilde{\Theta}}^J {+} \bm{\Theta_r}) \Big{\|}_2 \\ 
    {\leq} & \max_{\mathbf{p} \in \mathbb{W}} ((\|\bm{\lambda}\|_2 {+} \|\mathbf{p}\|_2) (\|\bm{\tilde{\Theta}}^J\|_2 \Big{\|} \frac{\partial \bm{\phi}(\mathbf{p})}{\partial \mathbf{p}} \Big{\|}_2 {+}\|\bm{\Theta_r}^\mathsf{T} \frac{\partial \bm{\phi}(\mathbf{p})}{\partial \mathbf{p}} \|_2)  \\
    & \qquad \qquad \quad {+} \|\bm{\phi}(\mathbf{p})\|_2 \|\bm{\tilde{\Theta}}^J\|_2{+}\| \bm{\phi}(\mathbf{p})^\mathsf{T} \bm{\Theta_r}\|_2)\\
    {\leq} & \max_{\mathbf{p} \in \mathbb{W}} ((\|\bm{\lambda}\|_2 {+} \|\mathbf{p}\|_2) (\|\bm{\tilde{\Theta}}^J\|_F \Big{\|} \frac{\partial \bm{\phi}(\mathbf{p})}{\partial \mathbf{p}} \Big{\|}_2 {+}\|\bm{\Theta_r}^\mathsf{T} \frac{\partial \bm{\phi}(\mathbf{p})}{\partial \mathbf{p}} \|_2)  \\
    & \qquad \qquad \quad {+} \|\bm{\phi}(\mathbf{p})\|_2 \|\bm{\tilde{\Theta}}^J\|_F{+}\| \bm{\phi}(\mathbf{p})^\mathsf{T} \bm{\Theta_r}\|_2)\\
    {\leq} & \max_{\mathbf{p} \in \mathbb{W}} ((\sqrt{T} \lambda^\mathrm{max} {+} \|\mathbf{p}\|_2) (\eta \epsilon_m^\mathrm{max} L_{\phi} {+}\|\bm{\Theta_r}^\mathsf{T} \frac{\partial \bm{\phi}(\mathbf{p})}{\partial \mathbf{p}} \|_2) \\
    & \qquad \qquad \quad {+} \phi^\mathrm{max} \eta \epsilon_m^\mathrm{max} {+} \epsilon_m^\mathrm{max})\\
    {\leq} & \sqrt{T}(\lambda^\mathrm{max}{+}p^\mathrm{max}) (\eta \epsilon_m^\mathrm{max} L_{\phi} {+} \theta_r^\mathrm{max} L_{\partial \phi}) \\
    & \qquad \qquad \quad {+} (\eta\phi^\mathrm{max}{+}1) \epsilon_m^\mathrm{max},
\end{split}
\end{equation*} 
\noindent
where $\mathbf{p} \in \mathbb{R}^T$, $\bm{\lambda} \in \mathbb{R}^T$ and the upper bounds were selected from inequality (\ref{agg_in_con}), from section \ref{sec:prob_form}, from Assumptions \ref{ass:pros_sched_demand}, \ref{ass:PWL_basis} and \ref{ass:bounded_theta_r}, from Lemma \ref{lemma:approx_error} and from Theorem \ref{thm:UUB_PE_with_wo_appr_error}. Therefore,
\begin{equation}
\begin{split}
    \textrm{dist}&(\mathbf{p}^{\epsilon s}{,} \mathbb{M}^{s})  {\leq} (\delta^{-1}{+}2\delta^{{-}1/2}L_a) (\sqrt{T}(\lambda^\mathrm{max}{+}p^\mathrm{max})\\
    & (\eta \epsilon_m^\mathrm{max} L_{\phi} {+} \theta_r^\mathrm{max} L_{\partial \phi}){+} (\eta\phi^\mathrm{max}{+}1) \epsilon_m^\mathrm{max}).
\end{split}
\label{p_upper_bound}
\end{equation}
\noindent
The upper bound estimated in (\ref{p_upper_bound}) and the upper bounds from Lemma \ref{lemma:approx_error} and Theorem \ref{thm:UUB_PE_with_wo_appr_error}, are used to bound (\ref{ga_tilde_Lip_bounds}) and (\ref{gpi_tilde_Lip_bounds}) and finalize $\epsilon_a$ and $\epsilon_{p_i}$. Specifically, from (\ref{ga_tilde_Lip_bounds}),
\begin{equation*}
\begin{split}
    |g_a&(\mathbf{d}^{\star}(\mathbf{p}^s),\mathbf{p}^s) {-} \tilde{g}_a(\mathbf{p}^{\epsilon s}{;}\bm{\hat{\Theta}}^J)| \leq \\
    \leq & L_a (\textrm{dist}(\mathbf{p}^{\epsilon s},\mathbb{M}^{s}) {+} \|\bm{\Theta_c}^{\star}{-} \mathbf{\Theta^{\star}}\|_F {+} \|\mathbf{\tilde{\Theta}}^J\|_F)\\
    \leq & L_a ((\delta^{-1}{+}2\delta^{{-}1/2}L_a) (\sqrt{T}(\lambda^\mathrm{max}{+}p^\mathrm{max})\\
    & (\eta \epsilon_m^\mathrm{max} L_{\phi} {+} \theta_r^\mathrm{max} L_{\partial \phi}){+} (\eta\phi^\mathrm{max}{+}1) \epsilon_m^\mathrm{max}){+}\theta_r^\mathrm{max} {+} \eta\epsilon_m^\mathrm{max}),\\
    & \forall (\mathbf{p}^s{,} \bm{\Theta_c}^{\star}){,}(\mathbf{p}^{\epsilon s}{,}\bm{\hat{\Theta}}^J) \in \mathbb{P} \cap \mathbb{W} \times \mathbb{\Theta},
\end{split}
\end{equation*}
\noindent
from where $\epsilon_a$ in (\ref{eq:eps_a}) occurs.
\noindent
Equivalently, from (\ref{gpi_tilde_Lip_bounds}),
\begin{equation*}
\begin{split}
    |g_{p_i}(&\mathbf{x}_i^{\star}(\mathbf{p}^{s}),\mathbf{p}^{s}) {-}  g_{p_i}(\mathbf{x}_i^{\epsilon s},\mathbf{p}^{\epsilon s})| \leq  \\
    \leq & L_{p_i} (L_i {+} 1) \textrm{dist}(\mathbf{p}^{\epsilon s}{,} \mathbb{M}^{s}) \\
    \leq & L_{p_i} (L_i {+} 1) (\delta^{-1}{+}2\delta^{{-}1/2}L_a) (\sqrt{T}(\lambda^\mathrm{max}{+}p^\mathrm{max}) \\
    & (\eta \epsilon_m^\mathrm{max} L_{\phi} {+} \theta_r^\mathrm{max} L_{\partial \phi}){+} (\eta\phi^\mathrm{max}{+}1) \epsilon_m^\mathrm{max}),\\
    & \forall (\mathbf{x}_i^{\star}(\mathbf{p}^s){,}\mathbf{p}^s){,}(\mathbf{x}_i^{\star}(\mathbf{p}^{\epsilon s}){,}\mathbf{p}^{\epsilon s}) \in \mathbb{X}_i \times \mathbb{P} \cap \mathbb{W},
\end{split}
\end{equation*}
\noindent
from where $\epsilon_{p_i}$ in (\ref{eq:eps_p_i}) occurs. By improving the upper bound (\ref{p_upper_bound}), the utility bounds (Theorem \ref{thm:model_mismatch}) become tighter and a better $\epsilon$-Stackelberg solution is attained. In fact, as the basis dimension $K {\rightarrow} \infty$, the bound $\theta_r^\mathrm{max} {\rightarrow} 0$ and $\epsilon_m^\mathrm{max} {\rightarrow} 0$. In the absence of an approximation error, i.e., $\epsilon_m^\mathrm{max}{=}0$ and $\theta_r^\mathrm{max}{=}0$, the learning error $\lim_{j {\rightarrow} \infty} \|\mathbf{\tilde \Theta}^{j}\|_F {=} 0$, as shown in Theorem \ref{thm:UUB_PE_with_wo_appr_error}. Therefore, $\textrm{dist}(\mathbf{p}^{\epsilon s}{,} \mathbb{M}^{s}){=}0$ from (\ref{p_upper_bound}) and the exact equilibrium is recovered since $\epsilon_a{=}\epsilon_{p_i}{=}0$. In that case, $g_a(\mathbf{d}^{\star}(\mathbf{p}^s),\mathbf{p}^s){=}\tilde{g}_a(\mathbf{p}^{\epsilon s}{;}\bm{\hat{\Theta}}^J)$ and $g_{p_i}(\mathbf{x}_i^{\star}(\mathbf{p}^{s}),\mathbf{p}^{s}){=}g_{p_i}(\mathbf{x}_i^{\epsilon s},\mathbf{p}^{\epsilon s})$. 
%Similar behavior on the basis representation improvement, was found by \cite{vamvoudakis2010online}.  
\frQED
\end{proof}

\begin{remark}\label{rem:scalability}
Similar behavior on the basis representation improvement, was found by \cite{vamvoudakis2010online}
Scalability: The sample complexity of Algorithm \ref{algo:stackelberg_algo_online} is $O(J)$, due to the single for loop over the samples $J$, and no internal dimensions depending on $J$. Moreover, as $N {\rightarrow} \infty$, the dimensions in problem (\ref{agg_learning_appr}) and laws (\ref{RLS_theta}), (\ref{RLS_epsilon}), (\ref{RLS_pi}) are not affected. Only the summation $\mathbf{d}^{j{\star}} = \sum_{i=1}^N \mathbf{x}_i^{j\star}$, depends on $N$, indicating a linear $O(N)$ algorithmic complexity. The $N$ prosumer problems are solved simultaneously in a decentralized sense, therefore not affecting complexity. Both complexities are verified experimentally. This scalability allows for realistic applications with thousands of prosumers. \frqed
\end{remark} 

\begin{remark}\label{rem:privacy}
Privacy: The cumulative learning of the prosumers' best responses, removes the need for the communication of any problem-specific objectives, constraints, parameters, preferences, PI data measurements, etc. Similarly, the aggregator only shares random price signals and a final optimal price signal, hiding her formulation-specific details.
\frqed
\end{remark} 

\section{Experimental Results}\label{sec:exp_res}
Algorithm \ref{algo:stackelberg_algo_online} was utilized every day on hourly demand data $\mathbf{x}_{i}^{\o{}}$ from 71 buildings (prosumers) of the University of California, Davis (UC-Davis) \cite{UCDavis_Ceed_Online_Data_Tool} and on day-ahead Locational Marginal Prices (LMP) $\bm{\lambda}$ for the UC-Davis node (DAVIS-1-N030) \cite{CAISO_UC_DAVIS_NODE} of CAISO. The algorithm was deployed until $\bm{\hat{\Theta}}^j$ did not significantly change ($\sim$100 samples). No constraints were violated. The following observations are made.
\begin{enumerate}
    \item For $a{=}1$, the new total demand $\bm{\hat{\Theta}}^{J^\mathsf{T}} \bm{\phi}(\mathbf{p}^{\epsilon s})$ is restricted by inequality (\ref{pros_in_con}) and therefore, the DR-aggregator can only sell energy and not buy (Figure \ref{fig:a1_Q001}). For higher values of $a$, purchasing is also allowed. For example, for $a{=}2$ and same $Q_i{=}0.01 {\cdot} W_i$ (Figure  \ref{fig:a2_Q001}), the prosumers are more flexible to move more demand into the highly compensated hours. Therefore, all prosumers and the DR-aggregator earn more.
    %\item The increase in reward from $a{=}2$ to $a{=}3$ with $Q_i{=}0.01{\cdot}W_{i}$ (Figures \ref{fig:a2_Q001} and \ref{fig:a3_Q001}) is limited due to $u_i$ penalization. Smaller $u_i$ can increase rewards for $a{=}3$.
    \item The discrepancy between the DR-aggregator's estimated $\tilde{g}_a(\mathbf{p}^{\epsilon s}{;}\bm{\hat{\Theta}}^J)$ (green) and best prosumer response $g_a(\mathbf{d}^{\star}(\mathbf{p}^\textrm{es}),\mathbf{p}^{\epsilon s})$ (blue) rewards, is due to approximation error (Figures \ref{fig:a1_Q001}, \ref{fig:a2_Q001}, and \ref{fig:a2_Q0}). This discrepancy is reduced with more samples (Figure \ref{fig:a2_Q001_samples}). For low samples, $\tilde{g}_a(\mathbf{p}^{\epsilon s}{;}\bm{\hat{\Theta}}^J) {\leq} 0$, but for more it is positive. In low-sample solutions, i.e., 20, demand $\bm{\hat{\Theta}}^{J^\mathsf{T}} \bm{\phi}(\mathbf{p}^{\epsilon s})$ shows extreme ramping that reduces grid's resilience.
    \item The prosumer's average monthly utility is independent of their number, as there is no competition between them (Figure \ref{fig:a2_Q001_prosumers}). The DR-aggregator's utility increases if the total volume has increased. 
    \item For constant $a{=}2$ and increasing $Q_i$ (Figures \ref{fig:a2_Q001} and \ref{fig:a2_Q0}), the prosumers' and DR-aggregator's utility increase while total demand is reduced. 
    \item For $Q_i {=} 0$ (Figure \ref{fig:a2_Q0}), the prosumers move demand to cheaper hours (energy arbitrage) and still make a profit although their total demand remains the same. The aggregator makes a profit from purchasing and selling energy (bidirectional transactions) to the market. Cases with $Q_i {<} 0$ are also supported by this framework but omitted from the results for brevity.
    %For higher $Q_i$, more demand is moved in the better-compensated hours (Figure \ref{fig:a2_all_demands}) while more demand is saved overall. For $Q_i{=}0.1 {\cdot} W_i$, the values of $u_i$ were reduced by 6 times so that all $g_i {\geq} 0$. This limitation is explained later.
    %\item When $a{=}2$ and $Q_i {\sim} \textrm{Uni}[0, 0.01 {\cdot} W_i]$ (Figure \ref{fig:a2_Qvar}), the DR-aggregator's reward is in between the $Q_i{=}0$ and $Q_i{=}0.01 {\cdot} W_i$ cases (Figures \ref{fig:a2_Q0} and \ref{fig:a2_Q001}).
    \item The DR-aggregator's earnings are higher than the prosumers' because of the volume of energy she moves multiplied by the price difference (price arbitrage). The DR-aggregator is also responsible for market-related fees and operational costs, not mentioned in this paper.
    \item By reducing demand ($Q_i {\neq} 0$), CO$_2$ emissions are saved, i.e., for $a{=}2$, 74864 KWh and 88075 CO$_2$ lbs were saved ($Q_i{=}0.01 {\cdot} W_i$ case), and 748645 KWh and 880759 CO$_2$ lbs were saved ($Q_i{=}0.1 {\cdot} W_i$ case), based on the 0.85 CO$_2$ lbs/KWh EIA's estimation \cite{EIA_CO2}.
    %contributing to the Clean Power Plan policy goals \cite{EPA_Clean_Power_Plan}. For comparison purposes, California's annual CO$_2$ emissions can be found in GHG Emission Data Inventory \cite{AB32_Act2006_GHG_emissions_data} for Assembly Bill 32 \cite{AB32_Act2006}. 
    % For reference, it was estimated that the total 2020 annual CO$_2$ emissions were 369.2 million metric tonnes, across all sectors.
    \item In this price-driven framework, demand is shifted away from expensive LMP hours, such as sunset hours. The sunset ramping can be alleviated with the large-scale deployment of DR-aggregators (Figure \ref{fig:duck_alleviation}, scalability).
    \item Linear sample $O(J)$ and prosumer $O(N)$ time complexities are demonstrated in Figures \ref{fig:sample_complex} and \ref{fig:pros_complex}.
    % with 1000 identical DR-aggregators, $Q_i=0.1 \cdot W_i$ and $a=2$). 
    % To prove this point, 1000 identical DR-aggregators were assumed ($Q_i=0.1 \cdot W_i$ and $a=2$). The cumulative effect on the duck-curve \cite{DOE_duck_curve_CAISO} is shown in Figure \ref{fig:duck_alleviation}, where the net demand before DR is collected from CAISO \cite{CAISO_net_demand_data}. Demand is moved away from the peak, according to the price signals in the market.
\end{enumerate}

%FIRST PAGE
\begin{figure*}[!ht]
    \centering
    \subfloat[\centering Daily player utilities averaged over entire May 2022.]{{\includegraphics[width=8.0cm]{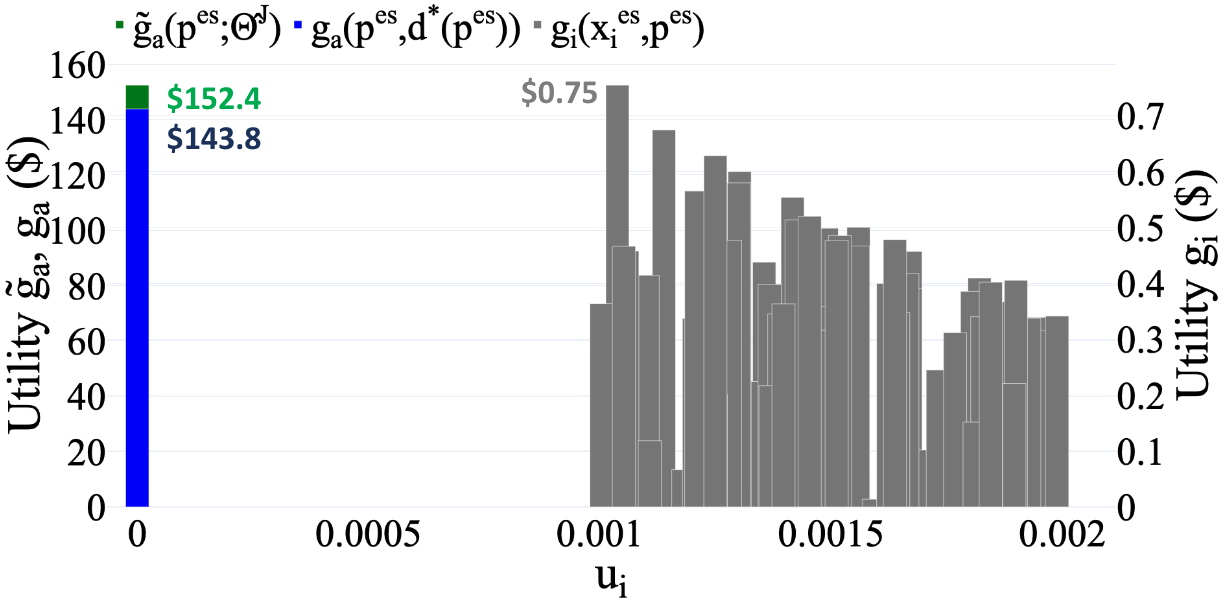} }}
    \qquad
    \subfloat[\centering Prices and demands.]{{\includegraphics[width=8.5cm]{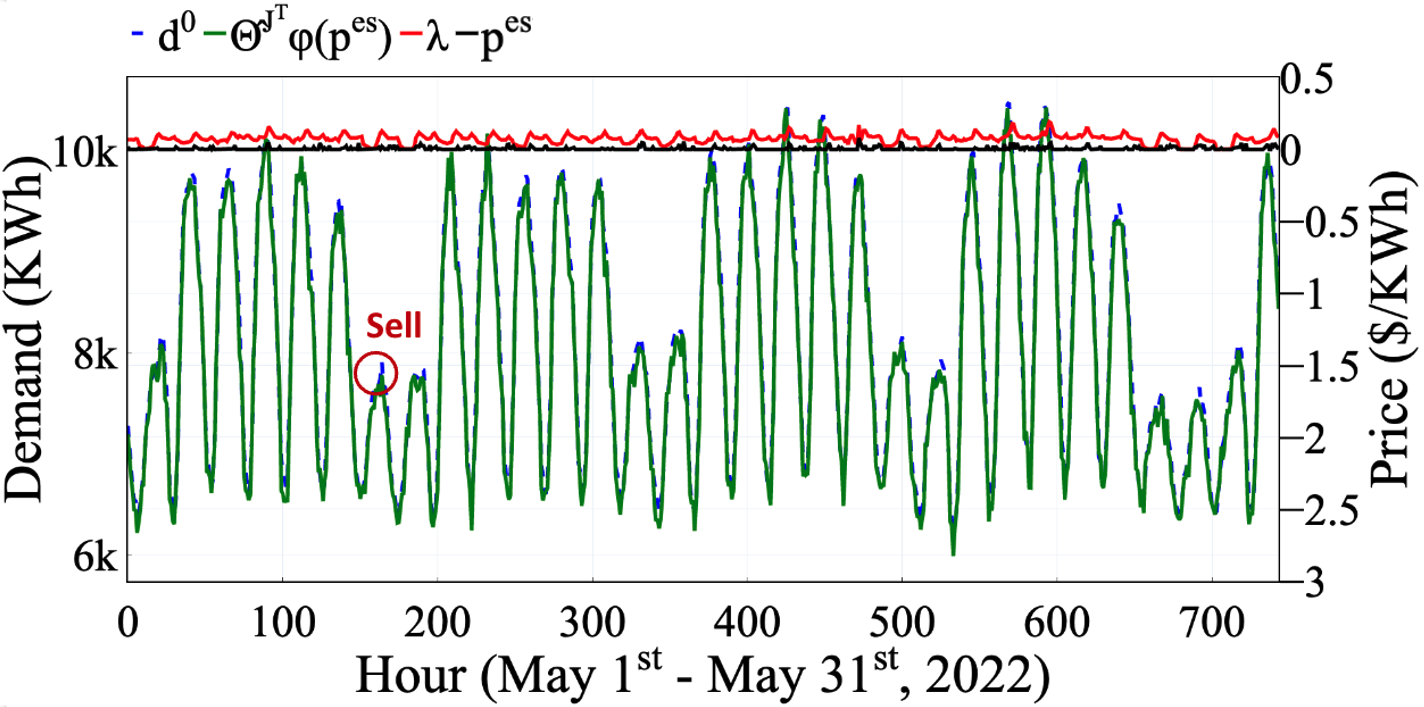} }}
    \caption{Results for $a=1$ and $Q_i=0.01\cdot W_{i}$, for May 1\ts{st}-31\ts{st}, 2022.}
    \label{fig:a1_Q001}
\end{figure*}    
\begin{figure*}[!ht]
    \centering
    \subfloat[\centering Daily player utilities averaged over entire May 2022.]{{\includegraphics[width=8.0cm]{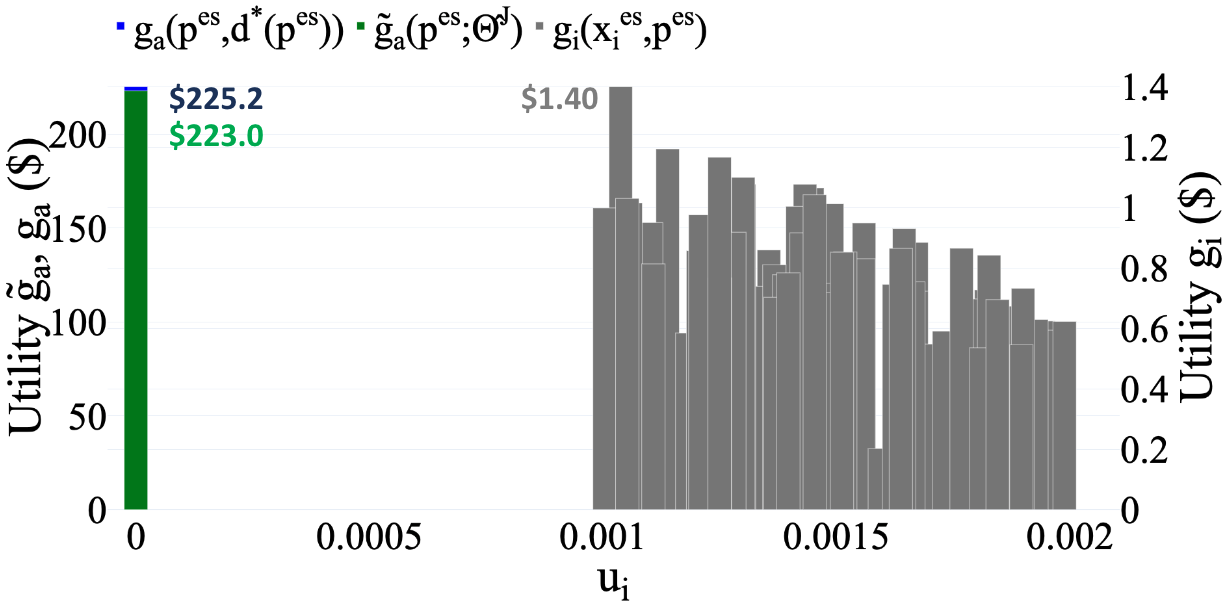} }}
    \qquad
    \subfloat[\centering Prices and demands.]{{\includegraphics[width=8.5cm]{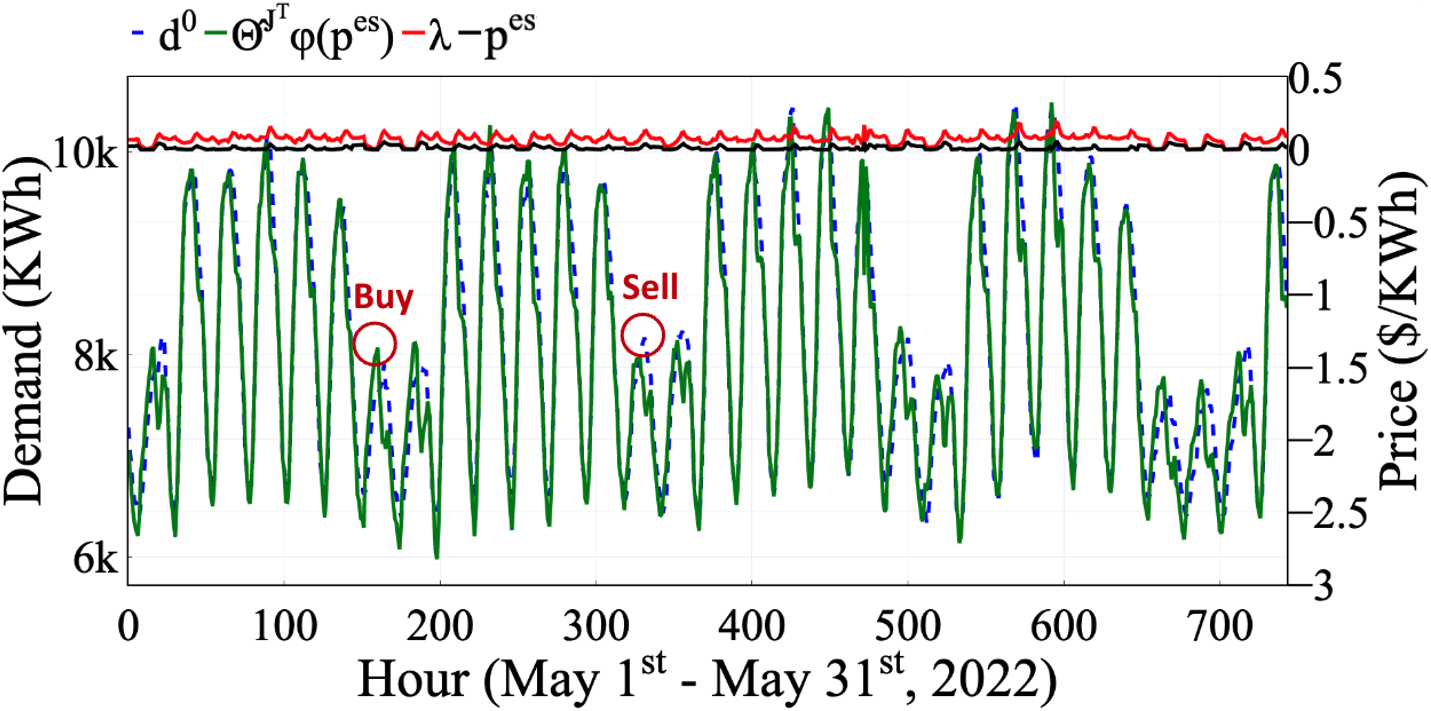} }}
    \caption{Results for $a=2$ and $Q_i=0.01\cdot W_{i}$, for May 1\ts{st}-31\ts{st}, 2022.}
    \label{fig:a2_Q001}
\end{figure*}
\begin{figure*}[!ht]
    \centering
    \subfloat[\centering Daily player utilities for variable number of samples $j$.]{{\label{fig:a2_Q001_samples} \includegraphics[width=7.9cm]{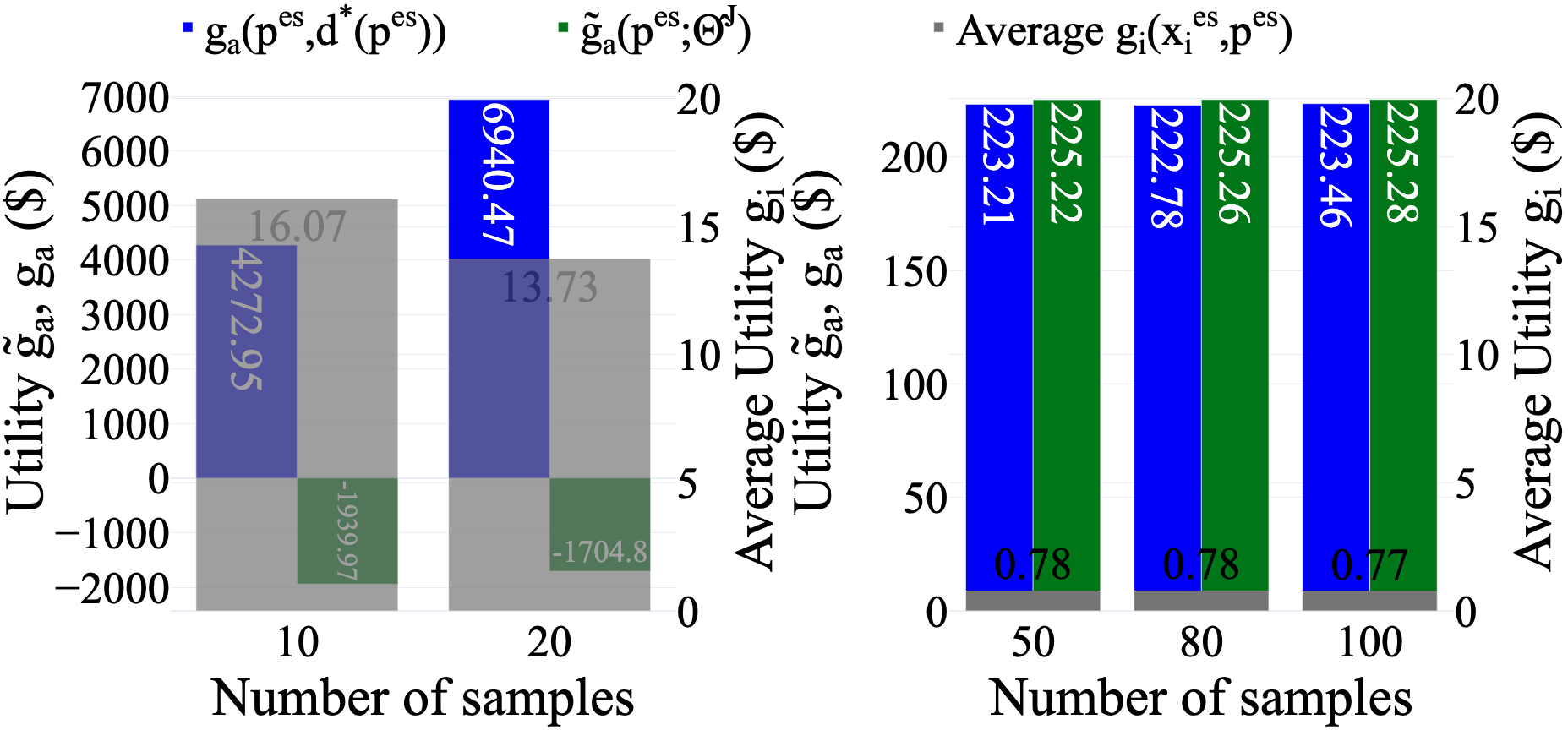}}}
    \qquad
    \subfloat[\centering Daily player utilities for variable number of prosumers $N$.]{{\label{fig:a2_Q001_prosumers}\includegraphics[width=7.9cm]{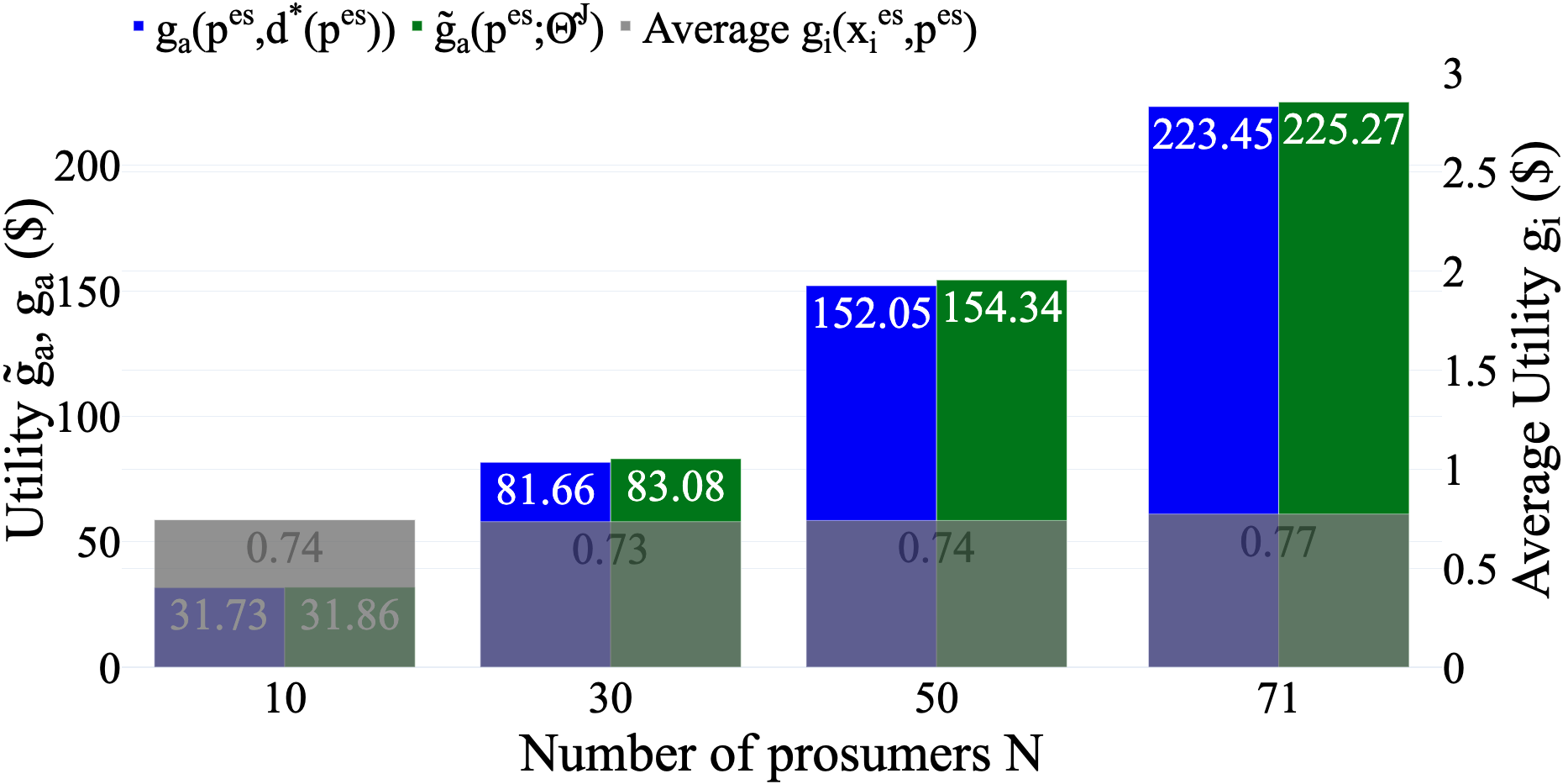} }}
    \caption{Player utilities for $a{=}2$, $Q_i{=}0.01\cdot W_{i}$, averaged over May 2022. Prosumers' utilities are averaged over all prosumers.}
\end{figure*} 

%SECOND PAGE
% \begin{figure*}[!ht]
%     \centering
%     \subfloat[\centering Daily player utilities averaged over entire May 2022.]{{\includegraphics[width=8.0cm]{} }}
%     \qquad
%     \subfloat[\centering Prices and demands.]{{\includegraphics[width=8.5cm]{} }}
%     \caption{Results for $a=3$ and $Q_i=0.01\cdot W_{i}$, for May 1\ts{st}-31\ts{st}, 2022.}
%     \label{fig:a3_Q001}
%\end{figure*}
\begin{figure*}[!ht]
    \centering
    \subfloat[\centering Daily player utilities averaged over entire May 2022.]{{\includegraphics[width=8.0cm]{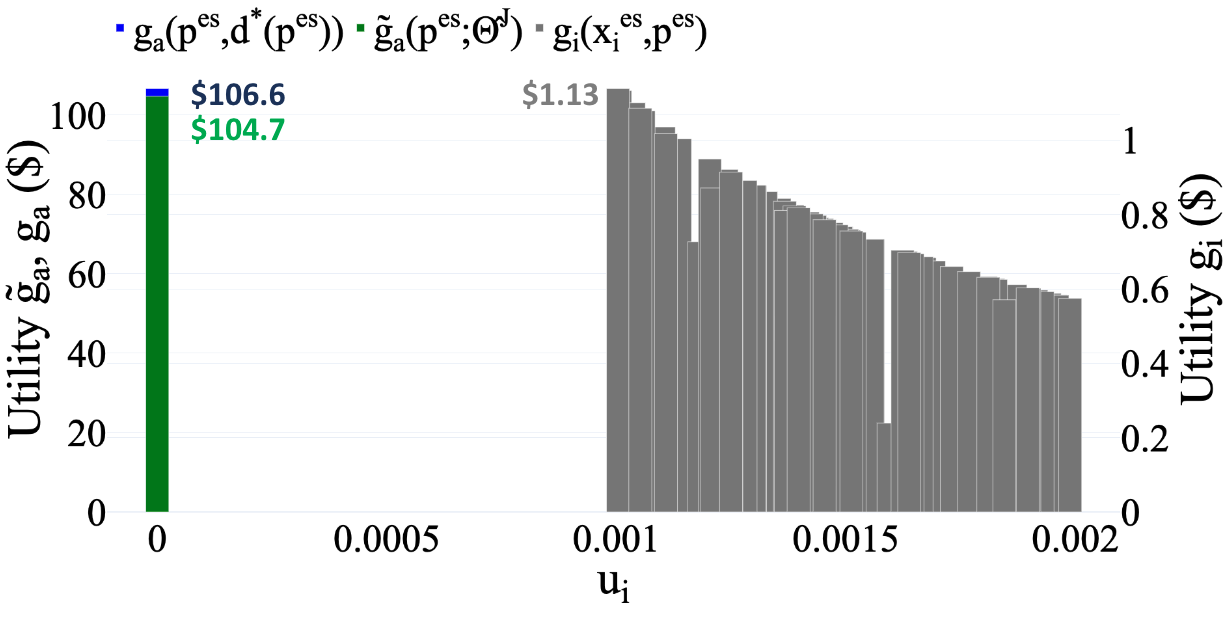} }}
    \qquad
    \subfloat[\centering Prices and demands.]{{\includegraphics[width=8.5cm]{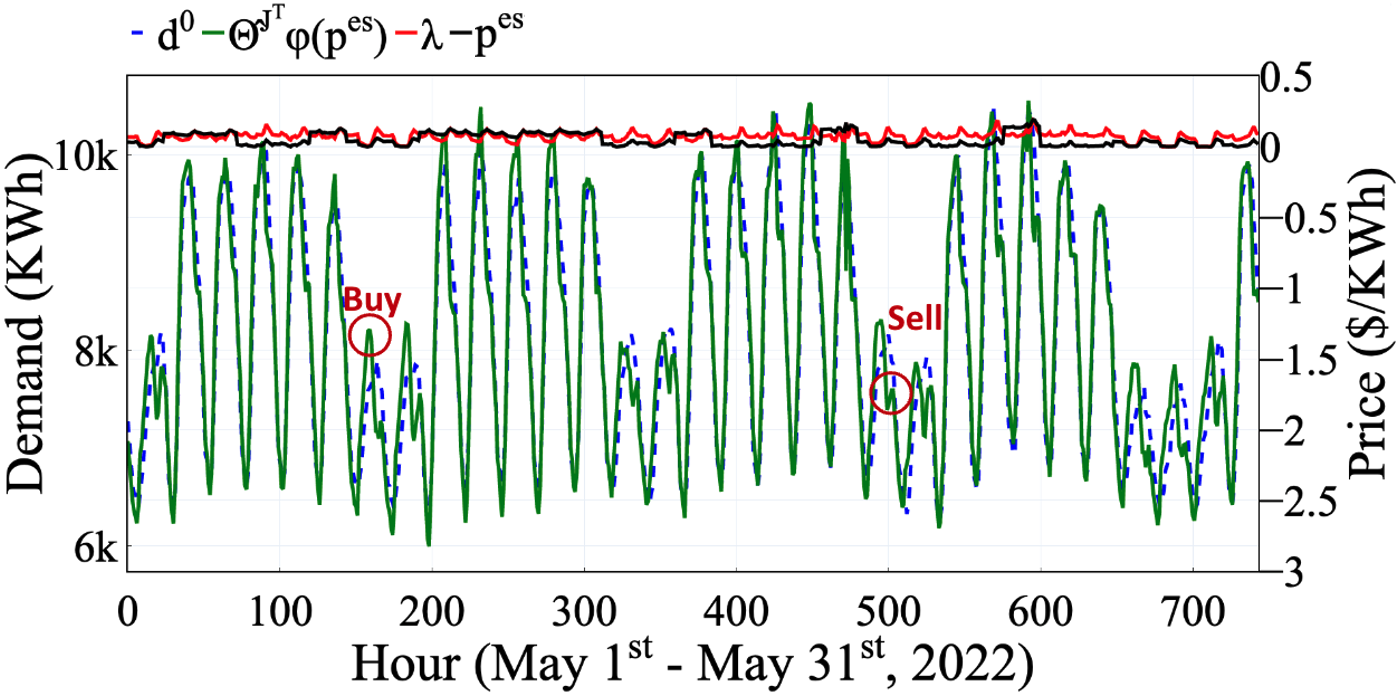} }}
    \caption{Results for $a=2$ and $Q_i=0$, for May 1\ts{st}-31\ts{st}, 2022.}
    \label{fig:a2_Q0}
\end{figure*}    
% \begin{figure*}[!ht]
%     \centering
%     \subfloat[\centering Daily player utilities averaged over entire May 2022.]{{\includegraphics[width=8.0cm]{} }}
%     \qquad
%     \subfloat[\centering Prices and demands.]{{\includegraphics[width=8.5cm]{} }}
%     \caption{Results for $a=2$ and $Q_i=0.1\cdot W_{i}$, for May 1\ts{st}-31\ts{st}, 2022.}
%     \label{fig:a2_Q01}
% \end{figure*} 
% \begin{figure*}[!ht]
%     \centering
%     \subfloat[\centering Total monthly savings, based on 0.85 CO$_2$ lbs/KWh, EIA \cite{EIA_CO2}.]{{\label{fig:bars_co2_kwh} \includegraphics[width=7.7cm]{Main/} }}
%     \qquad
%     \subfloat[\centering Summary of demands from Figures \ref{fig:a2_Q0}, \ref{fig:a2_Q001} and \ref{fig:a2_Q01}.]{{\label{fig:a2_all_demands}\includegraphics[width=7.7cm]{Main/} }}
%     \caption{Summary of results for $a=2$ and various $Q_i$ values, for May 1\ts{st}-31\ts{st}, 2022.}
% \end{figure*}
% \begin{figure*}[!ht]
%     \centering
%     \subfloat[\centering Daily player utilities averaged over entire May 2022.]{{\includegraphics[width=8.0cm]{} }}
%     \qquad
%     \subfloat[\centering Prices and demands.]{{\includegraphics[width=8.5cm]{} }}
%     \caption{Results for $a=2$ and random $Q_i \sim \textrm{Uni}[0, 0.01\cdot W_{i}]$, for May 1\ts{st}-31\ts{st}, 2022.}
%     \label{fig:a2_Qvar}
% \end{figure*}

% THIRD PAGE
\begin{figure*}[!ht]
    \centering
    \subfloat[\centering CAISO net demand trend, a.k.a. the duck curve \cite{DOE_duck_curve_CAISO}.]{{\label{fig:duck_curve} \includegraphics[width=7.7cm,height=4.5cm]{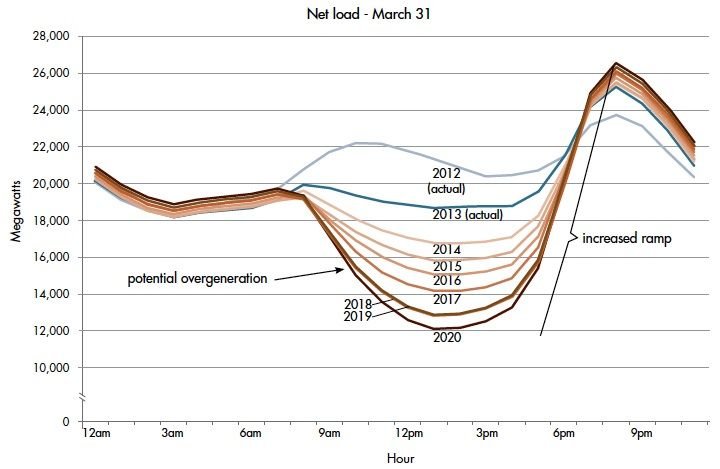} }}
    \qquad
    \subfloat[\centering CAISO net demand before (red) \cite{CAISO_net_demand_data} and after (purple) DR.]{{\label{fig:duck_alleviation}\includegraphics[width=7.7cm,height=4.2cm]{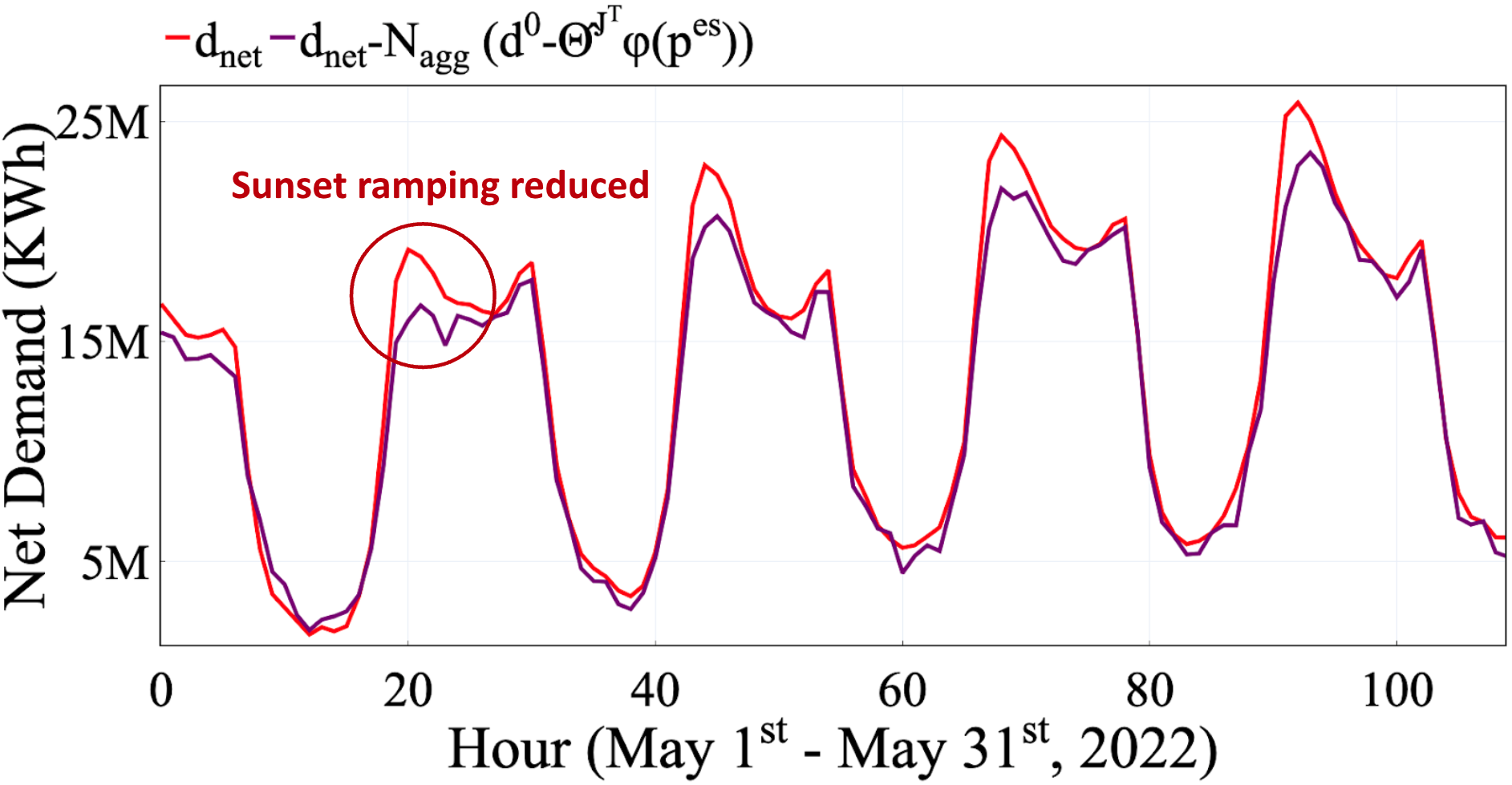} }}
    \caption{Cumulative effect of $N_{agg}{=}1000$ identical DR-aggregators on duck-curve ramping, for $a=2$ and $Q_i=0.1\cdot W_i$.}
\end{figure*}
\begin{figure*}[!ht]
    \centering
    \subfloat[\centering Linear complexity with number of samples $J$.]{{\label{fig:sample_complex} \includegraphics[width=7.7cm,height=4.2cm]{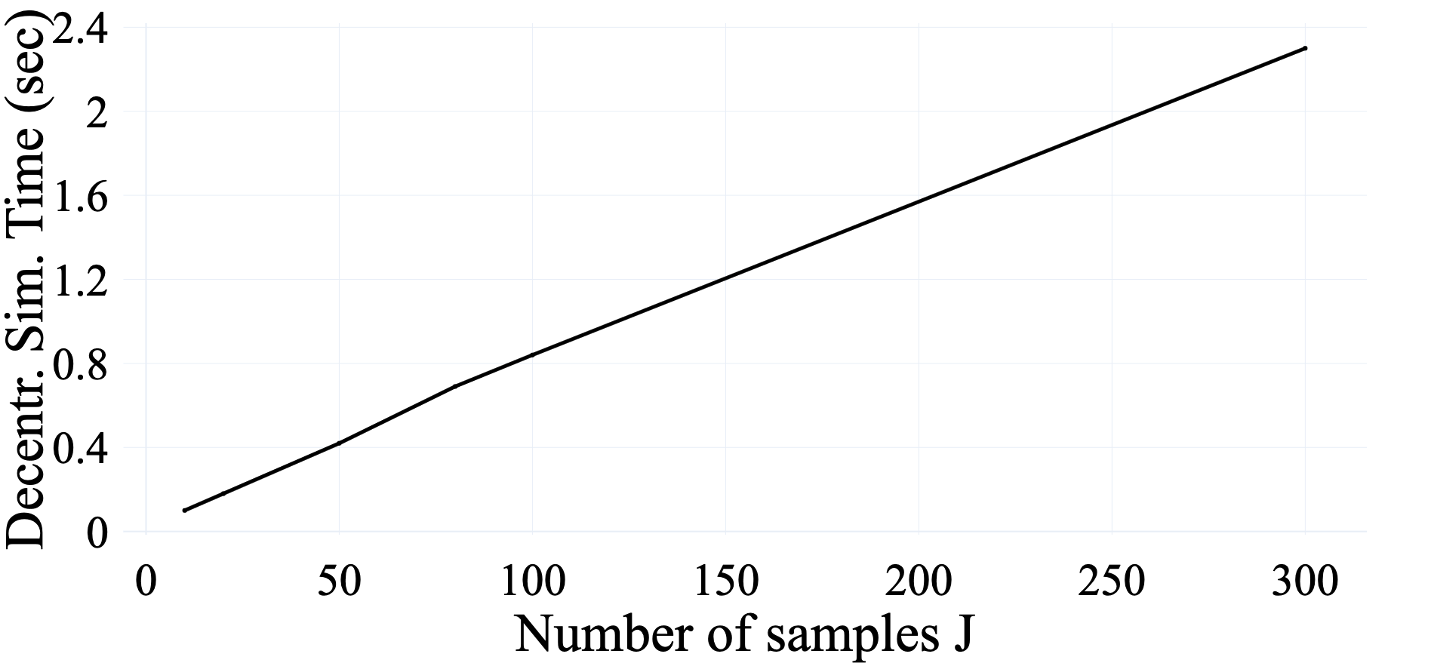}}}
    \qquad
    \subfloat[\centering Linear (at worst) complexity with number of prosumers $N$.]{{\label{fig:pros_complex}\includegraphics[width=7.7cm,height=4.2cm]{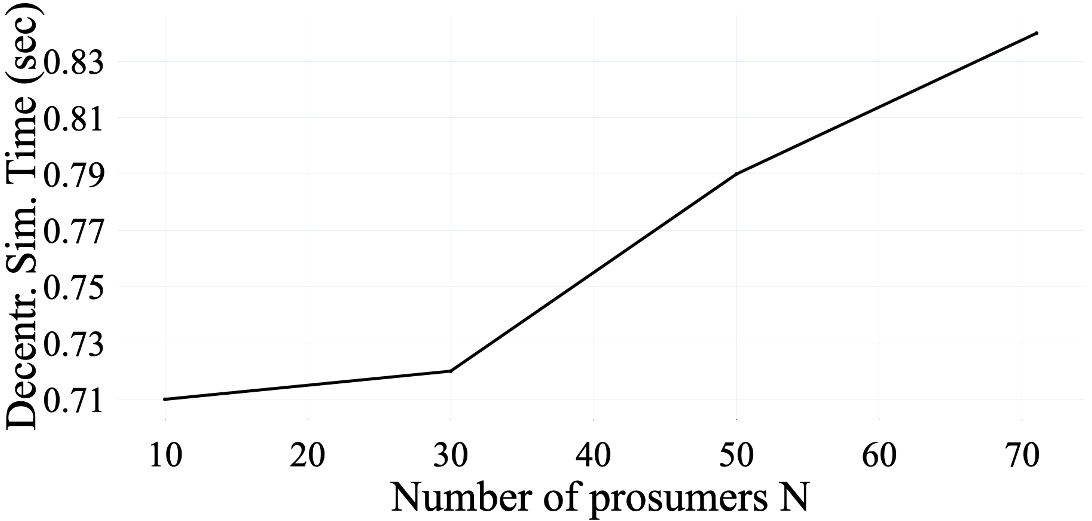} }}
    \caption{Experimental results on complexities of the proposed decentralized Algorithm \ref{algo:stackelberg_algo_online}.}
\end{figure*}

This research suffers from two limitations. First, as $Q_i$ increases, $\mathbf{x}_i(\mathbf{p})$ shifts away from $\mathbf{x}_i^{\o{}}$, to satisfy the constraint (\ref{pros_eq_con}), increasing the prosumer's inconvenience (\ref{pros_obj}). To avoid $g_i {\leq} 0$ for large $Q_i$, the $u_i$ is reduced and could potentially lose its meaning. Alternatives, such as logarithmic inconvenience functions, cannot capture the symmetry in deviations from increasing or decreasing demand. Second, the added competition from more DR-aggregators leads to LMP price drop and diminishing marginal returns \cite{PJM_market_rep_2019} but also to further CO$_2$ emissions reduction due to higher volumes cleared.

\section{Conclusion and Future Work}\label{sec:concl}
In this work, an energy trading Stackelberg game mechanism between the prosumers and the DR-aggregator was designed to engage more distributed resources in the pathway to 2030 decarbonization and 2050 net-zero economy. The difficulties associated with deriving closed-form equilibrium solutions for market bidding motivated the decentralized, privacy-preserving, scalable learning algorithm for finding approximate game equilibria through the decoupling of the players' problems, outperforming less scalable MPEC solutions for Stackelberg games (bilevel programs). Moreover, bounds to the approximate equilibrium solution were derived via the stability to the parameter perturbation. Experimental results utilizing California market prices and demands were presented. 

Future work includes the incorporation of more detailed distributed load constraints in the prosumer's problem \cite{he2013engage, varlamis2022smart}.

%removed extra constraints \cite{thanayankizil2012softgreen}

%which may affect the choice of the solution map approximation function, retaining the scalability and privacy-preservation traits of this algorithm.

%Future work includes comparing the proposed algorithm's accuracy against commercial optimization software (i.e., GAMS's MPEC solver for bilevel programs with multiple lower-level programs) for different solution map approximation bases and for different Stackelberg games (bilevel programs), since the proposed algorithm is applicable to a wide class of bilevel program categories and applications. 

%Future work includes observed limitations of this framework related to the choice of symmetric inconvenience functions for the prosumers, as well as market-wide studies on the diminishing marginal returns effect from large-scale deployment of DR-aggregators.

\bibliographystyle{IEEEtran}
%\bibliography{Stackelberg}
\bibliography{references}
%----------------------------------------
\section{Appendix}\label{sec:monot_conv}
\subsection{Proof of Theorem \ref{thm:UUB_PE_with_wo_appr_error}}\label{sec:proof_thm_UUB_PE_with_wo_appr_error}
\subsubsection{Part 1: Without Approximation Errors}\label{sec:part1_wo_appr_errors}
Define $\bm{\Phi}^J \in \mathbb{R}^{K \times J}$ the matrix that contains the basis vectors $\bm{\phi}^1,...,\bm{\phi}^J$, where the uppercase exponent $J$ on $\bm{\Phi}^J$ denotes the second dimension of this matrix. Then, equivalently to \cite{ioannou2006adaptive} the following holds,
\begin{equation}
    \bm{\Pi}^{j^{-1}} {-} m \bm{\Pi}^{{j-1}^{-1}} {=} \bm{\Phi}^J \bm{\Phi}^{J^\mathsf{T}}
    {-} \bm{\Phi}^{J-1} \bm{\Phi}^{{J-1}^\mathsf{T}}
    {=} \bm{\phi}^j \bm{\phi}^{j^\mathsf{T}},
    \label{Pi_diff}
\end{equation}
\noindent
where $\bm{\Pi}^j \in \mathbb{R}^{K \times K}$, because element-wise subtraction equals  $\sum_{j=1}^J \phi_{tj}^2 - \sum_{j=1}^{J-1} \phi_{tj}^2 = \phi_{tJ}^2$ and $\sum_{j=1}^J \phi_{tj} \phi_{\tau j} - \sum_{j=1}^{J-1} \phi_{tj} \phi_{\tau j} = \phi_{tJ}\phi_{\tau J}$ for matrix entries $t \neq \tau$.
Assume the case where $\bm{\epsilon}_m(\mathbf{p})=0$. Adding equations (\ref{Pi_diff}) over a window of size of $M$ results in the following statement,
\begin{equation*}
\begin{split}
    & \bm{\Pi}^{{k-1}^{-1}} {+} \bm{\Pi}^{{k}^{-1}} {+} ... {+} \bm{\Pi}^{{k+M-1}^{-1}} \\ 
    & {=} m (\bm{\Pi}^{{k-2}^{-1}} {+} \bm{\Pi}^{{k-1}^{-1}} {+} ... {+} \bm{\Pi}^{{k+M-2}^{-1}})
    {+} \sum_{j=k}^{k+M} \bm{\phi}^j \bm{\phi}^{j^\mathsf{T}},
\end{split}
\end{equation*}
\noindent
where $0 {<}  m {\leq} 1$, $\bm{\Pi}^{{k-2}^{-1}} {\succ} 0$, $\bm{\Pi}^{{k-1}^{-1}} {\succ} 0$, ..., $\bm{\Pi}^{{k+M-2}^{-1}} {\succ} 0$,
\begin{equation*}
    \bm{\Pi}^{{k-1}^{-1}} {+} \bm{\Pi}^{{k}^{-1}} {+} ... {+} \bm{\Pi}^{{k+M-1}^{-1}} \geq \sum_{j=k}^{k+M} \bm{\phi}^j \bm{\phi}^{j^\mathsf{T}} \geq \beta_0 \mathbf{I},
\end{equation*}
\noindent
where the last bound is from the application of PE condition (Theorem \ref{thm:UUB_PE_with_wo_appr_error}). From (\ref{Pi_diff}) it is true that $\bm{\Pi}^{{k-1}^{-1}} {\succ} m \bm{\Pi}^{{k-2}^{-1}}$, since $\bm{\phi}^j \bm{\phi}^{j^\mathsf{T}} \succeq 0$. Therefore,
\begin{equation*}
    (m^{-M} {+} m^{{-}(M{-}1)} {+} ... {+} 1) \bm{\Pi}^{{k+M-1}^{-1}} \geq \beta_0 \mathbf{I},
\end{equation*}
\noindent
and since,
\begin{equation*}
\begin{split}
    m^{-M} & {+} m^{{-}(M{-}1)} {+} ... {+} 1 \\
    & = m^{-M} (1{+} m {+} m^2 {+} ... {+} m^{M-1} {+} m^M) \\
    & = m^{-M} \frac{m^{-M} - 1}{m^{-M} - 1} (1{+} m {+} m^2 {+} ... {+} m^{M-1} {+} m^M) \\
    & = m^{-M} \frac{m^{-1} - m^M}{m^{-1} - 1}
    = \frac{m^{-(M+1)-1}}{m^{-1}-1},
\end{split}
\end{equation*}
\noindent
we get $\forall j \geq M$,
\begin{equation}
    \bm{\Pi}^{{j-1}^{-1}} 
    \succeq \frac{(m^{-1}{-}1)}{m^{-(M+1)}-1} \beta_0 \mathbf{I} \succeq 0.
    \label{Pi_inv_PSD}
\end{equation}
\noindent
From Lemma \ref{lemma:approx_error} and $\bm{\epsilon}_m(\mathbf{p})=0$, we have that $\mathbf{d}^{\star}(\mathbf{p}) = \bm{\Theta_c}^{{\star}^{\mathsf{T}}} \bm{\phi}(\mathbf{p}) = \bm{\Theta}^{{\star}^{\mathsf{T}}} \bm{\phi}(\mathbf{p})$. If we introduce the latter into $\bm{\hat{\Theta}}^{J} = \bm{\hat{\Theta}}^{J-1} + \bm{\Pi}^j \bm{\phi}^j (\mathbf{d}^{j \star} - \bm{\hat{\Theta}}^{{J-1}^\mathsf{T}} \bm{\phi}^j)^\mathsf{T}$, we get,
\begin{equation*}
    \bm{\hat{\Theta}}^{j} = \bm{\hat{\Theta}}^{j-1} + \bm{\Pi}^j \bm{\phi}^j (\bm{\Theta}^{{\star}^{\mathsf{T}}} \bm{\phi}^j - \bm{\hat{\Theta}}^{{j-1}^\mathsf{T}} \bm{\phi}^j)^\mathsf{T},
\end{equation*}
\begin{equation*}
    \bm{\Theta}^{\star} {-} \bm{\hat{\Theta}}^{j} = \bm{\Theta}^{\star} {-} \bm{\hat{\Theta}}^{j-1} {-}
    \bm{\Pi}^j \bm{\phi}^j \bm{\phi}^{j^{\mathsf{T}}} (\bm{\Theta}^{\star} {-} \bm{\hat{\Theta}}^{j-1}),
\end{equation*}
\noindent
and by defining $\mathbf{\tilde \Theta}^j = \mathbf{\Theta^{\star}} - \mathbf{\hat{\Theta}}^j$, we get,
\begin{equation}
    \mathbf{\tilde \Theta}^j = (\mathbb{I} - \bm{\Pi}^j \bm{\phi}^j \bm{\phi}^{j^{\mathsf{T}}}) \mathbf{\tilde \Theta}^{j-1}.
    \label{theta_tilde_dynamics}
\end{equation}
\noindent
We now consider the following Lyapunov function candidate,
\begin{equation}
    V_j = \tr\{\mathbf{\tilde \Theta}^{j^{\mathsf{T}}} \bm{\Pi}^{j^{-1}} \mathbf{\tilde \Theta}^{j}\},
    \label{theta_Lyapunov}
\end{equation}
\noindent
and by utilizing (\ref{theta_tilde_dynamics}) in the following difference, we get,
\begin{equation*}
\begin{split}
    V_j {-} V_{j-1} & = \tr\{\mathbf{\tilde \Theta}^{j^{\mathsf{T}}} \bm{\Pi}^{j^{-1}} \mathbf{\tilde \Theta}^{j} {-} \mathbf{\tilde \Theta}^{{j-1}^{\mathsf{T}}} \bm{\Pi}^{{j-1}^{-1}} \mathbf{\tilde \Theta}^{j-1}\} \\
    & = \tr\{\mathbf{\tilde \Theta}^{{j-1}^{\mathsf{T}}}  
    (\mathbf{A}^j{+}\bm{\phi}^j b_j \bm{\phi}^{j^{\mathsf{T}}}) \mathbf{\tilde \Theta}^{j-1}\},
\end{split}
\end{equation*}
\noindent
where for $\mathbf{A}^j \in \mathbb{R}^{K \times K}$, $\alpha_j \in \mathbb{R}$ and $b_j \in \mathbb{R}$ we have,
\begin{equation*}
    \mathbf{A}^j = (m{-}1) \bm{\Pi}^{{j-1}^{-1}} {-} m \bm{\phi}^j \alpha_j^{-1} \bm{\phi}^{j^{\mathsf{T}}},
\end{equation*}
\begin{equation*}
    \alpha_j = m + \bm{\phi}^{j^{\mathsf{T}}} \bm{\Pi}^{j-1} \bm{\phi}^{j},
\end{equation*}
\begin{equation*}
\begin{split}
    b_j = & 1 {-} \alpha_j^{-1} (m + \bm{\phi}^{j^{\mathsf{T}}} \bm{\Pi}^{j-1} \bm{\phi}^{j}) \\
    & {+} \alpha_j^{-1} ({-}\alpha_j \bm{\phi}^{j^{\mathsf{T}}} \bm{\Pi}^{j-1} \bm{\phi}^{j} {+} \bm{\phi}^{j^{\mathsf{T}}} m  \bm{\Pi}^{j-1} \bm{\phi}^{j} \\
    & {+} \bm{\phi}^{j^{\mathsf{T}}} \bm{\Pi}^{j-1} \bm{\phi}^{j} \bm{\phi}^{j^{\mathsf{T}}} \bm{\Pi}^{j-1} \bm{\phi}^{j}) \alpha_j^{-1} \\
    = & 1 {-} \alpha_j^{-1} \alpha_j {+} \alpha_j^{-1} ({-}\alpha_j \bm{\phi}^{j^{\mathsf{T}}} \bm{\Pi}^{j-1} \bm{\phi}^{j} \\
    & {+} (m {+} \bm{\phi}^{j^{\mathsf{T}}} \bm{\Pi}^{j-1} \bm{\phi}^{j}) \bm{\phi}^{j^{\mathsf{T}}} \bm{\Pi}^{j-1} \bm{\phi}^{j}) \alpha_j^{-1} \\
    = & \alpha_j^{-1} ({-}\alpha_j \bm{\phi}^{j^{\mathsf{T}}} \bm{\Pi}^{j-1} \bm{\phi}^{j} 
    {+} \alpha_j \bm{\phi}^{j^{\mathsf{T}}} \bm{\Pi}^{j-1} \bm{\phi}^{j}) \alpha_j^{-1} = 0.
\end{split}
\end{equation*}
\noindent
It follows that,
\begin{equation*}
\begin{split}
    V_j {-} V_{j-1} = & \tr \{\mathbf{\tilde \Theta}^{{j-1}^{\mathsf{T}}} ((m{-}1)\bm{\Pi}^{{j-1}^{-1}} 
    {-} m \alpha_j^{-1} \bm{\phi}^{j} \bm{\phi}^{j^{\mathsf{T}}}) \mathbf{\tilde \Theta}^{j-1}\} \\
    \leq & \tr \{\mathbf{\tilde \Theta}^{{j-1}^{\mathsf{T}}} (m{-}1)\bm{\Pi}^{{j-1}^{-1}} \mathbf{\tilde \Theta}^{j-1} \}
    = (m{-}1) V_{j-1},
\end{split}
\end{equation*}
\noindent
since $m \alpha_j^{-1} \bm{\phi}^{j} \bm{\phi}^{j^{\mathsf{T}}} \succeq 0$, due to its quadratic form and the fact that $m > 0$ and $\alpha_j^{-1} > 0$. It is then obvious that,
\begin{equation}
    V_j \leq m V_{j-1} \leq m^j V_0 = m^j \tr\{\mathbf{\tilde \Theta}^{0^{\mathsf{T}}} \bm{\Pi}^{0^{-1}} \mathbf{\tilde \Theta}^0\}.
    \label{theta_Lyapunov_order}
\end{equation}
\noindent
Additionally from (\ref{theta_Lyapunov}), we have,
\begin{equation*}
\begin{split}
    V_j = & \tr\{\mathbf{\tilde \Theta}^{j^{\mathsf{T}}} \bm{\Pi}^{j^{-1}} \mathbf{\tilde \Theta}^{j}\} 
    %= \|\mathbf{\tilde \Theta}^{j^{\mathsf{T}}} \bm{\Pi}^{j^{-1}} \mathbf{\tilde \Theta}^{j}\|_F^2 \\
    = \tr\{\bm{\Pi}^{j^{-1}} \mathbf{\tilde \Theta}^{j} \mathbf{\tilde \Theta}^{j^{\mathsf{T}}}\} \\
    \geq & \lambda_{min}(\bm{\Pi}^{j^{-1}}) \tr\{ \mathbf{\tilde \Theta}^{j} \mathbf{\tilde \Theta}^{j^{\mathsf{T}}}\} 
    = \lambda_{min}(\bm{\Pi}^{j^{-1}}) \tr\{ \mathbf{\tilde \Theta}^{j^{\mathsf{T}}} \mathbf{\tilde \Theta}^{j}\} \\
    = & \lambda_{min} (\bm{\Pi}^{j^{-1}}) \|\mathbf{\tilde \Theta}^{j} \|_F^2,
\end{split}
\end{equation*}
\noindent
where $\lambda_{min}(\cdot)$ is the minimum eigenvalue which can provide this lower bound for any positive semi-definite matrices $\bm{\Pi}^{j^{-1}}$ and $\mathbf{\tilde \Theta}^{j} \mathbf{\tilde \Theta}^{j^{\mathsf{T}}}$, according to \cite{fang1994inequalities}. Note that $\bm{\Pi}^{j^{-1}} \succeq 0$ from (\ref{Pi_inv_PSD}) and $\mathbf{\tilde \Theta}^{j} \mathbf{\tilde \Theta}^{j^{\mathsf{T}}} \succeq 0$ because for any $\bm{x} \in \mathbb{R}^K$ it is true that $\bm{x}^{\mathsf{T}} (\mathbf{\tilde \Theta}^{j} \mathbf{\tilde \Theta}^{j^{\mathsf{T}}}) \bm{x} = (\mathbf{\tilde \Theta}^{j^{\mathsf{T}}} \bm{x})^{\mathsf{T}} (\mathbf{\tilde \Theta}^{j^{\mathsf{T}}} \bm{x}) = \|\mathbf{\tilde \Theta}^{j^{\mathsf{T}}} \bm{x}\|_2^2 \geq 0$. Then from (\ref{Pi_inv_PSD}), we have $\forall j \geq M$,
\begin{equation*}
    V_j \geq \frac{(m^{-1}{-}1)}{m^{-(M+1)}-1} \beta_0 \| \mathbf{\tilde \Theta}^{j} \|_F^2.
\end{equation*}
\noindent
The last, along with (\ref{theta_Lyapunov_order}) and \cite{fang1994inequalities}, gives,
\begin{equation*}
\begin{split}
    \| & \mathbf{\tilde \Theta}^{j} \|_F^2 \leq \frac{m^j \tr\{\mathbf{\tilde \Theta}^{0^{\mathsf{T}}} \bm{\Pi}^{0^{-1}} \mathbf{\tilde \Theta}^0\} (m^{-(M+1)}-1)}{\beta_0 (m^{-1}-1)} \\
    & \leq \frac{m^j (m^{-(M+1)}-1)}{\beta_0 (m^{-1}-1)} \lambda_\textrm{max}(\bm{\Pi}^{0^{-1}}) \| \mathbf{\tilde \Theta}^{0} \|_F^2 = \zeta m^j \| \mathbf{\tilde \Theta}^{0} \|_F^2.
    \label{expon_conv_wo_errors}
\end{split}
\end{equation*}
\noindent
Since $\lim_{j \rightarrow \infty} \| \mathbf{\tilde \Theta}^{j} \|_F^2 {\leq} \lim_{j \rightarrow \infty} \zeta m^j \| \mathbf{\tilde \Theta}^{0} \|_F^2 {=} 0$ for $0 {<} m {\leq} 1$, then $\mathbf{\hat{\Theta}}^j {\rightarrow} \mathbf{\Theta^{\star}}$. This result was proved before in \cite{ioannou2006adaptive} for vector $\mathbf{\tilde \Theta}^{j}$, not matrix (single-output systems).\frQED

\subsubsection{Part 2: With Approximation Errors}\label{sec:part2_with_appr_errors}
\noindent
Assume that $\bm{\epsilon}_m(\mathbf{p}) \neq 0$. The dynamics (\ref{theta_tilde_dynamics}) now become,
\begin{equation}
    \mathbf{\tilde \Theta}^j = (\mathbb{I} - \bm{\Pi}^j \bm{\phi}^j \bm{\phi}^{j^{\mathsf{T}}}) \mathbf{\tilde \Theta}^{j-1} {-} \bm{\Pi}^j \bm{\phi}^j \bm{\epsilon}_m(\mathbf{p}^j_r)^{\mathsf{T}}.
    \label{theta_dynamics_with_error}
\end{equation}
\noindent
In Part 1 of Theorem \ref{thm:UUB_PE_with_wo_appr_error}, exponential convergence was shown. This result leads directly to, 
\begin{equation*}
    \|\prod_{i=k}^{j-1} (\mathbb{I} - \bm{\Pi}^i \bm{\phi}^i \bm{\phi}^{i^{\mathsf{T}}})\|_F \leq \zeta^{1/2} m^{(j-k)/2},
\end{equation*}
\noindent
$\forall j \geq M$ and $k \geq 0$. Then, (\ref{theta_dynamics_with_error}) results in,
\begin{equation*}
\begin{split}
    \| \mathbf{\tilde \Theta}^{j} & \|_F \leq  \zeta^{\frac{1}{2}} m^{\frac{j}{2}} \|\mathbf{\tilde \Theta}^0 \|_F 
    {+} \sum_{k=0}^{j-1} \zeta^{\frac{1}{2}} m^{\frac{j-k-1}{2}} \|\bm{\Pi}^k \bm{\phi}^k \bm{\epsilon}_m(\mathbf{p}^k_r)^{\mathsf{T}} \|_F \\
    \leq & \zeta^{\frac{1}{2}} m^{\frac{j}{2}} \|\mathbf{\tilde \Theta}^0 \|_F {+} \sum_{k=0}^{j-1} \zeta^{\frac{1}{2}} m^{\frac{j-k-1}{2}} \|\bm{\Pi}^k \bm{\phi}^k \|_2 \| \bm{\epsilon}_m(\mathbf{p}^k_r)^{\mathsf{T}} \|_2 \\ 
    % & {+} \sum_{k=0}^{j-1} \zeta^{1/2} m^{(j-k-1)/2} \|\bm{\Pi}^k \bm{\phi}^k \|_2 \| \bm{\epsilon}_m(\mathbf{p}^k_r)^{\mathsf{T}} \|_2\\
    \leq & \zeta^{\frac{1}{2}} m^{\frac{j}{2}} \|\mathbf{\tilde \Theta}^0 \|_2 
    {+} \zeta^{\frac{1}{2}} \sum_{k=0}^{j-1} m^{\frac{j-k-1}{2}} \|\bm{\Pi}^k \bm{\phi}^k \|_2 \epsilon_m^\mathrm{max}\\
    \leq & \zeta^{\frac{1}{2}} m^{\frac{j}{2}} \|\mathbf{\tilde \Theta}^0 \|_F {+} \zeta^{\frac{1}{2}} m^{\frac{j}{2}} \zeta_1 \epsilon_m^\mathrm{max} 
    {+} \zeta^{\frac{1}{2}} \zeta_2 \frac{m^{\frac{j}{2}}{-}1}{m^{\frac{1}{2}}{-}1} \epsilon_m^\mathrm{max}{,}
\end{split}
\end{equation*}
% \noindent
% which equals,
% \begin{equation*}
% \begin{split}
%     \|\mathbf{\tilde \Theta}^{j} \|_F \leq & \zeta^{1/2} m^{j/2} \|\mathbf{\tilde \Theta}^0 \|_F {+} \zeta^{1/2} m^{j/2} \zeta_1 \epsilon_m^\mathrm{max} \\
%     & {+} \zeta^{1/2} \zeta_2 \frac{m^{j/2}-1}{m^{1/2}-1} \epsilon_m^\mathrm{max},
% \end{split}
% \end{equation*}
% \noindent
where,
% \begin{equation*}
%     \hspace{-3.7em} \text{where, } \hspace{1.5em} \zeta_1 = m^{-1/2} \frac{m^{-M/2}-1}{m^{-1/2}-1} \sum_{k=0}^{M-1} \|\bm{\Pi}^k \bm{\phi}^k \|_2,
% \end{equation*}
\begin{equation*}
    \zeta_1 = m^{-1/2} \frac{m^{-M/2}-1}{m^{-1/2}-1} \sum_{k=0}^{M-1} \|\bm{\Pi}^k \bm{\phi}^k \|_2,
\end{equation*}
\begin{equation*}
    \zeta_2 = \frac{m^{-(M+1)}-1}{\beta_0 (m^{-1}-1)} \beta_1^{1/2}.
\end{equation*}
% \begin{equation*}
%     \zeta_1 {=} m^{-\frac{1}{2}} \frac{m^{-\frac{M}{2}}{-}1}{m^{-\frac{1}{2}}{-}1} \sum_{k{=}0}^{M{-}1} \|\bm{\Pi}^k \bm{\phi}^k \|_2 \text{, } \zeta_2 {=} \frac{m^{-(M+1)}{-}1}{\beta_0 (m^{-1}{-}1)} \beta_1^{\frac{1}{2}}.
% \end{equation*}
\noindent
The last term which includes $\zeta_2$, is proved based on $\|\bm{\Pi}^k \bm{\phi}^k \|_2 \leq \zeta_2$, which is a direct result of $\|\bm{\Pi}^{k}\|_F \leq \frac{m^{-(M+1)}-1}{\beta_0(m^{-1}-1)}$ by (\ref{Pi_inv_PSD}) and on the definition of PE $\|\bm{\phi}^k\|_2 \leq \lambda_\textrm{max}^{1/2}(\bm{\phi}^{k^{\mathsf{T}}} \bm{\phi}^k) \leq \beta_1^{1/2}$.  Now, if we take $j \rightarrow \infty$,
\begin{equation*}
    \lim_{j \rightarrow \infty} \|\mathbf{\tilde \Theta}^{j} \|_F \leq \zeta^{1/2} \frac{\zeta_2}{1-m^{1/2}} \epsilon_m^\mathrm{max} = \eta \epsilon_m^\mathrm{max}.
\end{equation*}
\noindent
Since $\zeta^{1/2} \frac{\zeta_2}{1-m^{1/2}} \epsilon_m^\mathrm{max} \geq 0$, we conclude that $\mathbf{\tilde \Theta}^j$ converges exponentially to a bounded ball that includes $\mathbf{\Theta}^{\star}$. These results agree with \cite{yang2019adaptive, vamvoudakis2010online}, where $\mathbf{\tilde \Theta}^{j}$ is a vector instead of a matrix (single-output systems). One can show, that the difference $V_j {-} V_{j-1} {<} 0$, once $\mathbf{\tilde \Theta}^j$ enters a specific set in finite time $j{<}\infty$. Then the Lyapunov Extension Theorem \cite{lewis2020neural} demonstrates that $\mathbf{\tilde \Theta}^j$ from (\ref{theta_dynamics_with_error}) and the corresponding $\bm{\hat{\Theta}}^j$, are Uniformly Ultimately Bounded (UUB). This proof is omitted for brevity, but a similar can be found in \cite{vamvoudakis2010online}.\frQED

\end{document}